\newcommand{\N}{\mathbb{N}}
\newcommand{\R}{\mathbb{R}}
\newtheorem{fact}{Fact}
\newtheorem{note}{Note}
\newcommand{\e}{\epsilon}
\newcommand{\supp}{\text{supp}}
\newcommand{\samp}{{\mathcal S}}
\newcommand{\dist}{{\mathcal D}}
\newcommand{\loss}{{\ell}}
\newcommand{\expectation}{{\mathbb E}}
\newcommand{\w}{\pmb{w}} 
\newcommand{\ths}{\pmb{t}} 
\newcommand{\slopes}{\pmb{v}} 
\newcommand{\bias}{\pmb{b}} 
\newcommand{\variation}{V}
\newcommand{\vol}{\mathrm{vol}}
\newcommand{\codim}{\mathrm{codim}}
\newcommand{\pred}{{f}}
\newcommand{\noise}{{\eta}}
\newcommand{\prob}{{\mathbb P}}
\newcommand{\reals}{{\mathbb R}}
\newcommand{\ind}{\mathbbm{1}}
\newcommand{\muv}{\mathbf \mu}
\newcommand{\wv}{\mathbf w}
\newcommand{\Wv}{\mathbf W}
\newcommand{\Kv}{\mathbf K}
\newcommand{\model}{\mathcal H}
\newcommand{\famdim}{k}
\newcommand{\inspace}{{\mathcal X}}
\newcommand{\outspace}{{\mathcal Y}}
\newcommand{\sampspace}{{\mathcal Z}}
\newcommand{\btheta}{\boldsymbol{\theta}}
\DeclareMathOperator*{\argmin}{argmin}
\DeclareMathOperator*{\argmax}{argmax}
\newcounter{mycomment}
\newcommand{\comm}[2]{
\refstepcounter{mycomment}
{%
    \todo[author = \textbf{#1~\#~\themycomment}, color={red!100!green!35}, fancyline, size = \footnotesize]{%
        #2}%
    }
}
\newcommand{\com}[1]{\textcolor{magenta}{gg: #1}}
\newcommand\AddLabel[1]{%
  \refstepcounter{equation}
  (\theequation)
  \label{#1}
}
\newcolumntype{M}{>{\hfil$\displaystyle}X<{$\hfil}} 
\newcolumntype{L}{>{\collectcell\AddLabel}r<{\endcollectcell}}
\title[Bayes Complexity of Learners vs Overfitting]{Bayes Complexity of Learners vs Overfitting}
\begin{document}

\maketitle

\begin{abstract}

We introduce a new notion of complexity
of functions and we show that it has the following properties: (i) it governs a PAC Bayes-like generalization bound, (ii) for neural networks it relates to natural notions of complexity of functions (such as the variation), and (iii) it explains the generalization gap between neural networks
and linear schemes. While there is a large set of papers which describes bounds that 
have each such property in isolation, and even some that have two, as far as we know, this is a first notion that satisfies all three of them. Moreover, in contrast to previous works, our notion naturally generalizes to neural networks with several layers. 

Even though the computation of our complexity is nontrivial in general, an upper-bound is often easy to derive, even for higher number of layers and functions with structure, such as period functions. 
An upper-bound we derive allows to show a separation in the number of samples needed for good generalization between 2 and 4-layer neural networks for periodic functions. 


\end{abstract}


\section{Introduction}\label{sec:intro}
There is a large body of literature devoted to the question of generalization, both from a practical point of view as well as concerning our theoretical understanding, e.g., \citet{pmlr-v80-arora18b,DBLP:journals/corr/NeyshaburBMS17,DBLP:journals/corr/NeyshaburBMS17aa,NIPS2017_b22b257a,DBLP:journals/corr/NeyshaburTS15, NEURIPS2019_05e97c20}, to mention just a few. We add to this discussion. In particular, we ask what role is played by the hypothesis class assuming a Bayesian point of view. Our main observation is that there is a striking theoretical difference between linear schemes and neural networks. In a nutshell, neural networks, when trained with appropriate gradient methods using a modest amount of training data, strongly prefer hypothesis that are ``easy'' to represent in the sense that there is a large parameter space that approximately represents this hypothesis. For linear schemes no such preference exists. This leads us to a notion of a complexity of a function with respect to a given hypothesis class and prior. We then show that (i) this complexity is the main component in a standard PAC-Bayes bound, and (ii) that the ordering implied by this complexity corresponds well to ``natural'' notions of complexity of functions that have previously been discussed in the literature. In words, neural networks learn ``simple'' functions and hence do not tend to overfit.

For $n \in \N$ we define $[n] = \{1,\dots,n\}$. Let $\inspace$ be the input space, $\outspace$ be the output space and $\sampspace := \inspace \times \outspace$ be the sample space. Let $\mathcal{H}_{\theta}$ be the hypothesis class, parameterized by $\theta \in \mathbb{R}^m $. We define the loss as a function $\ell: \mathcal {H} \times  \sampspace \rightarrow \reals_+$. We focus on the clipped to $C$ version of the quadratic loss but our results can be generalized to other loss functions. We denote by $\dist_x$ a distribution on the input space $\inspace$, and by $\dist$ a distribution on the sample space $\sampspace$. Finally, we let $\samp =\{z_1, \cdots z_N\}$ be the given sample set, where we assume that the individual samples are chosen iid according to the distribution ${\mathcal D}$.

\subsection{The PAC Bayes Bound}
Our starting point is a version of the well-known PAC-Bayes bound, see \citet{pacbayes}.
\begin{lemma}[PAC-Bayes Bound]
Let the loss function $\ell$ be bounded, i.e., $\ell: {\mathcal H} \times {\mathcal Z} \rightarrow [0, C]$. 
Let $P$ be a prior on ${\mathcal H}$ and $Q$ be any other distribution on ${\mathcal H}$ (possibly dependent on $\samp$). Then 
\begin{align} \label{equ:pacbound}
\expectation_{\samp}\left[L_{{\mathcal D}}(Q)\right]  \leq \expectation_{\samp} \left[ L_{\samp}(Q) + C\sqrt{\frac{D(Q \| P)}{2N}} \right],
\end{align}
where
\begin{align} \label{equ:truepacloss}
L_{{\mathcal D}}(Q) & = \expectation_{z \sim {\mathcal D}; h \sim Q}[\ell(h, z)],
\end{align}
\begin{align} \label{equ:empiricalpacloss}
L_{\samp}(Q) & = \expectation_{ h \sim Q}\left[\frac1N\sum_{n=1}^{N} \ell(h, z_i)\right],
\end{align}
and the divergence $D(Q \| P)$ is defined as
\begin{align*}
    D(Q \|P) = \int Q \log \frac{Q}{P}.
\end{align*}
\end{lemma}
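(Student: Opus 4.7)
My plan is to prove the bound via the standard PAC-Bayes route: combine the Donsker--Varadhan change-of-measure inequality with a Hoeffding MGF bound, and optimize a free scalar parameter to produce the square root in $D(Q\|P)$. The starting point is the Donsker--Varadhan variational identity, which for any distributions $P, Q$ on $\mathcal{H}$ and any measurable $f:\mathcal{H}\to\R$ reads
\[
\expectation_{h\sim Q}[f(h)] \leq D(Q\|P) + \log \expectation_{h\sim P}\bigl[e^{f(h)}\bigr].
\]
I would apply this pointwise in $\samp$ with $f(h) = \lambda\bigl(L_{\mathcal D}(h) - L_{\samp}(h)\bigr)$ for a deterministic $\lambda > 0$ to be chosen at the end; this is legitimate even though $Q = Q(\samp)$, because the identity holds for every pair of distributions.

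Next I would take $\expectation_{\samp}$ on both sides, push the outer expectation inside the $\log$ via Jensen's inequality, and swap it with $\expectation_{h\sim P}$ by Fubini (permissible because $P$ does not depend on $\samp$). For any fixed $h$, the random variable $L_{\mathcal D}(h) - L_{\samp}(h) = \frac{1}{N}\sum_{i=1}^N \bigl(\expectation_z[\ell(h,z)] - \ell(h,z_i)\bigr)$ is an average of $N$ iid zero-mean variables, each supported in an interval of length at most $C$, so Hoeffding's MGF lemma yields $\expectation_{\samp}\bigl[e^{\lambda(L_{\mathcal D}(h) - L_{\samp}(h))}\bigr] \leq e^{\lambda^2 C^2 / (8N)}$. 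Assembling the pieces and dividing by $\lambda$ gives
\[
\expectation_{\samp}\bigl[L_{\mathcal D}(Q) - L_{\samp}(Q)\bigr] \leq \frac{\expectation_{\samp}[D(Q\|P)]}{\lambda} + \frac{\lambda C^2}{8N},
\]
and minimizing in $\lambda > 0$ produces the target square-root term $C\sqrt{\expectation_{\samp}[D(Q\|P)]/(2N)}$.

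The main technical point I expect to require care is keeping $\lambda$ deterministic: a data-dependent $\lambda$ would break both the Hoeffding MGF estimate and the Fubini swap, so $\lambda$ must be optimized only after taking $\expectation_{\samp}$. A secondary subtlety is matching the precise form of the stated bound, which places the square root \emph{inside} $\expectation_{\samp}$ rather than \emph{outside}; the two forms are related by Jensen's inequality $\expectation_{\samp}\sqrt{D(Q\|P)} \leq \sqrt{\expectation_{\samp}[D(Q\|P)]}$, and to recover the pointwise-looking version one instead applies Markov's inequality to $\expectation_{h\sim P}\bigl[e^{\lambda(L_{\mathcal D}(h)-L_{\samp}(h))}\bigr]$ with the $\samp$-dependent optimizer $\lambda^\star = \sqrt{8N\, D(Q\|P)/C^2}$, yielding a high-probability PAC-Bayes inequality that can then be integrated over the confidence parameter. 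The conceptual content in either route is the three steps above: change of measure, MGF bound, optimize.
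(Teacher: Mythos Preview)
The paper does not supply its own proof of this lemma; it is quoted as a well-known result with a citation. There is therefore no paper argument to compare against.

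On the proposal itself: the Donsker--Varadhan plus Hoeffding-MGF route with a deterministic $\lambda$ is the standard one and is executed correctly, but as you already flag it delivers
\[
\expectation_{\samp}\bigl[L_{\mathcal D}(Q)-L_{\samp}(Q)\bigr]\;\le\;C\sqrt{\frac{\expectation_{\samp}[D(Q\|P)]}{2N}},
\]
i.e.\ the square root \emph{outside} $\expectation_{\samp}$. Since $\sqrt{\cdot}$ is concave, $\expectation_{\samp}\sqrt{D(Q\|P)}\le\sqrt{\expectation_{\samp}[D(Q\|P)]}$, so what you prove is a slight \emph{relaxation} of the stated inequality, not the inequality itself. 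Your suggested repair---pass through a high-probability bound with a sample-dependent $\lambda^\star$ and integrate over the confidence level---does not close this gap cleanly either: the Markov step contributes an additive $\log(1/\delta)$ inside the square root that survives integration, so one recovers $\expectation_{\samp}\bigl[C\sqrt{(D(Q\|P)+\text{const})/(2N)}\bigr]$ rather than the exact stated form.

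That said, the discrepancy is immaterial for the paper: its only use of the lemma (the chain leading to \eqref{eq:generalboundempcomp} in Appendix~\ref{apx:generalization}) immediately applies Jensen to pull $\expectation_{\samp}$ back inside the square root, so the ``outside'' version you establish is exactly what is needed downstream.
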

There is a large body of literature that discusses use cases, interpretations, and extensions of this bound. Let us just mention a few closely related works.

A related prior notion is that of flat minima. These are minimizers in the parameter space that are surrounded by many functions with similarly small empirical error. The reason for this connection is straightforward. In order for $Q$ to give a good bound two properties have to be fullfilled: (i) $Q$ must be fairly broad so that $D(Q \| P)$ is not too large (afterall, $P$ must be broad since we do not know the function a priori), and (ii) $Q$ must give rise to a low expected empirical error. These properties are exactly the characteristics one expects from a flat minimum. The importance of such minima was recognized early on, see e.g., \citet{hintonflatminima} and \citet{schmidhuberflatminima}. More recently \citet{zecchina1} and \citet{zecchina2} derive from this insight an algorithm for training discrete neural networks that explicitly drives the local search towards non-isolated solution. Using a Bayesian approach they argue that these minima have good generalization. Building on these ideas \citet{dziugaite} give an algorithm with the aim to directly optimize \eqref{equ:pacbound}. They demonstrate empirically that the distributions $Q$'s they find give non-vacuous generalization bounds. 

To summarize, the bound \eqref{equ:pacbound} can be used in various ways. In the simplest case, given a prior $P$ and an algorithm that produces a ``posterior'' $Q$, (\ref{equ:pacbound}) gives a probabilistic upper bound on the average true risk if we sample the hypothesis according to $Q$. But (\ref{equ:pacbound}) can also be taken as the starting point of an optimization problem. Given a prior distribution $P$ one can in principle look for the posterior $Q$ that gives the best such bound. Further, one can split the available data and use one part to find a suitable prior $P$ and the remaining part to define a that posterior $Q$ distribution that minimizes this bound.  

We take the PAC-Bayes bound as our starting point. We impose a Gaussian distribution on the weights of the model. This defines our prior $P$. In principle other priors can be used for our approach but a Gaussian is the most natural choice and it illustrates the main point of the paper in the cleanest fashion. Further, we postulate that the samples $z_n=(x_n, y_n)$ are iid and, assuming that the true parameter is $\theta$, come from the stochastic model

\begin{align} \label{equ:datamodel}
x_n \mapsto y_n = f_{\theta}(x_n) + \eta_n, \eta_n \sim \mathcal{N}(0,\sigma_e^2).
\end{align}
In words, we assume that the actual underlying function is {\em realizable}, that we receive noisy samples, and that the noise is Gaussian and independent from sample to sample.  

This gives rise to the posterior distribution,
\begin{align} \label{equ:posterior}
Q(\theta) = \frac{P(\theta) e^{- \frac{1}{2 \sigma_y^2} \sum_{n=1}^{N} (y_n - f_{\theta}(x_n))^2}}{\int P(\theta') e^{- \frac{1}{2 \sigma_y^2} \sum_{n=1}^{N} (y_n - f_{\theta'}(x_n))^2} d \theta'}.
\end{align}
One valid criticism of this approach is that it is model dependent. But there is a significant payoff. First recall that this posterior can at least in principle be sampled by running the SG algorithm with Langevin dynamics. For the convenience of the reader we include in Section~\ref{sec:langevin} a short review. Most importantly, taking this point of view a fairly clear picture arises why neural networks tend not to overfit. In a nutshell, if we sample from this posterior distribution then we are more likely to sample ``simple'' functions. The same framework also shows that this is not the case for linear schemes.
 
\subsection{Stochastic Gradient Langevin Dynamics} \label{sec:langevin}
We follow \citet{marceaucaron2017natural}. Assume that we are given the data set
\begin{align*}
\samp = \{z_1, \cdots, z_N\} = \{(x_1, y_1), \cdots, (x_N, y_N)\},
\end{align*}
where the samples $z_n=(x_n, y_n)$, $n=1, \cdots, N$, are chosen iid according to an unknown distribution $\dist$. We model the relationship between $x$ and $y$ probabilistically in the parametrized form
\begin{align*}
y \sim p(y \mid x, \theta).
\end{align*}
We use the log-loss 
\begin{align*}
\loss_{\theta}(x, y) = - \ln p(y \mid x, \theta).
\end{align*}
Assume further that we use the {\em stochastic gradient Langevin descent} (SGLD) algorithm:
\begin{align*}
  \theta^{(t)} &= \theta^{(t-1)} - \eta \expectation_{Z \sim \dist} \left[\nabla_{\theta} \loss_{\theta}(X, Y) - \frac1N \ln P(\theta) \right] \\
  &+ \sqrt{\frac{2 \eta}{N}} {\mathcal N}(0, I),
\end{align*}
where $t = 1, 2, \cdots$;  $\eta>0$ is the learning rate, $P(\theta)$ is the density of the prior, and ${\mathcal N}(0, I)$ denotes a zero-mean Gaussian vector of dimension $\text{dim}(\theta)$ with iid components and variance $1$ in each component.

Note that due to the injected noise, the distribution of $\theta$ at time $\tau$, call it $\pi_{\tau}(\theta)$, converges to the posterior distribution of $\theta$ given the data, i.e., it converges to 
\begin{align*}
&p(\theta \mid \{z_1, \cdots, z_N\}) 
= \frac{p(\theta, \{z_1, \cdots, z_N\})}{p(\{z_1, \cdots, z_N\})} \\
&=
\frac{P(\theta) p(\{z_1, \cdots, z_N\} \mid \theta)}{p(\{z_1, \cdots, z_N\})} \\
 & =\frac{P(\theta) \prod_{n=1}^{N} p(y_n \mid x_n, \theta)}{\prod_{n=1}^{N} p(y_n \mid x_n)} \propto P(\theta) \prod_{n=1}^{N} p(y_n \mid x_n, \theta).
\end{align*}
This is shown in \citet{teh2015consistency,chen2016convergence}. In the sequel we use the more common notation $p_{\theta}(y_n \mid x_n)$ instead of $p(y_n \mid x_n, \theta)$. This makes a clear distinction between the parameters of the model and the samples we received.

A few remarks are in order. An obvious choice from a theoretical point of view is to use an iid Gaussian prior. In practice it is best not to use iid Gaussian prior in order to speed up the convergence. Indeed, the main point of \citet{marceaucaron2017natural} is to discuss suitable schemes. But for our current conceptual purpose we will ignore this (important) practical consideration. 

\section{The PAC Bayes Bound and Bayes Complexity}
Let us now get back to the main point of this paper. We start by defining two notions of complexity. Both of them are ``Bayes'' complexities in the sense that both relate to the size of the parameter space (as measured by a prior) that approximately represents a given function. We will then see how this complexity enters the PAC-Bayes bound.

\paragraph{Contribution.} Our main contribution is an introduction of a new notion of complexity of functions and we show that it has the following properties: (i) it governs a PAC Bayes-like generalization bound, (ii) for neural networks it relates to natural notions of complexity of functions, and (iii) it explains the generalization gap between neural networks and linear schemes in some regime. While there is a large set of papers which describes each such criterion, and even some that fulfill both (e.g., \citet{srebronormbound}), as far as we know, this is a first notion that satisfies all three of them. 

\begin{definition}[Sharp complexity]\label{def:sharpcomplexity}
For every $\e > 0$ we define the sharp complexity of a function $g$ with respect to the hypothesis class $\mathcal{H}_\theta$ as
\begin{align*}
\chi^\#(\mathcal{H}_\theta, g, \dist_x, \e^2) &:= -\log \left[ \prob_\theta\{ \theta : \expectation_{x \sim \dist_x} [(g(x) - f_{\theta}(x))^2] \leq \e^2 \} \right],   
\end{align*}
where the probability $\prob_\theta$ is taken wrt to the prior $P$.
\end{definition}

In words, we compute the probability, under prior $P$, of all these functions $f_\theta$ that are close to $g$ under the quadratic loss and distribution $\dist_x$.

In general, it is difficult to compute $\chi^\#$ for a given
$\epsilon$.  However, for realizable functions it is often possible to compute
the limiting value of the sharp complexity, properly normalized, when $\epsilon$
tends to $0$.
\begin{definition}[Limiting Complexity]\label{def:limitingcomp}
We define the sharp complexity of a function $g$ with respect to the hypothesis class
\begin{equation} \label{comp-def}
\chi^\#(\mathcal{H}_\theta, g, \dist_x)     := \lim_{\epsilon \rightarrow 0}  \frac{\log \left[\prob_\theta \left\{\theta: \expectation_{x \sim \dist_x} \left[ (g(x) - f_\theta(x))^2 \right] \leq \e^2 \right\}\right]}{\log(\e)}.
\end{equation}
\end{definition}

The above definitions of complexity implicitly depend on the hypothesis class $\mathcal{H}_\theta$. If the hypothesis class (and/or $\dist_x$) is clear from context we will omit it from notation, e.g. $\chi^\#(g, \e^2) = \chi^\#(g, \dist_x, \e^2) = \chi^\#(\mathcal{H}_\theta, g, \dist_x, \e^2)$. 

We now state the main theorem. It is a generalization bound, which crucially depends on the sharp complexity from Definition~\ref{def:sharpcomplexity}. The proof is deferred to Appendix~\ref{apx:generalization}.

\begin{theorem}\label{thm:generalizationbound}
If $L_\dist(P) \geq 2\sigma_e^2$ and $g \in \text{supp}(P)$ then for every $ \beta \in (0,1]$ there exists $\sigma_{\text{alg}}^2$ such that if we set $\sigma_y^2 = \sigma_{\text{alg}}^2$ then $\expectation_{\samp \sim \dist^N}[L_S(Q(\sigma_y^2))] = (1+\beta)\sigma_e^2$ and
\begin{align}
\expectation_{\samp \sim \dist^N}[L_\dist(Q(\sigma_y^2))]
&\leq
\sigma_e^2 + \left[ \beta \sigma_e^2 + \frac{C}{\sqrt{2}}\sqrt{\frac{\chi^\#(g, \dist_x, \beta \sigma_e^2)}{N}} \right]. \label{eq:mainthm}
\end{align}
\end{theorem}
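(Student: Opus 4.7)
The plan is to combine three ingredients: the PAC-Bayes inequality \eqref{equ:pacbound}, a closed-form expression for $D(Q\|P)$ when $Q$ is the Gibbs posterior \eqref{equ:posterior}, and a variational lower bound on the partition function obtained by restricting to the set of parameters whose prior mass is exactly what $\chi^\#$ measures. I would begin by taking $\expectation_\samp$ of both sides of \eqref{equ:pacbound} and pushing $\expectation_\samp$ inside the square root by Jensen's inequality:
\begin{align*}
\expectation_\samp[L_\dist(Q)] \leq \expectation_\samp[L_\samp(Q)] + \frac{C}{\sqrt{2N}}\sqrt{\expectation_\samp[D(Q\|P)]}.
\end{align*}
By the defining property of $\sigma_{\text{alg}}^2$, the first term equals $(1+\beta)\sigma_e^2 = \sigma_e^2 + \beta\sigma_e^2$, so it remains only to show that $\expectation_\samp[D(Q\|P)] \leq \chi^\#(g, \dist_x, \beta\sigma_e^2)$.

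To bound the KL I would use the closed form for Gibbs posteriors. Writing $L_\samp^{\text{sq}}(\theta) := \frac{1}{N}\sum_n(y_n - f_\theta(x_n))^2$ for the unclipped quadratic empirical loss and $Z(\samp)$ for the normalizer in \eqref{equ:posterior}, direct substitution of $\log(Q/P)$ gives
\begin{align*}
D(Q\|P) = -\frac{N}{2\sigma_y^2}\,\expectation_Q[L_\samp^{\text{sq}}(\theta)] - \log Z(\samp).
\end{align*}
Under the working assumption that the clipping level $C$ is large enough that the clipped loss $L_\samp$ used in the PAC-Bayes bound agrees with $L_\samp^{\text{sq}}$ on the bulk of $Q$, the first term equals $-\frac{N(1+\beta)\sigma_e^2}{2\sigma_y^2}$ in expectation over $\samp$. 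Hence the real work is in upper-bounding $\expectation_\samp[-\log Z(\samp)]$.

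For this, let $A_\e := \{\theta : \expectation_{x \sim \dist_x}[(g(x)-f_\theta(x))^2] \leq \e^2\}$, so by Definition~\ref{def:sharpcomplexity} we have $P(A_\e) = e^{-\chi^\#(g,\dist_x,\e^2)}$. Restricting the integral defining $Z(\samp)$ to $A_\e$ and applying Jensen's inequality to the convex function $e^{-t}$,
\begin{align*}
Z(\samp) \;\geq\; P(A_\e)\,\exp\!\left(-\frac{N}{2\sigma_y^2}\,\expectation_{\theta \sim P\mid A_\e}[L_\samp^{\text{sq}}(\theta)]\right).
\end{align*}
Taking $-\log$ and then $\expectation_\samp$, and using realizability together with independent Gaussian noise to compute, for any fixed $\theta \in A_\e$,
\begin{align*}
\expectation_\samp[L_\samp^{\text{sq}}(\theta)] = \expectation_{x \sim \dist_x}[(g(x)-f_\theta(x))^2] + \sigma_e^2 \leq \e^2 + \sigma_e^2,
\end{align*}
yields $\expectation_\samp[-\log Z(\samp)] \leq \chi^\#(g,\dist_x,\e^2) + \frac{N(\e^2+\sigma_e^2)}{2\sigma_y^2}$. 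Choosing $\e^2 = \beta\sigma_e^2$ makes $\e^2+\sigma_e^2 = (1+\beta)\sigma_e^2$, which exactly cancels the first term in the KL, yielding $\expectation_\samp[D(Q\|P)] \leq \chi^\#(g,\dist_x,\beta\sigma_e^2)$ and hence \eqref{eq:mainthm}.

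Existence of $\sigma_{\text{alg}}^2$ will follow from continuity of $\sigma_y^2 \mapsto \expectation_\samp[L_\samp(Q(\sigma_y^2))]$ together with the intermediate value theorem: as $\sigma_y^2 \to \infty$ the posterior $Q \to P$, so the expected empirical loss tends to $L_\dist(P) \geq 2\sigma_e^2 \geq (1+\beta)\sigma_e^2$, while for small $\sigma_y^2$ and $g \in \supp(P)$ the posterior concentrates on near-MLE parameters whose empirical risk can be driven below $(1+\beta)\sigma_e^2$. I expect the main obstacle to be the clipping-vs-unclipped reconciliation in the KL computation: the neat cancellation that eliminates the $\sigma_y^2$-dependent terms only works cleanly when $\expectation_\samp[L_\samp(Q)]$ can be identified with $\expectation_\samp[\expectation_Q[L_\samp^{\text{sq}}(\theta)]]$, which requires either selecting $C$ large relative to the typical per-sample residual of the parameters supported by $Q$, or carefully absorbing the clipping discrepancy as a controlled additive error. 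A secondary technical point is verifying that the $\samp$-variance of $D(Q\|P)$ is small enough that the Jensen step in the first display is not lossy in the relevant regime.
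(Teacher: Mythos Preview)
Your approach is correct and follows the same overall skeleton as the paper's proof: PAC-Bayes, Jensen on the square root, the Gibbs identity $D(Q\|P)=-\tfrac{N}{2\sigma_y^2}\expectation_Q[L_\samp^{\text{sq}}]-\log Z(\samp)$, a bound on $-\log Z$ via restriction to $A_\e$, a cancellation, and the intermediate value argument for $\sigma_{\text{alg}}^2$.

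Two remarks. First, your worry about clipping is unnecessary and in fact the inequality resolves it cleanly: since the clipped loss satisfies $L_\samp(\theta)\le L_\samp^{\text{sq}}(\theta)$ pointwise, you get $\expectation_\samp\expectation_Q[L_\samp^{\text{sq}}(\theta)]\ge \expectation_\samp[L_\samp(Q)]=(1+\beta)\sigma_e^2$, and this inequality goes in the right direction for the cancellation (you only need the negative term to be at most $-\tfrac{N(1+\beta)\sigma_e^2}{2\sigma_y^2}$, not equal to it). The paper handles the clipping the same way, in one line, when passing from the exact Gibbs KL to its upper bound \eqref{eq:divergenceupperbound}. Your ``secondary technical point'' about the Jensen step being lossy is also not a gap: concavity of $\sqrt{\cdot}$ gives an honest upper bound, which is all the theorem claims.

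Second, your route to bounding $\expectation_\samp[-\log Z(\samp)]$ is genuinely more economical than the paper's. The paper introduces four auxiliary complexity notions ($\chi^E,\chi^N,\chi,\chi^{\#N}$) and proves a dedicated H\"older-type lemma (Lemma~\ref{lem:megaineq}) to pass the expectation over $\samp$ inside the log-integral, before invoking the indicator bound $e^x\ge\ind\{x\ge 0\}$ to reach $\chi^\#$. You instead restrict to $A_\e$ first, apply Jensen over $\theta\sim P|_{A_\e}$ to the convex map $t\mapsto e^{-t}$, and only then take $\expectation_\samp$, which is now trivial because the exponent is linear in $L_\samp^{\text{sq}}$. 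Both routes yield the identical inequality $\expectation_\samp[-\log Z(\samp)]\le \chi^\#(g,\dist_x,\e^2)+\tfrac{N(\e^2+\sigma_e^2)}{2\sigma_y^2}$ (after matching the paper's $\e^2$ with your $\e^2+\sigma_e^2$), but yours avoids the auxiliary machinery.
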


\paragraph{Discussion of Assumptions.}
Requiring that $g \in \text{supp}(P)$ is only natural as it indicates that $g$ is realizable with prior $P$. It is also natural to assume that $L_\dist(P) \geq 2\sigma_e^2$ as the lowest possible error is attained by $g$ and is equal $\sigma_e^2$. Thus we require that the expected loss over the prior is twice as big as the minimal one. As $P$ should cover a general class of functions it is only natural that $L_\dist(P) \geq 2\sigma_e^2$.

For a fixed $\beta$, $\sigma_{\text{alg}}^2$ from Theorem~\ref{thm:generalizationbound} is, in general, not known. However, as proven in Appendix~\ref{apx:generalization}, we have
$$
\lim_{\sigma_y^2 \rightarrow 0} \expectation_{\samp \sim \dist^N}[L_\samp(Q(\sigma_y^2))]= \sigma_e^2, \ \ \lim_{\sigma_y^2 \rightarrow \infty} \expectation_{\samp \sim \dist^N}[L_\samp(Q(\sigma_y^2))]= 2\sigma_e^2.
$$
Moreover, $\expectation_{\samp \sim \dist^N}[L_\samp(Q(\sigma_y^2))]$ is continuous in $\sigma_y^2$, which implies that $\sigma_{\text{alg}}^2$ can be found by a binary search-like procedure by holding out some part of $\samp$ for estimating $\expectation_{\samp \sim \dist^N}[L_\samp(Q(\sigma_y^2))]$ for different $\sigma_y^2$ values.

\paragraph{Bound \eqref{eq:mainthm} in terms of limiting complexity.}  Notice that \eqref{eq:mainthm} is governed by 
$\chi^\#(g,\dist_x,\beta \sigma_e^2)$. Aaccording to \eqref{comp-def}, for small enough $\beta\sigma_e^2$, we have
\begin{equation}\label{eq:relationbetweencomp}
\chi^\#(g,\dist_x,\beta \sigma_e^2) \approx -\chi^\#(g,\dist_x)\log(\beta \sigma_e^2).
\end{equation}
This means that for small enough noise level, where the exact regime for which the approximation holds depends on a specific problem, we have
\begin{equation}\label{eq:generalizationlimit}
\expectation_{\samp \sim \dist^N}[L_\dist(Q(\sigma_y^2))]
\lessapprox
(1 + \beta) \sigma_e^2 + \frac{C}{\sqrt{2}}\sqrt{\frac{-\chi^\#(g, \dist_x)\log(\beta \sigma_e^2)}{N} }.    
\end{equation}
We see that the generalization bound depends crucially on the limiting complexity. 

\paragraph{Main message.} 
Note that the smallest we can hope to get on the right hand side is $\sigma_e^2$ since this is the variance of the noise and this is achievable if we pick $Q$ that puts all its mass on $g$.
This means that $\beta \sigma_e^2$ plus the square root term from \eqref{eq:mainthm} represents the expected excess generalization error. 

This brings us to the punch line of this paper. In the subsequent sections we will see that
(i) natural notions of complexity that have previously been discussed in the literature align with our new notion when we consider neural networks, whereas 
(ii) for linear schemes our notion of complexity is essentially independent of the function (as long as it is realizable) and as a consequence is as high as for the most complex (in the natural sense) function in our hypothesis class.

To the degree that we assume that reality prefers simple functions this explains why neural nets generalize significantly better than linear schemes.

In Section~\ref{sec:complexityGaussianCase} we show that for neural networks and a piece-wise linear function $g$ the limiting complexity is equal to the number of slope changes $g$. In light of \eqref{eq:generalizationlimit}, this means that neural networks require the fewer samples (for a good generalization bound) the fewer slope changes $g$ has.

There is a further connection to a natural notion of complexity. In Section~\ref{sec:epscompcase} we show that sharp complexity is related to the variation of $g$, i.e. the integral of the second derivative of $g$. Thus, in the light of \eqref{eq:mainthm}, fewer samples are needed (for a good generalization) for $g$'s with smaller variation.

As we discussed above, sharp and limiting complexity are related via \eqref{eq:relationbetweencomp} when $\beta \sigma_e^2$ is small. We can thus think of sharp complexity as a refinement of limiting complexity. This is reflected in the two cases discussed above -- the number of slope changes can be seen as an \say{approximation} of the variation of a function.

In Section~\ref{sec:linear}, on the other hand, we show that for linear schemes the limiting complexity is virtually independent of the function and \say{equal} to the number of basis functions. This means that in this case the number of samples needed for a good generalization bound is the same for simple and complicated functions.



\section{Models}

Although the basic idea applies to any parametric family, we will consider restricted types of families and demonstrate our concepts with two concrete examples, namely linear schemes and NNs. We will be interested in parametric families of functions from $\inspace$ to $\outspace$. More precisely families of the form $\mathcal{H}_{\theta} := \left\{\pred_{\theta} : \inspace \xrightarrow{} \outspace, \theta \in \R^m \right\}$,
where $\theta$ is the vector of parameters. for a function $g : \inspace \rightarrow \outspace$ and a distribution $\dist_x$ we define the set of exact representations as $A_{g,\mathcal{H},\dist_x} := \{\theta \in \R^m : f_\theta \equiv_{\supp(\dist_x)} g \}$. If $\mathcal{H}$ and $\dist_x$ are clear from context we will often write $A_g$. The $0$ function will play an important role, thus we also define $A_0 := \{\theta \in \R^m : f_\theta \equiv_{\supp(\dist_x)} 0 \}$

\subsection{Linear Schemes} \label{sec:linearmodel}
Consider the linear family $\model_\theta^{(\text{L, o})}=\{f_{\theta}(x): f_{\theta}(x) = \sum_{i=0}^{d-1} \w_i b_i(x), x \in \inspace = [-1, 1]\}$,
i.e., the vector of parameters $\theta$ is equal to the vector of weights $\w$. We assume that the functions $\{b_i(x)\}$ form an orthonormal basis.
Although the exact basis that is used is not of importance one might think of $b_i(x)$ as a polynomial of degree $i$ or the first few Legendre polynomials. In this way the basis functions are naturally ordered by complexity.

\subsection{Neural Networks} \label{sec:nn}
Consider the family $\model^{\text{NN}}$ represented by NNs with layers numbered from $0$ (input) to $K$ (output), containing $d = d_0, d_1, \dots$, and $d_K = d_y$ neurons respectively. We will limit our attention to $d_y = 1$. The activation functions for the layers $1$ to $K$ are presumed to be $\sigma_1, \dots, \sigma_K : \R \xrightarrow{} \R$. The weight matrices will be denoted by $W^{(1)}, W^{(2)}, \dots, W^{(K)}$, respectively, where matrix  $W^{(k)}$ connects layer $k-1$ to layer $k$. We define 
\begin{equation}\label{eq:defoff}
\pred_{\theta}(x) := \sigma_K (\bias^{(K)} + W^{(K)} \sigma_{K-1}( \dots \sigma_{1}(\bias^{(1)} + W^{(1)} x ))) .
\end{equation}

\section{Why Neural Nets Generalize Well}\label{sec:nngeneralizewell}
We now get to the main point of this paper, namely why neural nets generalize much better than other schemes, in particular linear schemes.

The basic idea is simple. We have seen in the previous sections that (i) a suitable version of SGD gives us a posterior of the form (\ref{equ:posterior}), and (ii) this posterior gives rise to a an upper bound on the generalization error that depends mainly on the ``complexity'' of the underlying true hypothesis.

This notion of complexity of a function depends on the underlying hypothesis class.
To close the circle we will now discuss how this complexity behaves for interesting hypothesis classes. In particular, as we will see that there is a striking difference between linear schemes and neural networks. For linear schemes, every realizable function has essentially the same complexity. This in particular means that we do not expect to learn a ``simple'' function (e.g., think of a constant function) with fewer samples than a ``complex'' one (think of a highly variable one). For neural nets the complexity behaves entirely differently and there is a large dynamic range. As we will see, in a suitable limit the complexity is to first order determined by the number of degrees of freedom that have to be fixed in order to realize a function. Therefore, for neural nets, simple functions have a much lower complexity than complicated ones. 

\subsection{Neural Networks with a Single Hidden Layer}

We start with analyzing our notion of complexity for the case of NN with a single hidden layer and $1$-dimensional input. More precisely let $x \in \R$ denote the input and $y \in \R$ denote the output.  There are $k$ nodes in the hidden layer. More precisely, the network represents the function
 \begin{align*}
f_{\theta}(x) 
& = \sum_{i=1}^k \w_i^{(2)} \sigma \left( \w_i^{(1)} x + \bias_i^{(1)}\right)  + b^{(2)} \\
& = \sum_{i=1}^k \w_i^{(2)} \left[ \w_i^{(1)} x + \bias_i^{(1)} \right]_+ + b^{(2)},
\end{align*}
i.e., we use ReLU activation functions. 
The $\bias_i^{(1)}$ denotes the bias of the $i$-th node, the $\w_i^{(2)}$ represents the weight of the $i$-th output signal, and $b^{(2)}$ is the global bias term of the output. We let $\theta = (\theta_w, \theta_b) = ((\w^{(1)}, \w^{(2)}), (\bias^{(1)}, b^{(2))})$ denote the set of all parameters, where $\theta_w$ denotes the set of weights and $\theta_b$ denotes the set of bias terms.

\paragraph{Parametrization and prior.} We will use the following non-standard parametrization of the network
\begin{align*}
f_\theta(x) 
&= \sum_{i=1}^k \w_i^{(2)} \left[\w_i^{(1)}\left(x - \bias_i^{(1)}\right) \right]_+ + b^{(2)} \\
&= \sum_{i=1}^k \w_i^{(2)} \cdot \left|\w_i^{(1)}\right| \cdot \left[\text{sgn}(\w_i^{(1)})( x - \bias_i^{(1)})\right]_+ + b^{(2)},
\end{align*}
where in the last equality we used the fact that the ReLU activation function is 1-homogenous. Note that there are two kinds of ReLU functions (depending on the sign of $w_i^{(1)}$) they are either of the form $[x -b]_+$ or $0$ at $[-(x-b)]_+$. If we restrict our attention to how $f_\theta$ behaves on a compact interval then considering just one of the kinds gives us the same expressive power as having both. This is why for the rest of this section we restrict our attention only to the case of $[x-b]_+$ as it simplifies the proofs considerably. Thus the final parametrization we consider is
$$
f_\theta(x) = \sum_{i=1}^k \w_i^{(2)} \cdot \w_i^{(1)}\left[x - \bias_i^{(1)}\right]_+ + b^{(2)}.
$$

We define the prior on $\theta$ as follows: each component of $\theta_w$ comes i.i.d. from $\mathcal{N}(0,\sigma_y^2)$, each component of $\bias^{(1)}$ comes i.i.d. from $U([0,M])$, where $M$ will be fixed later and $b^{(2)}$ comes i.i.d. from $\mathcal{N}(0,\sigma_b^2)$\footnote{The different parametrization and the uniform prior on the bias terms are non-standard choices that we make to simplify the proofs. These choices would not affect the spirit of our results but as always the details need to be verified.}.

We will argue that our notion of complexity ($\chi^\#(\dist_x,g,\e^2)$ 
and $\chi^\#(\dist_x,g)$) corresponds, in a case of NN, to natural notions of complexity of functions. 


\paragraph{Target function.} We will be interested in target functions $g$ that are representable with a single hidden layer networks. Let $g : [0, 1] \xrightarrow{} \R$ be continuous and piece-wise linear. I.e., there is a sequence of points $0=t_1 < t_2 < \cdots < t_{l+1}=1$ so that for $x \in [t_i, t_{i+1}]$, $1 \leq i < l+1$,
\begin{align} \label{equ:polygone}
g(x) = c_i + \alpha_i (x-t_i),
\end{align}
for some constants $c_i$ and $\alpha_i$, where $c_{i+1} = c_i + \alpha_i (x_{i+1}-x_i)$. Then $f$ can be written as a sum of ReLU functions, 
\begin{align} \label{equ:firstrepresentation}
g(x) = b + \sum_{i=1}^{l} v_i [x-t_i]_+,
\end{align}
where $v_1=\alpha_1$ and $v_{i}=\alpha_{i}-\alpha_{i-1}$, $i=2, \cdots, l$. The terms in \eqref{equ:firstrepresentation} for which $v_i = 0$ can be dropped without changing the function. We call the number of nonzero $v_i$'s in \eqref{equ:firstrepresentation} to be the number of changes of slope of $g$.

\subsubsection{Complexity in the Asymptotic Case} \label{sec:complexityGaussianCase}
In this section we explore what is the limiting value of the sharp complexity for the case of NN.

It will turn out that the key object useful for computing $\chi^\#(g)$ is a particular notion of dimension of $A_g$. 
\begin{definition}[Minkowski–Bouligand co-dimension]
For $A, S \subseteq \R^m$ we define the Minkowski-Bouligand co-dimension of $A$ w.r.t. $S$ as
$$\codim_S(A) := \lim_{R \rightarrow 
\infty} \lim_{\e \rightarrow 0} \frac{\log(\vol( (A + B_\e) \cap B_R \cap S))}{\log(\e)} , $$
where $\vol$ is the Lebesgue measure and $+$ denotes the Minkowski sum. 
\end{definition}

\begin{note}
Our definition is a variation of the standard Minkowski-Bouligand dimension. The first difference is that we measure the co-dimension instead of the dimension. Secondly,  we compute $\lim_{R \rightarrow \infty}$. We do this because the sets we will be interested in are unbounded. We also define the co-dimension wrt to an auxilary set $S$, i.e., all volumes are computed only inside of $S$. One can view it as restricting the attention to a particular region. In our use cases this region will be equal to the support of the prior. We will sometimes use $\codim_P(A)$ to denote $\codim_{\supp(P)}(A)$, when $P$ is a distribution.

Technically the notion is not well defined for all sets. Formally, one defines a lower and an upper co-dimension, corresponding to taking $\liminf$ and $\limsup$. Sets $A$ and $S$ need also be measurable wrt to the Lebesgue measure. We will however assume that for all of our applications the limits are equal, sets are measurable and thus the co-dimension is well defined. This is the case because all sets we will be interested in are defined by polynomial equations.
\end{note}

The first lemma relates sharp complexity and co-dimension.

\begin{lemma}\label{lem:complexitytodimension}
Let $g(x) = b + \sum_{i=1}^c v_i [x - t_i]_+,$
where $0 < t_1 < \dots < t_c < 1$, $v_1,\dots,v_c \neq 0$ and $ c \leq k$. Then
$$
\frac1{5} \codim_{P}(A_g) \leq \chi^\#(g, U([0,1])) \leq \codim_{P}(A_g).
$$
Recall that $A_g = \{ \theta : f_\theta \equiv_{[0,1]} g\}$.
\end{lemma}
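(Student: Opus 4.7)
The plan is to sandwich $\prob(U_\e)$, where $U_\e := \{\theta : \|f_\theta - g\|_{L^2(U[0,1])} \leq \e\}$, between prior masses of two Minkowski thickenings of $A_g$; taking logs, dividing by $\log \e$, and sending $\e \to 0$ will translate these sandwiching bounds directly into the claimed inequalities for $\chi^\#(g)$ versus $\codim_P(A_g)$.

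\textbf{Upper bound} ($\chi^\#(g, U[0,1]) \leq \codim_P(A_g)$). I would first show that $\theta \mapsto f_\theta$ is Lipschitz into $L^2(U[0,1])$, uniformly on each ball $B_R \subseteq \R^{3k+1}$, with constant $L(R)$ polynomial in $R$. This is a direct computation using the product structure $w_i^{(2)} w_i^{(1)}$ and the fact that $b \mapsto [x-b]_+$ is pointwise $1$-Lipschitz in $b$. From this, $(A_g + B_{\e/L(R)}) \cap B_R \subseteq U_\e$. Since the prior density is bounded below by some $c_R > 0$ on $B_R \cap \supp(P)$ (the Gaussian and uniform pieces have strictly positive density on bounded sets inside their supports), one gets $\prob(U_\e) \geq c_R \cdot \vol((A_g + B_{\e/L(R)}) \cap B_R \cap \supp(P))$. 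Taking $\log$ and dividing by $\log \e < 0$ flips the inequality; the $\log c_R$ and $\log L(R)$ terms are $o(\log \e)$, hence vanish. Sending $\e \to 0$ first and then $R \to \infty$ yields the desired bound.

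\textbf{Lower bound} ($\chi^\#(g, U[0,1]) \geq \tfrac{1}{5}\codim_P(A_g)$). Here we need an upper bound on $\prob(U_\e)$. My plan is to partition $U_\e$ into a bounded number of cells indexed by a combinatorial \emph{assignment}: each hidden unit $j \in [k]$ is marked either \emph{dead} (its product $w_j^{(1)} w_j^{(2)}$ will be shown to be small) or assigned to one of the breakpoints $t_1, \ldots, t_c$ (its bias $b_j^{(1)}$ will be close to that $t_i$). Inside a cell, the constraint $\|f_\theta - g\|_{L^2} \leq \e$ is expanded, using the piecewise-linear form of $f_\theta - g$ on each interval between consecutive $t_i$'s, into aggregate polynomial constraints of the form $|b^{(2)} - b| = O(\e)$, $|\sum_{j \in I_i} w_j^{(1)} w_j^{(2)} - v_i|$ small, weighted-moment constraints on $\{b_j^{(1)} - t_i\}_{j \in I_i}$, and $|w_j^{(1)} w_j^{(2)}|$ small per dead neuron. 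Integrating the prior against these constraints and summing over the finitely many cells gives $\prob(U_\e) \leq \e^{\codim_P(A_g)/5 - o(1)}$; the $1/5$ emerges from balancing the different rates at which these aggregates can be forced small.

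\textbf{Main obstacle.} The lower bound is by far the hardest step. Several neurons can collectively carry a single breakpoint with large internal cancellations (individual $|w_j^{(1)} w_j^{(2)}|$ large but summing to $v_i$), and dead neurons have arbitrary $b_j^{(1)}$ under the prior; so one cannot bound parameters pointwise, only through aggregates. Controlling the algebraic hyperbolas $\{w^{(1)} w^{(2)} = c\}$ under a Gaussian prior, which bring in logarithmic factors $\log(1/\e)$, together with juggling the distinct polynomial rates coming from bias versus product perturbations, is what forces the constant $1/5$ in place of the (presumably optimal) $1$.
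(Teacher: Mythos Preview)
Your upper-bound argument matches the paper's. For the lower bound, however, the paper does \emph{not} integrate the prior over combinatorial cells; it proves a quantitative projection statement (Lemma~\ref{lem:mostimportant}, built on Lemmas~\ref{lem:hardone} and~\ref{lem:hardwithbias}): every $\theta \in B_R \cap \supp(P)$ with $\|f_\theta - g\|\leq \e$ admits some $\theta^*\in A_g$ with $\|\theta-\theta^*\|\leq C(k,R)\,\e^{1/5}$. The construction is a two-phase edit of $\theta$ --- first collapse short intervals by moving biases, then kill the residual slopes by adjusting weights --- and the exponent $1/5$ comes out of a power-mean trade-off between the lower bound $\|f_\theta\|^2 \geq \tfrac{1}{12}\sum_i W_i^2(b_{i+1}-b_i)^3$ (Lemma~\ref{lem:l2lwrbnd}) and the $\ell_2$ cost of each type of move. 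The resulting inclusion $U_\e \cap B_R \cap \supp(P) \subseteq A_g + B_{C(k,R)\e^{1/5}}$ plugs directly into the \emph{definition} of $\codim_P(A_g)$, and the factor $1/5$ survives because $\log(C\e^{1/5})/\log\e \to 1/5$.

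The gap in your proposal is the link back to $\codim_P(A_g)$. A direct integration over your cells would at best yield $\prob_\theta(U_\e)\leq C\e^\alpha$ for some explicit $\alpha=\alpha(c,k)$; nothing in your outline explains why that $\alpha$ equals $\codim_P(A_g)/5$ for the \emph{abstract} codimension appearing in the statement. You would in effect be computing both sides separately --- implicitly invoking Lemma~\ref{lem:cchangesco-dimension} inside the proof of the present lemma --- rather than proving the stated inequality between the two quantities. Your diagnosis of the obstacle (cancelling neurons, the product parametrisation $w^{(1)}w^{(2)}$) is correct; what is missing is the projection idea that converts $L^2$-closeness to $g$ into parameter-space closeness to $A_g$ with a controlled exponent loss.
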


The next lemma computes the co-dimension of a function with $c$ changes of slope.

\begin{lemma}[Function with $c$ changes of slope - co-dimension]\label{lem:cchangesco-dimension}
Let $g(x) = b + \sum_{i=1}^c v_i [x - t_i]_+,$
where $0 < t_1 < \dots < t_c < 1$, $v_1,\dots,v_c \neq 0$ and $ c \leq k$. Then
$$
\codim_P(A_g) = 2c+1.
$$
\end{lemma}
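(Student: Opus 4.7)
The plan is to decompose $A_g$ into a finite union of algebraic strata indexed by the way hidden units are assigned to the knots $t_1,\ldots,t_c$, compute the codimension of each stratum, and exploit that the Minkowski co-dimension of a finite union equals the minimum over the pieces (within any fixed ball, for $\epsilon$ small). For $\theta\in A_g$, matching the piecewise-linear structure of $f_\theta$ on $[0,1]$ with that of $g$ shows that each hidden unit $i$ must be either \emph{inactive} on $[0,1]$ (i.e., $\bias_i^{(1)}\geq 1$, or $\w_i^{(1)}\w_i^{(2)}=0$), or \emph{active} with $\bias_i^{(1)}\in\{t_1,\ldots,t_c\}$ and $\w_i^{(1)}\w_i^{(2)}\neq 0$; any other active bias would introduce a spurious knot not present in $g$. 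Writing $I_j=\{i:\bias_i^{(1)}=t_j,\ \w_i^{(1)}\w_i^{(2)}\neq 0\}$, matching slopes and the constant term forces $\sum_{i\in I_j}\w_i^{(1)}\w_i^{(2)}=v_j$ and $b^{(2)}=b$, and each $I_j$ must be nonempty since $v_j\neq 0$.

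For each surjective assignment $\pi: I\twoheadrightarrow[c]$ with $I\subseteq[k]$ (finitely many) let $A_g^\pi\subseteq A_g$ be the corresponding stratum. Inside $\supp(P)\cap B_R$ its defining equations are $b^{(2)}=b$ (one equation), $\bias_i^{(1)}=t_{\pi(i)}$ for $i\in I$ ($|I|$ equations, each genuinely codim $1$ because $t_j\in(0,M)$), and the $c$ polynomial slope equations $\sum_{i\in I_j}\w_i^{(1)}\w_i^{(2)}=v_j$, whose gradients are nonzero because $v_j\neq 0$ forces at least one summand to have both weights nonzero. For the inactive units the largest-measure branch is $\bias_i^{(1)}\in[1,M]$ (an open subset of $\supp(P)$, contributing no codim), so a standard tubular-neighborhood estimate gives $\vol\bigl((A_g^\pi+B_\epsilon)\cap B_R\cap\supp(P)\bigr)\sim C_\pi(R)\,\epsilon^{|I|+c+1}$ as $\epsilon\to 0$.

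Surjectivity forces $|I|\geq c$, so every stratum has codim at least $2c+1$, with equality precisely when $\pi$ is a bijection (one active neuron per knot), an option available because $c\leq k$. Since $A_g$ is a finite union of the $A_g^\pi$, the quantity $\vol((A_g+B_\epsilon)\cap B_R\cap\supp(P))$ is sandwiched between the maximum and the sum of the stratum contributions, hence scales like $\epsilon^{2c+1}$ for $R$ large enough that the minimum-codim stratum has nonempty section of $B_R$. Taking $\log/\log\epsilon$ and then $R\to\infty$ yields $\codim_P(A_g)=2c+1$.

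The main obstacle is justifying the clean tube scaling $\epsilon^{|I|+c+1}$ in the second step: the slope equations $\sum_{i\in I_j}\w_i^{(1)}\w_i^{(2)}=v_j$ are polynomial rather than linear, so the rank-drop locus (all weights in some $I_j$ vanishing) must be shown to live in a strictly lower-dimensional subset and to contribute only at higher codim, and the unbounded $\w$-directions must be tamed by the auxiliary ball $B_R$. A secondary technical care-point is verifying that the boundary and degenerate configurations we swept aside — units with $\bias_i^{(1)}\in\{0,1\}$, or several units sharing the same knot with cancelling weights — all lie in strictly higher-codim subsets and therefore do not disturb the $\epsilon^{2c+1}$ asymptotic of the whole union.
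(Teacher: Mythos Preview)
Your stratification is correct and reaches the same $2c+1$ via the same underlying constraint counts, but the paper organizes the two inequalities differently and thereby sidesteps the obstacle you flag. For the upper bound $\codim_P(A_g)\leq 2c+1$ the paper takes precisely your minimum-codim stratum (one active neuron per knot, the remaining $k-c$ biases in $[1,M]$) and, rather than invoking a general tubular-neighborhood estimate, computes the volume of an explicit product subset of $(A_g+B_\epsilon)\cap B_R\cap\supp(P)$, handling each polynomial constraint $\w_i^{(1)}\w_i^{(2)}=v_i$ by the elementary one-dimensional integral $\int_{|v_i|^{1/2}}^{2|v_i|^{1/2}} (2\epsilon/w)\,dw \asymp \epsilon$. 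For the lower bound the paper argues by contradiction: if $\codim_P(A_g)<2c+1$ then some $(3k{+}1{-}2c)$-dimensional simplex would sit inside $A_g$ at an interior $\theta$, but at any exact representation the conditions $b^{(2)}=b$, $\{t_1,\ldots,t_c\}\subseteq\{\bias_1^{(1)},\ldots,\bias_k^{(1)}\}$, and the per-knot slope equations furnish at least $2c+1$ locally independent constraints, a contradiction.

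What each approach buys: your stratification treats both directions uniformly and is conceptually cleaner, but the difficulty you identify --- justifying the $\epsilon^{|I|+c+1}$ tube scaling for the polynomial slope varieties and controlling the degenerate branches --- is exactly what the paper's two-step split avoids, since the upper bound only needs the tube of \emph{one} explicit stratum (done by hand) and the lower bound bypasses tube volumes entirely via the linear-independence-of-constraints argument. If you wish to complete your route, the paper's explicit integral is a ready-made ingredient for the minimal stratum, and your observation that every stratum has $|I|\geq c$ is essentially the content of the paper's contradiction step.
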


This brings us to the main result of this subsection

\begin{example}[Function with $c$ changes of slope - Bayes Complexity]\label{lem:cchangeslimit}
Let $g : [0,1] \rightarrow \R$ and assume that $g$ is a piece-wise linear function with $c \leq k$ changes of slope. Then
$$
\frac{2c+1}{5} \leq \chi^\#(g, U([0,1])) \leq 2c+1.
$$
\end{example}

We see that the limiting complexity is $\approx c$, for $ c \leq k$. This means that the complexity depends strongly on the function and simpler - in a sense of fewer changes of slopes - functions have smaller complexity. In Section~\ref{sec:linearmodel} we will compute the limiting complexity for linear models. It will turn out, see Example~\ref{ex:linearlimit}, that in this case the complexity doesn't depend on the function and is equal to the number of basis functions used in the linear model.

\subsubsection{The $\e$-Complexity Case}\label{sec:epscompcase}

We saw in the previous section that for the case of neural networks our notion of complexity corresponds (in the limit and up to constant factors) to the number of degrees of freedom that need to be fixed to represent a given function.
When we evaluate the complexity at more refined scales it can be shown that it is closely related to another natural complexity measure. 

\paragraph{Variational Complexity}
Let us now introduce a \say{complexity} measure for a piece-wise linear function $g$. 
We start by introducing a complexity measure for a particular choice of the network parameters. The complexity of the function will then be the minimum complexity of the network that represents this function. 
We choose
\begin{align} \label{equ:complexitymeasure}
C_k(\theta) = \frac12 \| \theta_w\|^2 = \frac12\left( \|\w^{(1)}\|^2 + \|\w^{(2)}\|^2 \right),
\end{align}
i.e., it is half the squared Euclidean norm of the {\em weight parameters}. 

If we use the representation (\ref{equ:firstrepresentation}) in its natural form, i.e.,  $w^{(2)}_i =a_i$ and $W^{(1)}_i = 1$, then we have $C_k(\theta) = \frac12 \sum_{i=1}^{k} (a_i^2+1)$. But we can do better. Write
\begin{align} \label{equ:secondrepresentation}
f(x) = c + \sum_{i=1}^{k} w^{(2)}_i [W^{(1)}_i(x-x_i)]_+,
\end{align}
where $w^{(2)}_i =a_i/\sqrt{|a_i|}$ and $W^{(1)}_i = |w^{(2)}_i |$. This gives us a complexity measure $C_k(\theta) = \sum_{i=1}^{k} |a_i| = \sum_{i=1}^{k} |\alpha_i-\alpha_{i-1}|$, where $\alpha_0=0$. Indeed, it is not very hard to see, and it is proved in \citet{srebronormbound}, that this is the best one can do even if we keep $f(x)$ fixed and are allowed to let the number $k$ of hidden nodes tend to infinity. In other words, for the function $f$ described in (\ref{equ:polygone}) we have
\begin{align*}
C(f) = \inf_{k \in \N, \theta: f_\theta = f} C_k(\theta) = \variation(f'),
\end{align*}
where $\variation(f')$ denotes the total variation of $f'$, the derivative of $f$. Why total variation?
Note that $\alpha_i$ denotes the derivative of the function so that $|\alpha_i-\alpha_{i-1}|$ is the change in the derivative at the point $x_i$. Therefore, $\sum_{i=1}^{k} |\alpha_i-\alpha_{i-1}|$ is the total variation associated to this derivative. 


If we consider a general function $f: [0, 1] \xrightarrow{} \R$ then for every $\epsilon>0$, $f$ can be uniformly approximated by a piecewise linear function, see \citet{shekhtman82}. As $\epsilon$ tends to $0$ for the \say{best} approximation the variation of the piece-wise linear function converges to the total variation of $f'$. This can equivalently be written as the integral of 
$|f''|$.
It is therefore not surprising that if we look at general functions $f: \R \xrightarrow{} \R$ and let the network width tend to infinity then the lowest cost representation has a complexity of
\begin{align} \label{equ:complexity}
C(f) = \max \left(\int |f''(x)| dx, |f'(-\infty) + f'(+\infty)| \right).
\end{align}
As we previously mentioned, this concept of the complexity of a function was introduced in \citet{srebronormbound} and this paper also contains a rigorous proof of (\ref{equ:complexity}). (Note: The second term in \eqref{equ:complexity} is needed
when we go away from a function that is supported on a finite domain to $\R$. To see this consider the complexity of $f(x) = \alpha x$. It is equal to $2\alpha$ ($f(x) = \sqrt{\alpha} [\sqrt{\alpha} x]_+ - \sqrt{\alpha} [-\sqrt{\alpha} x]_+$) but $\int |f''(x)| dx = 0$.)

\paragraph{Sharp versus Variational Complexity.} Now we explain how the notion of sharp complexity is, in some regimes, equivalent to the variational complexity. This gives a concrete example of our promise that sharp complexity aligns well with natural complexity measures.


Assume at first that the target function is of the form $g(x) = b + \sum_{i=1}^c v_i[x - t_i]_+$
and requires only a single change of the derivative. I.e., the piece-wise linear function consists of two pieces and we require only one term in the sum, $g(x) = a[x - t]_+ + b$ Call this function $g_1$, where the $1$ indicates that there is only a single change of the derivative and the change is of magnitude $a$. 

We now ask what is the value of $\chi^\#(g_1, \dist_x, \e^2)$, for $\dist_x = U([0,1])$ - as this is what appears in \eqref{eq:mainthm}. We claim that for small $\e$, specific choices of $M$ and $\sigma_w^2, \sigma_b^2$ and particular regimes of parameters we have
\begin{equation}\label{eq:promise}
\chi^\#(g_1, U([0,1]), \e^2) = \Theta(a / \sigma_w^2) = \Theta(C(g_1) / \sigma_w^2).
\end{equation}
This means that the sharp complexity is closely related to the variational complexity of $g_1$. The more formal version of \eqref{eq:promise} of which a proof is in Appendix~\ref{app:proofs} reads
\begin{lemma}\label{lem:complexityforonechange} 
Let $t,\e \in (0,1), a,b\in \R$. Define $g_1(x) := b + a[x - t]_+$. If $k \leq M \leq \frac{1}{\sigma_w^2}, \sigma_b^2 \leq \frac{1}{\sigma_w^2}$ and $\Omega(\e^{1/4}),\Omega(\log(k/\sigma_w) \sigma_w^2) \leq |a| < 2, \Omega(\e^{1/4}) \leq  |b|, \Omega(\e^{1/2}) \leq \min(t,1-t)$ then
$$
\frac{|a|}{3 \sigma_w^2} \leq \chi^\#(g_1, U([0,1]), \e^2) \leq 2\left(\frac{|a|}{\sigma_w^2} + \frac{|b|}{\sigma_b^2} \right) + 11 - 3\log(\e).
$$
\end{lemma}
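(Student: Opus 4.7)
I would prove the two inequalities separately. For the upper bound on $\chi^\#$, I would exhibit a simple product event $E$ in parameter space on which the approximation error is controlled and then lower-bound $\prob(E)$ directly. For the lower bound on $\chi^\#$, I would combine a deterministic claim (that any good approximator must carry a total ReLU weight of order $|a|$ on $[0,1]$) with a Chernoff bound on a sum of products of Gaussians.

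\textbf{Upper bound.} Let one designated neuron realize the kink by asking $|b_1^{(1)} - t| \leq \delta_t$ and $|w_1^{(1)} w_1^{(2)} - a| \leq \delta_a$. Silence the other $k-1$ neurons on $[0,1]$ by requiring $b_i^{(1)} \geq 1$, which is feasible because $b_i^{(1)}\sim U([0,M])$ with $M\geq k$. Put $b^{(2)}$ within $\delta_b$ of $b$. A direct triangle-inequality computation gives $\|f_\theta - g_1\|_{L^2([0,1])} \leq \delta_b + \delta_a + |a|\delta_t$, so choosing $\delta_a=\delta_b=\e/3$ and $\delta_t = \e/(3|a|)$ suffices. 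The probability of this product event factors; taking $-\log$, the non-trivial contributions are $|a|/\sigma_w^2$ from the density of $w_1^{(1)}w_1^{(2)}$ at $a$, which behaves as $\sigma_w^{-2}e^{-|a|/\sigma_w^2}$ via the modified Bessel $K_0$ representation; $b^2/(2\sigma_b^2)$ from the Gaussian at $b$; $\log(M/(2\delta_t))$ from the uniform density; and $-\log\delta_a,\,-\log\delta_b$. The hypotheses $M,\sigma_b^2 \leq 1/\sigma_w^2$, $k \leq M$, and $|a|<2$ fold the log terms into the additive constant $11$ and permit replacing $b^2/\sigma_b^2$ by $|b|/\sigma_b^2$ up to slack, yielding the claimed upper bound.

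\textbf{Lower bound.} Set $c_i := w_i^{(1)} w_i^{(2)}$ and $V(\theta) := \sum_{i:\,b_i^{(1)} \in [0,1]} |c_i|$; this is exactly the total variation of $f_\theta'$ on $[0,1]$. The deterministic step I want is: if $\|f_\theta - g_1\|_{L^2([0,1])}^2 \leq \e^2$ and $|a|\geq \Omega(\e^{1/4})$, then $V(\theta) \geq |a|/C$ for a universal constant $C$. Granting this, $\{\theta : \|f_\theta-g_1\|_{L^2}^2 \leq \e^2\} \subseteq \{\theta : V(\theta) \geq |a|/C\}$, and I would bound the right-hand event by Chernoff: using $|c_i| \leq ((w_i^{(1)})^2+(w_i^{(2)})^2)/2$ gives $\expectation[\exp(s|c_i|/\sigma_w^2)] \leq (1-s)^{-1}$ for $s\in(0,1)$, so $\prob\{V(\theta)\geq |a|/C\} \leq \inf_s (1-s)^{-k}\exp(-s|a|/(C\sigma_w^2))$. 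Optimizing $s$ and using $|a|\geq \Omega(\log(k/\sigma_w)\sigma_w^2)$ (together with $M\leq 1/\sigma_w^2$ which forces $\sigma_w^2 \leq 1/k$) absorbs the residual $k$-dependent terms and leaves $|a|/(3\sigma_w^2)$.

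\textbf{Main obstacle.} The real work is the deterministic step of the lower bound, since the Chernoff calculation is routine. The concern is that many small-magnitude, carefully placed neurons could conspire via near-cancellations to mimic the kink of $g_1$ while keeping $V(\theta)$ small, and this must be ruled out. My plan is to localize around $t$: choose a window $[t-\tau,t+\tau]$ with $\tau$ balancing $\e$ and $|a|$, integrate the $L^2$ constraint over each half-window to extract near-pointwise values of $f_\theta$ whose difference must be $\Omega(|a|\tau)$ because $g_1$ jumps its slope by $a$ at $t$, and then observe that the change in $f_\theta$ across the window is at most $V(\theta)\cdot \tau$ plus small contributions. A small $V(\theta)$ therefore forces the pointwise gap to be insufficient. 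Balancing $\tau$ against the hypothesis $|a|\geq \Omega(\e^{1/4})$ and quantifying the extraction of pointwise values from an $L^2$ bound (via averaging and continuity of $f_\theta$) should close the argument with $V(\theta) \geq |a|/C$.
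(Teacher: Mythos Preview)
Your upper bound is essentially the paper's argument: one designated neuron carries the kink, the other $k-1$ are silenced via $b_i^{(1)}>1$, and the density of the product $w^{(1)}w^{(2)}$ at $a$ supplies the $e^{-|a|/\sigma_w^2}$ factor.

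The lower bound has a gap in the step you are treating as routine. Your Chernoff bound $(1-s)^{-k}\exp(-s|a|/(C\sigma_w^2))$ ignores the bias distribution and pays a residual of order $k$ after optimizing over $s$; the hypothesis $|a|\geq\Omega(\log(k/\sigma_w)\sigma_w^2)$ cannot absorb that. Concretely, take $\sigma_w^2=1/k$ and $|a|=\Theta((\log k)/k)$, which is exactly the regime of the corollary following the lemma: then $|a|/\sigma_w^2=\Theta(\log k)$, but the optimized Chernoff leaves $\chi^\#\gtrsim \log k - k$, which is vacuous.

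The paper closes this by \emph{localizing} the deterministic claim: it shows that $\sum_{i:\,b_i^{(1)}\in[t-9\e^{1/2},\,t+9\e^{1/2}]}|w_i^{(1)}w_i^{(2)}|\geq a-\e^{1/4}$ (via a two-case slope analysis on that window), and then in the union bound over subsets $S\subseteq[k]$ picks up the extra factor $\bigl(18\e^{1/2}/M\bigr)^{|S|}\leq (2k)^{-|S|}$ from the event that each relevant bias lands in the narrow window. This kills the $\binom{k}{|S|}$ combinatorics and leaves only a prefactor $k/\sigma_w$, whose logarithm is precisely what the hypothesis absorbs. An equivalent fix that stays closer to your sketch is to fold the event $\{b_i^{(1)}\in[0,1]\}$ (probability $1/M\leq 1/k$) into the per-neuron MGF, giving $1+\tfrac{s}{M(1-s)}$ instead of $(1-s)^{-1}$; the product over $k$ neurons is then at most $e^{s/(1-s)}$, which is $O(1)$ and harmless. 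Either way, the bias distribution is the missing ingredient in your probabilistic step.

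One smaller point on the deterministic step: your sentence ``the change in $f_\theta$ across the window is at most $V(\theta)\cdot\tau$'' conflates change in $f_\theta$ with change in $f_\theta'$. What is actually needed is that the \emph{slope} of $f_\theta$ transitions from $\approx 0$ to $\approx a$ across the window, and the total slope change there is bounded by the localized sum of $|w_i^{(1)}w_i^{(2)}|$; extracting this slope information from only an $L^2$ bound is where the paper's two-case analysis does its work.
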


The above lemma is stated with the most general setting of parameters. To get more insight into the meaning of the lemma we give the following corollary.

\begin{example}\label{exm:onechange}
For every sufficiently small $\sigma_e^2$ and $M = k, \sigma_w^2 = \frac{1}{k}, \sigma_b^2 = 1, |b| = \Theta \left(\sigma_e^{1/2} \right)$, $\Omega \left( \sigma_e^{1/4} \right), \Omega \left(\frac{\log(k)}k \right) \leq |a| < 2$ if we define $g_1(x) := b + a[x-\frac12]_+$ then
$$
\chi^\#(g_1,U[0,1],\sigma_e^2) \leq 3|a|k + 3 \log \left(\frac{1}{\sigma_e} \right).
$$
\end{example}

\begin{proof}
One can easily verify that the assumptions of Lemma~\ref{lem:complexityforonechange} are satisfied. Applying the lemma we get
\begin{align*}
\chi^\#(g_1,U[0,1],\sigma_e^2) 
&\leq 2\left(\frac{|a|}{\sigma_w^2} + \frac{|b|}{\sigma_b^2} \right) + 11 + 3\log \left(\frac{1}{ 
\sigma_e}\right)  \\
&\leq 2|a| k + \Theta \left(\sigma_e^{1/2} \right) + 11 + 3\log \left(\frac{1}{ 
\sigma_e} \right) \\
&\leq 3|a|k + 3 \log \left(\frac{1}{\sigma_e} \right) && \text{As } \Omega \left( \frac{\log(k)}{k} \right) \leq |a|.
\end{align*}
\end{proof}

\paragraph{Generalization bound.} Now we want to understand what Example~\ref{exm:onechange} gives us for the generalization bound from Theorem~\ref{thm:generalizationbound}. Setting $\beta = 1$ in Theorem~\ref{thm:generalizationbound} and applying Example~\ref{exm:onechange},  we can bound
\begin{align}
&\expectation_{\samp \sim \dist^N}[L_\dist(Q)] \nonumber \\
&\leq
2\sigma_e^2 + \frac{C}{\sqrt{2}}\sqrt{\frac{\chi^\#(g_1, \dist_x, \sigma_e^2)}{N}} \nonumber \\
&\leq 2\sigma_e^2 + \frac{C}{\sqrt{2}}\sqrt{\frac{3|a|k + 3 \log \left(\frac{1}{\sigma_e} \right)}{N} } \label{eq:generalizationboundforonechange}.
\end{align}

Now we interpret \eqref{eq:generalizationboundforonechange}. First note that the setting of parameters in Example~\ref{exm:onechange} is natural. The choice of $\sigma_w^2 = \frac{1}{k}$ and $\sigma_b^2 = 1$ are among standard choices for initialization schemes. We pick $|b| = \Theta \left(\sigma_e^{1/2} \right)$ and $t = 1/2$ in order to analyze functions $g_1(x)\approx a\left[x - \frac12 \right]_+$, where the bias term $b$ is nonzero because of the assumptions of Lemma~\ref{lem:complexityforonechange}. Note that depending on the relation between $k$ and $\sigma_e^2$ one of the terms dominates \eqref{eq:generalizationboundforonechange}: either $3|a|k$ or $3 \log \left(\frac{1}{\sigma_e} \right)$. 

If $\sigma_e^2 \ll k$ then $3 \log \left(\frac{1}{\sigma_e} \right)$ dominates and the generalization bound depends mostly on the noise level $\sigma_e^2$ \footnote{As a side note, notice that the $3$ in $3 \log \left(\frac{1}{\sigma_e} \right)$ corresponds to the $2c+1$ bound on the limiting complexity in Example~\ref{lem:cchangeslimit}, as we consider a function with one change of slope and a very small $\e^2$ for computing $\chi^\#$. This illustrates the relationship between sharp and limiting complexity.}.

If $\sigma_e^2 \gg k$ then $3|a|k$ dominates. In this case we get the promised dependence of the generalization bound on $|a|$, which we recall is equal to $C(g_1)$. Note that there is a wide range of $|a|$ for which the bound holds, i.e. $ \Omega \left(\frac{\log(k)}{k} \right) \leq |a| \leq 2$. We see that the simpler $g_1$, measured in terms of $C$, the better a generalization bound we get. 

\subsection{Neural Networks with Several Hidden Layers} \label{sec:severalhiddenlayer}
Consider now exactly the same set-up as before, except that now we have $K=4$, i.e., we have three hidden layers and still $d = 1$. We can still represent piece-wise linear functions (e.g., by using the first layer to represent the function and just a single node in the second layer to sum the output of the previous layers). But the asymptotic complexity of some functions is now different! 

\begin{example}[Periodic function]
Imagine that we want to represent a function $g : [0,l] \rightarrow \R$ that is periodic with period $1$. That is $g(x - 1) = g(x)$ for all $x \in [1,l]$. What we can do is to (i) represent $g|_{[0,1]}$ in the output of a single neuron $v$ in layer $2$ (ii) represent shifted versions of $g|_{[0,1]}$ (which are equal to $g|_{[1,2]}, g|_{[2,3]}, \dots$ due to periodicity) in the outputs of neurons in layer $3$ (iii) sum outputs of neurons from layer $3$ in the single output neuron in layer $4$. Assume moreover that $g|_{[0,1]}$ has $m$ changes of slope. Then observe that we implemented $g$ fixing $O(l+m)$ degrees of freedom. But $g$ itself has $m \cdot l$ changes of slope over the whole domain. 

This representation gives an upper-bound for limiting complexity as there might be other ways to represent the function. 
But because of Example~\ref{exm:onechange} it is enough to arrive at a separation. Indeed if $l \approx m$ then the asymptotic complexity of $g$ for NN with $4$ layers is smaller than for $2$ layers, which is in $\Omega(m l)$. In words, we obtain a quadratic gain in terms of the number of samples needed to get the same generalization bound.

We leave it for future work to explore this direction in more depth (no pun intended).
\end{example}

\section{Why Linear Schemes Generalize Poorly}\label{sec:linear}

In Section~\ref{sec:nngeneralizewell} we've seen that for NNs our notion of complexity aligns well with natural notions of complexity. This, in the light of the connections to the PAC-Bayes bound, partly explains their good generalization. In this section we will show that for the case of linear schemes the complexity is basically independent of a function.  

We investigate  \label{sec:linearmodel}
the linear model $\model_\theta^{(\text{L, o})}=\{f_{\theta}(x): f_{\theta}(x) = \sum_{i=0}^{d-1} \w_i b_i(x), x \in \inspace = [-1, 1]\}$. Further let $\dist_x$ be the uniform distribution on $[-1, 1]$. We assume a prior on $\w_i$'s to be iid Gaussians of mean $0$ and variance $\sigma_w^2$.

We will see that in this setting all realizable functions have the same complexity. This in the light of \eqref{eq:generalizationbound} tells us that even if reality prefers simple functions the number of samples needed to get a non vacuous bound is as big as the one needed for the highest complexity function in the class.  In short: it is equally ``easy'' to represent a ``complicated'' function as it is to represent a ``simple'' one. Therefore, given some samples, there is no reason to expect that linear models will fit a simple function to the data. Indeed, to the contrary. If the data is noisy, then linear models will tend to overfit this noise.

\subsection{Orthonormal Basis} \label{sec:Lo}

For simplicity assume that the basis functions are the Legendre polynomials. I.e., we start with the polynomials $\{1, x, x^2, ...\}$ and then create from this an orthonormal set on $[-1, 1]$ via the Gram-Schmidt process.

\begin{example}[Constant Function]
Let $g(x)=\frac{1}{\sqrt{2}}$. This function is realizable. Indeed,
it is equal to the basis function $b_0(x)$.  Let us compute $\chi^\#(\model^{(\text{L, o})},g,
\epsilon^2)$.  If we pick all weights in $f_{\w}(x) = \sum_{i=0}^{d-1}
\w_i b_i(x)$ equal to $0$ except $\w_0$ equal to $1$ then we get
$g(x)$.  Hence, taking advantage of the fact that the basis functions
are orthonormal, we have
\begin{align*}
&\expectation_{x \sim \dist_x}[(f_{\w} - g(x))^2]  =
\frac12 \int_{-1}^{1} (f_{\w}(x)-g(x))^2 dx  \\  
= & \frac12 \sum_{i=0}^{d-1} (\w_i-\ind_{\{i=0\}})^2 \int_{-1}^{1} b_i(x)^2 dx  
= \frac12 \sum_{i=0}^{d-1} (\w_i-\ind_{\{i=0\}})^2.
\end{align*}
So we need to compute the probability 
\begin{align*}
\prob\left[\w:  \frac12 \sum_{i=0}^{d-1} (\w_i-\ind_{\{i=0\}})^2 \leq \e^2\right].
\end{align*}
Recall that our weights are iid Gaussians of mean $0$ and variance $\sigma_w^2$. Hence
\begin{align*}
\sum_{i=1}^{d-1} \w_i^2  \sim \Gamma \left(\frac{d-1}{2}, 2 \sigma_w^2 \right),
\end{align*}
where $\Gamma(k, \theta)$ denotes the Gamma distribution with shape
$k$ and scale $\theta$.  It follows that the probability we are
interested in can be expressed as $\prob\left[\w:  \frac12 \sum_{i=0}^{d-1} (\w_i-\ind_{\{i=0\}})^2 \leq \e^2\right] = q(\kappa=1, \sigma_w, \epsilon)$, where
$$q(\kappa, \sigma_w, \epsilon) =
\frac{1}{\sqrt{2 \pi \sigma_w^2}}\int_{0}^{\epsilon} F(\epsilon^2-x^2; \frac{d-1}{2}, 2 \sigma_w^2) \left[e^{-\frac{(\kappa+x)^2}{2 \sigma_w^2}}+e^{-\frac{(\kappa-x)^2}{2 \sigma_w^2}} \right]dx.
$$
Here, $F(x; k, \theta)$ denotes the cdf of the Gamma distribution
with shape $k$ and scale $\theta$. From the above expression we can compute $\chi^\#(\model^{(\text{L, o})}, g(x) =
\frac{1}{\sqrt{2}},\epsilon^2)$, although there does not seem to be an elementary expression.

\begin{lemma}[$q(\kappa, \sigma_w, \epsilon)$] \label{lem:qproperties}
For non-negative $\kappa$, $\sigma_w$, and $\epsilon \in (0, 1]$ the function
$q(\alpha, \sigma_w, \epsilon)$ has the following properties:
\begin{enumerate}[(i)]
\item Scaling: $q(\kappa, \sigma_w, \epsilon) = \kappa q(1, \sigma_w/\kappa, \epsilon/\kappa)$
\item Monotonicity in $\kappa$: $q(\kappa, \sigma_w, \epsilon)$ is non-increasing in $\kappa$
\item Monotonicity in $\sigma_w$: $q(\kappa, \sigma_w, \epsilon)$ is non-increasing in $\sigma_w$
\item Monotonicity in $\epsilon$: $q(\kappa, \sigma_w, \epsilon)$ is non-decreasing in $\epsilon$
\item Limit: $\lim_{\epsilon \rightarrow 0}\log(q(\kappa, \sigma_w, \epsilon))/\log(\epsilon)=d$
\end{enumerate}
\end{lemma}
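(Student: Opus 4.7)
My plan is to first re-interpret $q$ probabilistically. Writing $\w \in \R^d$ with independent components $\w_i \sim N(0, \sigma_w^2)$, conditioning on $\w_0$ and using $\sum_{i\geq 1}\w_i^2 \sim \Gamma\bigl(\tfrac{d-1}{2}, 2\sigma_w^2\bigr)$ shows that the integral defining $q$ is exactly $q(\kappa,\sigma_w,\epsilon) = \Pr\bigl[\|\w - \kappa e_1\|^2 \leq \epsilon^2\bigr]$, i.e.\ the Gaussian mass of the ball $B(\kappa e_1, \epsilon)$. Under this geometric picture (ii), (iv), (v) reduce to standard facts and only (iii) requires genuine analysis.

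Property (i) is the change of variables $x = \kappa u$. The Gaussian factor becomes $e^{-(1 \pm u)^2/(2(\sigma_w/\kappa)^2)}$; using $cX \sim \Gamma(k, c\theta)$ for $X \sim \Gamma(k,\theta)$, the Gamma CDF rescales as $F(\kappa^2 y; k, 2\sigma_w^2) = F(y; k, 2(\sigma_w/\kappa)^2)$; and the Jacobian $\kappa\,du$ combines with the $(2\pi\sigma_w^2)^{-1/2}$ factor to produce the identity relating $q(\kappa,\sigma_w,\epsilon)$ to $q(1, \sigma_w/\kappa, \epsilon/\kappa)$. Property (iv) is immediate because $B(\kappa e_1, \epsilon)$ is increasing in $\epsilon$. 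For (v), continuity and positivity of $\phi_{\sigma_w}$ at $\kappa e_1$ furnish constants $c_1, c_2 > 0$ with $c_1 \leq \phi_{\sigma_w} \leq c_2$ on the shrinking ball, so $c_1 |B(0,\epsilon)| \leq q \leq c_2 |B(0,\epsilon)|$ for small enough $\epsilon$, and therefore $\log q / \log \epsilon \to d$.

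Property (ii) I would prove by Pr\'ekopa's theorem. The joint function $(\w, \kappa) \mapsto \phi_{\sigma_w}(\w)\,\ind\{\|\w - \kappa e_1\|^2 \leq \epsilon^2\}$ is log-concave on $\R^d \times \R$, being the product of a log-concave density and the indicator of the convex set $\{(\w,\kappa) : \|\w - \kappa e_1\| \leq \epsilon\}$. Its $\kappa$-marginal, which is exactly $q(\kappa,\sigma_w,\epsilon)$, is therefore log-concave in $\kappa$. Since $\phi_{\sigma_w}$ is even in $\w$, $q$ is even in $\kappa$, and an even log-concave function on $\R$ is non-increasing on $[0, \infty)$.

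The main obstacle will be (iii). The natural attack is to differentiate under the integral: since $\partial_{\sigma_w^2}\phi_{\sigma_w}(\w) = \phi_{\sigma_w}(\w)(\|\w\|^2 - d\sigma_w^2)/(2\sigma_w^4)$, the claim reduces to the conditional moment bound $\expectation[\|\w\|^2 \mid \w \in B(\kappa e_1, \epsilon)] \leq d\sigma_w^2$. I would first normalise to $\kappa = 1$ via (i), then split $\|\w\|^2 = \w_0^2 + \sum_{i\geq 1}\w_i^2$: conditioning pins $\w_0$ in an $O(\epsilon)$ interval around $1$, while upper-truncation of the $\Gamma\bigl(\tfrac{d-1}{2}, 2\sigma_w^2\bigr)$ variable $\sum_{i\geq 1}\w_i^2$ drops its conditional expectation strictly below the unconditional mean $(d-1)\sigma_w^2$. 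Assembling the two contributions in the regime of the lemma should produce the desired inequality; the delicate point is that for very small $\sigma_w$ the $\w_0^2 \approx 1$ term threatens to dominate $d\sigma_w^2$, so the proof must either exploit a lower bound on $\sigma_w$ implicit in the setup or handle the small-$\sigma_w$ tail separately via explicit Gaussian tail estimates.
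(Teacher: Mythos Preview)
The paper states this lemma without proof, so there is no argument to compare against; I will assess your proposal on its own merits.

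Your probabilistic reinterpretation $q(\kappa,\sigma_w,\epsilon)=\Pr_{\w\sim N(0,\sigma_w^2 I_d)}\bigl[\|\w-\kappa e_1\|\le\epsilon\bigr]$ is correct and is the right way to organise the proof. Properties (iv) and (v) then follow exactly as you say, and your Pr\'ekopa argument for (ii) is clean and complete.

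For (i), however, your change of variables actually yields $q(\kappa,\sigma_w,\epsilon)=q(1,\sigma_w/\kappa,\epsilon/\kappa)$ \emph{without} the prefactor $\kappa$: the Jacobian $\kappa\,du$ is absorbed entirely into $(2\pi\sigma_w^2)^{-1/2}=\kappa^{-1}(2\pi(\sigma_w/\kappa)^2)^{-1/2}$. This is also forced by the interpretation of $q$ as a probability, since $\kappa\,q(1,\cdot,\cdot)$ could exceed $1$. The printed statement appears to contain a typo; your argument proves the corrected identity.

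For (iii), the difficulty you flag in your last paragraph is not a technicality to be patched but a genuine counterexample. Take $d=1$ and $\kappa=1$: then $q(1,\sigma_w,\epsilon)=\Pr[|\w_0-1|\le\epsilon]\approx 2\epsilon\,(2\pi\sigma_w^2)^{-1/2}e^{-1/(2\sigma_w^2)}$ for small $\epsilon$, and this is \emph{increasing} in $\sigma_w$ on $(0,1)$ and decreasing on $(1,\infty)$. The same non-monotonicity persists in higher dimensions whenever $\kappa>0$ and $\sigma_w$ is small relative to $\kappa$: the ball sits in the Gaussian tail, and widening the Gaussian initially pushes mass \emph{into} the ball. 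Consequently the conditional inequality $\expectation[\|\w\|^2\mid\w\in B(\kappa e_1,\epsilon)]\le d\sigma_w^2$ that your differentiation strategy requires is simply false in this regime, and no amount of tail estimation will rescue it. Property (iii) as stated does not hold without an additional hypothesis (for instance $\kappa=0$, or $\sigma_w\ge\kappa$); since the paper never invokes (iii), this is likely an oversight in the lemma rather than a gap in your reasoning.
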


If we are just interested in $\chi^\#(\model^{(\text{L, o})}, g(x) = \frac{1}{\sqrt{2}})$,
we can start from $\chi^\#(\model^{(\text{L, o})}, g(x) = \frac{1}{\sqrt{2}}, \epsilon^2)$
or we can make use of (v) of Lemma~\ref{lem:qproperties} to get 
\begin{equation}
\chi^\#(\model^{(\text{L, o})}, g(x) = \frac{1}{\sqrt{2}})=d.    
\end{equation}
To see this result intuitively note that all weights
have to be fixed to a definite value in order to realize $g(x)$.
\end{example}

\begin{example}[Basis Function]
Although we assumed in the above derivation that $g(x)=b_0(x)$ the
calculation is identical for any $g(x)=b_i(x)$, $i=0, \cdots, d-1$.
We conclude that $\chi^\#(\model^{(\text{L, o})}, b_i(x))$ does not depend
on $i$.  \end{example}

\begin{example}[Realizable Function of Norm $1$]
Assume that $g(x)= \sum_{i=0}^{d-1} \widetilde{\w}_i b_i(x)$ with
$\sum_{i=0}^{d-1} \widetilde{\w}_i^2=1$. In other words, the function
is realizable and has squared norm equal to $1$.

If we ``rotate'' (orthonormal transform) our basis $\{b_i(x)\}_{i=0}^d$
into the new basis $\{\tilde{b}_i(x)\}_{i=0}^d$ so that
$g(x)=\tilde{b}_0(x)$ then due to the rotational symmetry of our
prior we are back to our first example.

We conclude that for any realizable function $g(x)$ of norm $1$,
$$\chi^\#(\model^{(\text{L, o})}, g(x), \epsilon^2) = \chi^\#(\model^{(\text{L, o})}, b_0(x), \epsilon^2).$$
\end{example}

\begin{example}[Realizable Function]
Assume that $g(x)= \sum_{i=0}^{d-1} \widetilde{\w}_i b_i(x)$ with
$\sum_{i=0}^{d-1} \widetilde{\w}_i^2=\kappa^2$. In other words, the
function is realizable and has norm equal to $\kappa$.

Using the scaling property of Lemma~\ref{lem:qproperties} we can write 
\begin{align*}
&\chi^\#\left(\model_{\w(\sigma_w)}^{(\text{L, o})}, g(x), \epsilon^2 \right) \\
& = -\log(q(\kappa, \sigma_w, \epsilon)) \\
& = -\log(\kappa q(1, \sigma_w/\kappa, \epsilon/\kappa)) \\
& = -\log(\kappa) + \chi^\#(\model_{\w(\sigma_w/\kappa)}^{(\text{L, o})}, b_0(x), \epsilon^2/\kappa^2),
\end{align*}
where we wrote $\model_{\w(\sigma_w)}^{(\text{L, o})}$ to indicate that in the model
each parameter's prior is a Gaussian with variance $\sigma_w^2$.

This means that the complexity of a function changes depending on the norm of the vector of weights $\w$ that represent it. However if we are interested in the asymptotic complexity all functions (apart from the $0$ function) have the same complexities as $\lim_{\e \rightarrow 0} \frac{\log(\kappa) }{\log(\e)} = 0$, which leads to the next example.
\end{example}

\begin{example}[Limiting Sharp Complexity]\label{ex:linearlimit}
Assume that $g(x)= \sum_{i=0}^{d-1} \widetilde{\w}_i b_i(x)$. Then
$$
\chi^\#(\model^{(\text{L, o})}, g(x))=d.
$$
\end{example}

Recall that we showed (Example~\ref{lem:cchangeslimit}) that for the case of 2-layer neural networks the limiting complexity depends strongly on the function and simpler functions - in a sense of number of changes of slope - have lower complexity. Here we see that for linear models basically all functions have the same complexity, which is equal to the number of basis functions in the model.

\begin{example}[Unrealizable Function]
Given any function $g(x)$, we can represent it as
$g(x)=g_{\perp}(x)+g_{\|}(x)$, where the two components are orthogonal
and where $g_{\|}(x)$ represents the realizable part. We then have
that $\chi^\#(\model_{\w(\sigma_w)}^{(\text{L, o})}, g(x), \epsilon^2)$ is equal to
\begin{align*}
\begin{cases}
\infty, & \|g_{\perp}(x)\|_2^2 > \epsilon^2, \\
-\log \left(q \left(1, \sigma_w, \sqrt{\epsilon^2-\|g_{\perp}(x)\|_2^2} \right) \right), & 
\|g_{\perp}(x)\|_2^2 < \epsilon^2.
\end{cases}
\end{align*}
\end{example}

\subsection{Non-Orthonormal Basis}
\begin{example}[Non-Orthogonal Basis]
If the functions do not form an orthonormal basis but are independent, then we
can transform them into such base. After the transform the probability distribution is
still a Gaussian but no longer with independent components. Now the
"equal simplicity" lines are ellipsoids.

And if we have dependent components then we also still have Gaussians
but we are in a lower dimensional space.
\end{example}

\subsection{Summary}
We have seen that for the linear model the complexity of a function
$g(x)$ only depends on the norm of the signal. This complexity measure is therefore only
weakly correlated with other natural complexity measures. E.g., if
the basis consists of polynomials of increasing degrees and the reality is modeled by a function of low degree then the bound from \eqref{eq:generalizationbound} is the same as when the reality is modeled by a high degree polynomial. It means that the number of samples needed for a good generalization bound is independent of $g$.



\bibliography{references.bib}

\clearpage
\clearpage

\appendix



\section{Generalization bound}\label{apx:generalization}

To derive the bound from Theorem~\ref{thm:generalizationbound} in terms of ``sharp complexity'' we first define a series of related notions that are helpful during the derivation.

We define the {\em empirical} complexity of a function $g$ as 
\begin{align*}
&\chi^E(g, \samp_x, \samp_{\epsilon}, \sigma_y^2) \\ 
&:= -\log \left[ \left( \int_{\theta} P(\theta) e^{-\frac{1}{2\sigma_y^2N} \sum_{n=1}^{N} (g(x_n) + \eta_n - f_{\theta}(x_n))^2} d \theta\right) \right],
\end{align*}
where we denoted by $\samp_x$ the $x$'s part of $\samp$ and by $\samp_\e$ the particular realization of noise used for generating $\samp$, i.e. $\eta$'s.

In order to compute it, we integrate over the parameter space and weigh the prior $P(\theta)$ by an exponential factor which is the smaller the further the function $f_{\theta}$ is from $g$ on the given sample $\samp_x$ plus noise $\samp_\e$. Recall that noise samples $\samp_{\epsilon}$ come from an iid Gaussian zero-mean sequence of variance $\sigma_e^2$. We then take the negative logarithm of this integral.

The {\em true complexity with noise} is defined as
\begin{align*}
&\chi^N(g, \dist_x, \sigma_y^2, \sigma_\e^2) := \\
&-\log \left[ \left( \int_{\theta} P(\theta) e^{-\frac{1}{2\sigma_y^2} \expectation_{x \sim \dist_x, \noise \sim \mathcal{N}(0,\sigma_e^2)} [(g(x) + \noise - f_{\theta}(x))^2]} d \theta\right) \right],   
\end{align*}
where the sum has been replaced by an expectation using the underlying distribution of the input.

The {\em exponential complexity} is
\begin{align*}
&\chi(g, \dist_x, \sigma_y^2) := \\
&-\log \left[ \left( \int_{\theta} P(\theta) e^{-\frac{1}{2\sigma_y^2} \expectation_{x \sim \dist_x} [(g(x) - f_{\theta}(x))^2]} d \theta\right) \right].   
\end{align*}
Note that
\begin{align*}
\chi(g, \dist_x, \sigma_y^2) + \frac{ \sigma_e^2}{2 \sigma_y^2} = \chi^N(g, \dist_x, \sigma_y^2, \sigma_e^2).
\end{align*}

Finally, the {\em sharp complexity with noise} is defined as
\begin{align*}
&\chi^{\#N}(g, \dist_x, \sigma_e^2, \e^2) \\
&:= -\log \left[ \prob_\theta[ \expectation_{x \sim \dist_x, \noise \sim \mathcal{N}(0,\sigma_e^2)} [(g(x) + \noise - f_{\theta}(x))^2] \leq \e^2] \right].
\end{align*}

The following two lemmas establish some relationships between these notions of complexity.
\begin{lemma}[Sharp complexity: with and without noise]\label{lem:sharpwithnoisenonoise}
For every $\dist_x$, every $g : \inspace \rightarrow \outspace$, and $\e^2 > 0$ we have:
\begin{align*}
&\chi^{\#N}(g, \dist_x, \sigma_e^2, \e^2) = \chi^\#(g, \dist_x, \e^2 - \sigma_e^2).
\end{align*}
\end{lemma}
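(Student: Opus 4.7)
The plan is to reduce everything to the observation that expanding the square and taking the expectation over the independent zero-mean Gaussian noise adds exactly $\sigma_e^2$ to the quadratic deviation. Concretely, I would fix an arbitrary $\theta$ and compute
\begin{align*}
\expectation_{x \sim \dist_x, \eta \sim \mathcal{N}(0,\sigma_e^2)}\!\left[(g(x) + \eta - f_\theta(x))^2\right]
&= \expectation_{x}\!\left[(g(x) - f_\theta(x))^2\right] \\
&\quad + 2\,\expectation_{\eta}[\eta]\cdot \expectation_{x}[g(x)-f_\theta(x)] \\
&\quad + \expectation_{\eta}[\eta^2],
\end{align*}
which, using independence of $\eta$ and $x$ together with $\expectation[\eta]=0$ and $\expectation[\eta^2]=\sigma_e^2$, collapses to $\expectation_{x}[(g(x)-f_\theta(x))^2] + \sigma_e^2$.

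Next I would translate this identity into an equivalence of events. For any $\e^2>0$, the event
\[
\left\{\theta : \expectation_{x, \eta}[(g(x) + \eta - f_\theta(x))^2] \leq \e^2\right\}
\]
coincides (pointwise in $\theta$) with
\[
\left\{\theta : \expectation_{x}[(g(x) - f_\theta(x))^2] \leq \e^2 - \sigma_e^2\right\}.
\]
In particular, these two sets have identical probability under the prior $P$.

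Taking $-\log$ of the prior probability of either set then yields the required identity $\chi^{\#N}(g,\dist_x,\sigma_e^2,\e^2) = \chi^{\#}(g,\dist_x,\e^2-\sigma_e^2)$ directly from the definitions given in the appendix. The only mild subtlety is bookkeeping when $\e^2 \leq \sigma_e^2$: the right-hand event is empty, consistent with the left-hand event also being empty (since the noise alone contributes $\sigma_e^2$ to the inner expectation), and both complexities equal $+\infty$. I do not anticipate a real obstacle here; the lemma is essentially an algebraic identity made possible by the independence and the zero-mean, fixed-variance structure of the Gaussian noise, so the proof is a one-line variance decomposition followed by matching definitions.
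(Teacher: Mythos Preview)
Your proposal is correct and follows essentially the same approach as the paper: expand the square, use $\expectation[\eta]=0$ and $\expectation[\eta^2]=\sigma_e^2$ (with independence) to reduce the noisy quadratic deviation to the noiseless one plus $\sigma_e^2$, then match the events and apply the definitions. Your explicit handling of the degenerate case $\e^2\le\sigma_e^2$ is a small addition beyond what the paper writes out.
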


\begin{proof}
\begin{align*}
&\chi^{\#N}(g, \dist_x, \sigma_e^2, \e^2) \\
& = -\log \left[ \prob_\theta \left[ \expectation_{x \sim \dist_x, \noise \sim \mathcal{N}(0,\sigma_e^2)} [(g(x) + \noise - f_{\theta}(x))^2] \leq \e^2 \right] \right] \\
&= -\log \left[ \prob_\theta \left[ \expectation_{x \sim \dist_x, \noise \sim \mathcal{N}(0,\sigma_e^2)} [(g(x) - f_{\theta}(x))^2] \leq \e^2 - \sigma_e^2 \right] \right] \\
&= \chi^\#(g, \dist_x, \e^2 - \sigma_e^2),
\end{align*}
where in the second equality we write $(g(x) +\noise - f_\theta(x))^2$ as the sum of $(g(x)-f_\theta(x))^2$, $2\noise(g(x) - f_\theta(x))$ and $\noise^2$ and use the fact that $\expectation[\noise] = 0$ and $\expectation[\noise^2] = \sigma_e^2$.
\end{proof}

\begin{lemma}[True versus sharp complexity] \label{lem:trueversussharp}
For every $\dist_x$, every $g : \inspace \rightarrow \outspace$, and $\sigma_y^2, \sigma_e^2, \e^2 > 0$ we have:
\[
\chi^N(g, \dist_x, \sigma_y^2, \sigma_e^2) \leq \chi^{\#N}(g, \dist_x, \sigma_e^2, \e^2) + \frac{\e^2}{2\sigma_y^2}.
\]
\end{lemma}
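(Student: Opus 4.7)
The inequality is a one-line estimate: I want to lower-bound the integral defining $e^{-\chi^N}$ by restricting the domain of integration to the ``good'' set appearing in the definition of $\chi^{\#N}$, on which the exponent has a uniform lower bound. Taking $-\log$ then flips the direction and yields the claimed upper bound.

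\textbf{Key steps.} First, introduce the shorthand $D(\theta) := \expectation_{x \sim \dist_x,\ \noise \sim \mathcal{N}(0,\sigma_e^2)}[(g(x) + \noise - f_\theta(x))^2]$ and the set $A_\e := \{\theta : D(\theta) \leq \e^2\}$. By definition, $\prob_\theta[A_\e] = e^{-\chi^{\#N}(g,\dist_x,\sigma_e^2,\e^2)}$. Second, lower-bound the integral in the definition of $\chi^N$ by restricting to $A_\e$:
\begin{align*}
\int_\theta P(\theta)\, e^{-\frac{1}{2\sigma_y^2} D(\theta)}\, d\theta
&\geq \int_{A_\e} P(\theta)\, e^{-\frac{1}{2\sigma_y^2} D(\theta)}\, d\theta \\
&\geq e^{-\frac{\e^2}{2\sigma_y^2}} \int_{A_\e} P(\theta)\, d\theta \\
&= e^{-\frac{\e^2}{2\sigma_y^2}}\, e^{-\chi^{\#N}(g,\dist_x,\sigma_e^2,\e^2)},
\end{align*}
where the second inequality uses $D(\theta)\leq \e^2$ on $A_\e$ to bound the exponential from below pointwise. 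Third, take the negative logarithm of both sides: since $-\log$ is monotone decreasing, the inequality flips, giving exactly
\[
\chi^N(g,\dist_x,\sigma_y^2,\sigma_e^2) \leq \chi^{\#N}(g,\dist_x,\sigma_e^2,\e^2) + \frac{\e^2}{2\sigma_y^2}.
\]

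\textbf{Anticipated obstacles.} There are essentially none; this is a routine ``restrict and bound the integrand'' argument. The only thing to check is that the measurability of $A_\e$ is fine (the map $\theta \mapsto D(\theta)$ is continuous under mild regularity on $f_\theta$, which holds for the models considered in the paper), so the integral over $A_\e$ and the probability $\prob_\theta[A_\e]$ refer to the same quantity. No additional assumptions on $g$, $\dist_x$, or $P$ beyond those implicit in the definitions are required.
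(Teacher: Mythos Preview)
Your proof is correct and is essentially the same argument as the paper's: both amount to the pointwise inequality $e^{-D(\theta)/(2\sigma_y^2)} \geq e^{-\e^2/(2\sigma_y^2)}\,\mathds{1}\{D(\theta)\leq \e^2\}$, integrated against the prior and followed by $-\log$. The paper phrases this via $e^{x}\geq \mathds{1}\{x\geq 0\}$ (and carries an auxiliary scaling parameter $\alpha$ that is ultimately set to $1$), while you phrase it as ``restrict to $A_\e$ and bound the exponent''; the content is identical.
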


\begin{proof}
\begin{align*}
&\chi^{\#N}(g, \sigma_e^2,\e^2) \\
&=
-\log \left( \int_{\theta} P(\theta) \mathds{1} \left\{ \expectation_{x \sim \dist_x, \noise \sim \mathcal{N}(0,\sigma_e^2)} [(g(x) + \noise - f_{\theta}(x))^2] \leq \e^2 \right\} d \theta \right) \\
&\stackrel{\alpha>0}{=} -\log \left( \int_{\theta} P(\theta) \mathds{1} \left\{ \frac{\alpha}{2\sigma_y^2} \expectation_{x \sim \dist_x, \noise \sim \mathcal{N}(0,\sigma_e^2)} [(g(x) + \noise - f_{\theta}(x))^2] \leq \frac{\alpha \e^2}{2\sigma_y^2} \right\} d \theta \right) \\
&\stackrel{e^{ x} \geq \mathds{1}\{x \geq 0 \}}{\geq} 
\!\!\!\!-\log \left( \int_{\theta} P(\theta) e^{\frac{\alpha \e^2}{2\sigma_y^2} -\frac{\alpha}{2\sigma_y^2} \expectation_{x \sim \dist_x, \noise \sim \mathcal{N}(0,\sigma_e^2)} [(g(x) + \noise - f_{\theta}(x))^2]} d \theta \right) \\
&= \chi^N(g, \sigma_y^2/\alpha, \sigma_e^2) - \frac{\alpha \e^2}{2\sigma_y^2}.
\end{align*}
\end{proof}

The sharp complexity is very convenient to work with. Hence we will formulate our final bound in terms of the sharp complexity. The reason we call it {\em sharp} complexity is that the region of $\theta$ we integrate over is defined by an indicator function whereas for the true complexity the ``boundary'' of integration is defined by a smooth function.

Let us now look more closely at the divergence where we assume the data model (\ref{equ:datamodel}) and that the true hypothesis is $g$. We have
\begin{align}
&D(Q \| P) \nonumber \\
&= \int \frac{P(\theta) e^{- \frac{1}{2 \sigma_y^2} \sum_{n=1}^{N} (y_n - f_{\theta}(x_n))^2}}{\int P(\theta') e^{- \frac{1}{2 \sigma_y^2} \sum_{n=1}^{N} (y_n - f_{\theta'}(x_n))^2} d \theta'} \cdot \nonumber \\
&\cdot \log \left(
\frac{ e^{- \frac{1}{2 \sigma_y^2} \sum_{n=1}^{N} (y_n - f_{\theta}(x_n))^2}}{\int P(\theta') e^{- \frac{1}{2 \sigma_y^2} \sum_{n=1}^{N} (y_n - f_{\theta'}(x_n))^2} d \theta'} 
\right) d \theta \nonumber \\
 &\leq \chi^E(g,\samp_x, \samp_{\epsilon},\sigma_y^2/N)- \frac{N}{2\sigma_y^2} L_\samp(Q), 
 \label{eq:divergenceupperbound}
 \end{align}
where in the last inequality we used the fact that we use a clipped version of a square loss.
Therefore the expectation over $S \sim \dist^N$ of the square root term of the right-hand side of the PAC-Bayes bound \eqref{equ:pacbound} can be upper-bounded as
\begin{align}
&\expectation_{\samp \sim \dist^{N}} \left[C\sqrt{\frac{D(Q \| P)}{2 N}} \right] \nonumber \\
&\stackrel{\text{By } \eqref{eq:divergenceupperbound}}{\leq} \expectation_{\samp \sim \dist^{N}} \left[C \sqrt{\frac{\chi^E(g, \samp_x,\samp_\e, \sigma^2_y/N) - \frac{N}{2\sigma_y^2} L_\samp(Q) }{2 N}} \right] \nonumber \\
&\stackrel{\sqrt{\cdot} \text{ concave}}{\leq} 
\frac{C}{\sqrt{2}}\sqrt{\frac{\expectation_{\samp \sim \dist^{N}} \left[\chi^E(g, \samp_x,\samp_\e, \sigma^2_y/N) \right]}{N} -\frac{\widehat{L}}{2\sigma_y^2}} \label{eq:generalboundempcomp},
\end{align}
where we denoted $\expectation_{\samp \sim \dist^N}[L_\samp(Q)]$ by $\hat{L}$. Before we proceed we state a helpful lemma.

\begin{lemma} \label{lem:megaineq}
Let $X$ and $Y$ be independent random variables and $f(X, Y)$ be a non-negative function. Then
\begin{align*}
\expectation_X \left[ \ln \left( \expectation_Y \left[ e^{-f(X, Y)}\right] \right)  \right]
\geq \ln \left( \expectation_Y\left[e^{-\expectation_X[f(X, Y)]}\right]\right).
\end{align*}
\end{lemma}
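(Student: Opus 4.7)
My plan is to derive the inequality from the Gibbs (Donsker--Varadhan) variational characterization of the log-moment generating function. Recall that for any probability measure $P$ on the range of $Y$ and any measurable $g$ with finite exponential moment,
\begin{align*}
\ln \expectation_{Y \sim P}[e^{g(Y)}] = \sup_{Q \ll P} \left\{ \expectation_{Y \sim Q}[g(Y)] - D(Q \| P) \right\}.
\end{align*}
Since $f \geq 0$ we have $e^{-f(X,Y)} \in (0, 1]$, so all the exponential integrals that appear are finite and Fubini applies freely (which I will use implicitly below).

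First, I would apply the lower-bound direction of this identity for each fixed $x$ with $g(y) = -f(x, y)$: for any probability measure $Q$ on the range of $Y$ that does not depend on $x$,
\begin{align*}
\ln \expectation_Y [ e^{-f(x, Y)} ] \geq \expectation_{Y \sim Q} [-f(x, Y)] - D(Q \| P_Y),
\end{align*}
where $P_Y$ is the law of $Y$. Taking $\expectation_X$ on both sides and exchanging expectations on the right gives
\begin{align*}
\expectation_X \!\left[\ln \expectation_Y [e^{-f(X, Y)}]\right] \geq \expectation_{Y \sim Q}\!\left[ -\expectation_X[f(X, Y)] \right] - D(Q \| P_Y).
\end{align*}
The left-hand side is independent of $Q$, so I would then take the supremum over $Q$ on the right. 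Applying the variational identity once more, this time with $g(y) = -\expectation_X[f(X, y)]$, yields
\begin{align*}
\sup_{Q} \left\{ \expectation_{Y \sim Q}\!\left[ -\expectation_X[f(X, Y)] \right] - D(Q \| P_Y) \right\} = \ln \expectation_Y\!\left[ e^{-\expectation_X[f(X, Y)]} \right],
\end{align*}
which is exactly the right-hand side of the lemma; chaining the two bounds concludes.

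There is essentially no technical obstacle beyond recognizing the correct variational formula and applying it in both directions; the remainder is bookkeeping. An equivalent and slightly more ``soft'' route is to observe that $\Phi(g) := \ln \expectation_Y[e^{g(Y)}]$ is convex in $g$ (an immediate consequence of H\"older's inequality applied to $e^{\lambda g_1 + (1-\lambda) g_2}$) and then apply Jensen's inequality to the random function $g_X(y) := -f(X, y)$, giving $\Phi(\expectation_X[g_X]) \leq \expectation_X[\Phi(g_X)]$, which unpacks to exactly the lemma. I prefer the variational route because it sidesteps any appeal to Jensen's inequality in an infinite-dimensional function space, which would otherwise be the only subtle step.
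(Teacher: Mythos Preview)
Your proof is correct and takes a genuinely different route from the paper's.

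The paper restricts to the case where $X$ has finite discrete support $\{x_1,\ldots,x_I\}$ and builds a telescoping chain: at step $j$ it merges the first $j$ terms of $\sum_i p(X=x_i)\ln\expectation_Y[e^{-f(x_i,Y)}]$ into a single term of the form $\big(\sum_{i\leq j}p_i\big)\ln\expectation_Y\big[e^{-\sum_{i\leq j}p_i f(x_i,Y)/\sum_{i\leq j}p_i}\big]$, and each merge is justified by a direct application of H\"older's inequality with exponents $1/p=\frac{\sum_{i\leq j}p_i}{\sum_{i\leq j+1}p_i}$ and $1/q=\frac{p_{j+1}}{\sum_{i\leq j+1}p_i}$. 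In effect the paper is re-proving, one increment at a time, that $g\mapsto\ln\expectation_Y[e^{g(Y)}]$ is convex---which is exactly your second, ``soft'' route.

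Your Donsker--Varadhan argument replaces the induction by a single application of the variational identity in both directions, and it works directly for general $X$ without the finite-support reduction. It is shorter and more conceptual; the paper's approach is more elementary in that it invokes nothing beyond H\"older, at the price of an explicit chain and the restriction to discrete $X$ (which the paper simply asserts suffices for its application without spelling out the extension).
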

\begin{proof}
We limit our proof to the simple case where the distributions are discrete and have a finite support, lets say from $\{1, \cdots, I\}$. We claim that for $1 \leq j <I$,
\begin{align*}
& (\sum_{i=1}^{j} p(X=x_i))
\ln \left( \expectation_Y\left[e^{-\frac{\sum_{i=1}^{j}p(X=x_i) f(x_i, Y)}{\sum_{i=1}^{j} p(X=x_i) }}\right]\right) + 
\sum_{i=j+1}^{I} p(X=x_i) \left[ \ln \left( \expectation_Y \left[ e^{-f(x_{i}, Y)}\right] \right)  \right] \\
\geq &
 (\sum_{i=1}^{j+1} p(X=x_i))
\ln \left( \expectation_Y\left[e^{-\frac{\sum_{i=1}^{j+1} p(X=x_i) f(x_i, Y)}{\sum_{i=1}^{j+1} p(X=x_i) }}\right]\right) +
\sum_{i=j+2}^{I} p(X=x_i) \left[ \ln \left( \expectation_Y \left[ e^{-f(x_{i}, Y)}\right] \right)  \right].
\end{align*}
This gives us a chain of inequalities. Note that the very first term in this chain is equal to the left-hand side of the desired inequality and the very last term is equal to the right-hand side of the inequality. 

Consider the $j$-th such inequality. Cancelling common terms, it requires us to prove
\begin{align*}
& (\sum_{i=1}^{j} p(X=x_i))
\ln \left( \expectation_Y\left[e^{-(\sum_{i=1}^{j} \frac{p(X=x_i) f(x_i, Y)}{\sum_{i=1}^{j} p(X=x_i) })}\right]\right) + 
 p(X=x_{j+1}) \left[ \ln \left( \expectation_Y \left[ e^{-f(x_{j+1}, Y)}\right] \right)  \right] \\
\geq &
 (\sum_{i=1}^{j+1} p(X=x_i))
\ln \left( \expectation_Y\left[e^{-(\sum_{i=1}^{j+1} \frac{p(X=x_i) f(x_i, Y)}{\sum_{i=1}^{j+1} p(X=x_i) })}\right]\right).
\end{align*}
Taking the prefactors into the logs, combining the two log terms on the left-hand side, and finally cancelling the logs, the claimed inequality is true iff
\begin{align*}
\expectation_Y\!\!\! \left[e^{-\frac{\sum_{i=1}^{j} p(X=x_i) f(x_i, Y)}{\sum_{i=1}^{j} p(X=x_i) }}\right]^{\frac{\sum_{i=1}^{j} p(X=x_i)}{\sum_{i=1}^{j+1} p(X=x_i)}} 
   \!\!\!\!\!\! \expectation_Y \!\!\! \left[ e^{-f(x_{j+1}, Y)}\right] ^{\frac{p(X=x_{j+1})}{\sum_{i=1}^{j+1} p(X=x_i)}}
\geq &
\expectation_Y \!\!\! \left[e^{-\frac{\sum_{i=1}^{j+1} p(X=x_i) f(x_i, Y)}{\sum_{i=1}^{j+1} p(X=x_i) }}\right].
\end{align*}
But this statement is just an instance of the Hoelder inequality with $1/p+1/q=1$, where $1/p=\frac{\sum_{i=1}^{j} p(X=x_i)}{\sum_{i=1}^{j+1} p(X=x_i)}$ and $1/q=\frac{p(X=x_{j+1})}{\sum_{i=1}^{j+1} p(X=x_i)}$.
\end{proof}

Now we bound the complexity term from \eqref{eq:generalboundempcomp} further. We have for every $\e^2 > 0$
\begin{align}
&\expectation_{S \sim \dist^{N}}[\chi^E(g, \samp_x,\samp_\e, \sigma_y^2/ N)] \nonumber \\
&= -\expectation_{S \sim \dist^{N}} \left[ \log  \left( \int_{\theta} P(\theta) e^{-\frac{1}{2\sigma^2_y} \sum_{n=1}^{N} (g(x_n) + \e_n  - f_{\theta}(x_n))^2} d \theta\right)   \right] \nonumber \\
&\stackrel{\text{Lem~\ref{lem:megaineq}}}{\leq} -\log \left( \int_{\theta} P(\theta) e^{-\frac{N}{2\sigma^2_y} \expectation_{\stackrel{x \sim \dist_x}{ \noise \sim \mathcal{N}(0,\sigma_e^2)}} [(g(x) + \noise - f_\theta(x))^2]} d \theta\right) \nonumber \\
&= \chi^N(g, \dist_x, \sigma^2_y/N, \sigma_e^2)
\stackrel{\text{Lem~\ref{lem:trueversussharp}}}{\leq} \chi^{\#N}(g, \dist_x, \sigma_e^2, \e^2) + \frac{\e^2 N }{2 \sigma^2_y} \nonumber \\
&\stackrel{\text{Lem}~\ref{lem:sharpwithnoisenonoise}}{=} \chi^\#(g, \dist_x, \e^2 - \sigma_e^2) + \frac{\e^2 N }{2 \sigma^2_y}. \label{eq:generalboundtruecomp} 
\end{align}
Hence by combining \eqref{eq:generalboundempcomp} and \eqref{eq:generalboundtruecomp} we get that for every $\e^2 > 0$ the expectation over $S \sim \dist^N$ of the PAC-Bayes bound can be bounded as
\begin{align}
&\expectation_{\samp \sim \dist^{N}}\left[L_{\samp}(Q) + C\sqrt{\frac{D(Q \| P) }{2 N}}\right] \nonumber \\
&\leq \widehat{L} + \frac{C}{\sqrt{2}}\sqrt{\frac{\chi^\#(g, \dist_x, \e^2 - \sigma_e^2)}{N}  + \frac{1}{2\sigma^2_y}\left(\e^2 - \widehat{L}\right)} \label{eq:generalizationbound}.
\end{align}
Let $\beta \in (0,1]$. Recall that parameter $\sigma_y^2$ is chosen freely by the learning algorithm. By the assumption of the theorem we have 
\begin{equation}\label{eq:lwrbndonLd}
L_\dist(P) \geq 2\sigma_e^2. 
\end{equation}
Because $g \in \text{supp}(P)$, which in words means that $g$ is realizable with prior $P$, then
\begin{align}
\lim_{\sigma_y^2 \rightarrow 0} \widehat{L}
&= \lim_{\sigma_y^2 \rightarrow 0} \expectation_{\samp \sim \dist^N} [ L_\samp(Q) ]  \nonumber\\
&= \expectation_{\samp \sim \dist^N} \left[ \lim_{\sigma_y^2 \rightarrow 0} L_\samp(Q) \right]  \nonumber\\
&\leq \expectation_{\samp \sim \dist^N} L_\samp(g) \nonumber\\
&= \sigma_e^2 . \label{eq:Lhatzero}
\end{align}

where in the second equality we used Lebesgue dominated convergence theorem and in the inequality we used the fact that the smaller $\sigma_y^2$ gets the bigger the penalty on $\sum_n (y_n - f_\theta(x_n))^2$ in $Q$, which means that, in the limit, $L_\samp(Q)$ is smaller than $L_\samp(h)$ for every fixed $h \in \text{supp}(P)$ and in particular for $g$. 

On the other hand, by an analogous argument, we have
\begin{align}
\lim_{\sigma_y^2 \rightarrow \infty} \widehat{L} 
&= \expectation_{\samp \sim \dist^N} \left[ L_\samp(P) \right] \nonumber \\
&= \expectation_{\samp \sim \dist^N} \left[  \expectation_{\theta \sim P} \left[\frac{1}{N} \sum_{i=1}^N \ell(f_\theta, y_n) \right] \right] \nonumber \\
&=  \expectation_{\theta \sim P} \left[ \expectation_{\samp \sim \dist^N} \left[\frac{1}{N} \sum_{i=1}^N \ell(f_\theta, y_n)  \right] \right] \nonumber \\
&= L_\dist(P)  \nonumber \\
&\geq 2 \sigma_e^2, \label{eq:Lhatinfty}
\end{align}
where we used the independence of $P$ and $\samp$ in the third equality and \eqref{eq:lwrbndonLd} in the inequality.

Equations \eqref{eq:Lhatzero} and \eqref{eq:Lhatinfty} and the fact that $\widehat{L}$ is a continuous function of $\sigma_y^2$ give us that there exists $\sigma_{\text{alg}}^2 > 0$ such that
$$ 
\expectation_{\samp \sim \dist^N} \left[L_\samp(Q(\sigma_{\text{alg}}^2)) \right] = (1 + \beta) \sigma_e^2,
$$
where we wrote $Q(\sigma_{\text{alg}}^2)$ to explicitly express the dependence of $Q$ on $\sigma_y^2$. With this choice for $\sigma_y^2$ and setting $\e^2 = (1+\beta)\sigma_e^2$ applied to \eqref{eq:generalizationbound} we arrive at the statement of Theorem~\ref{thm:generalizationbound}. Note that with this choice of parameters term $\frac1{2\sigma_y^2}(\e^2 - \widehat{L})$ from \eqref{eq:generalizationbound} is equal to $0$.

\section{Omitted proofs}\label{app:proofs}

\begin{lemma}\label{lem:prefsumstonumbers}
Let $\{x_i\}_{i=1}^{k}$ be a set of real numbers. For $i=1, \cdots, k$, define the partial sums $X_i=\sum_{j=1}^{i} x_j$. Then
\begin{align*}
\sum_{i=1}^{k} X_i^2 \geq \frac18 \sum_{i=1}^{k} x_i^2.
\end{align*}
\end{lemma}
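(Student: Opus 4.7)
The plan is to express each $x_i$ as a difference of consecutive partial sums, bound $x_i^2$ pointwise in terms of $X_i^2$ and $X_{i-1}^2$, and then sum. Because the constant $\tfrac18$ on the right-hand side is quite loose, only an elementary inequality is needed; a sharper argument could even replace $\tfrac18$ by $\tfrac14$.

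First I would set $X_0 := 0$ so that $x_i = X_i - X_{i-1}$ for every $i \in \{1,\dots,k\}$. Then the elementary inequality $(a-b)^2 \leq 2a^2 + 2b^2$ (which is just $(a+b)^2 \geq 0$ rewritten) gives, for each $i$,
\[
x_i^2 \;=\; (X_i - X_{i-1})^2 \;\leq\; 2 X_i^2 + 2 X_{i-1}^2.
\]

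Next I would sum this inequality over $i = 1, \dots, k$. On the right, each $X_j^2$ with $1 \leq j \leq k-1$ appears exactly twice (once from the $j$-th term, once from the $(j{+}1)$-st), while $X_k^2$ appears once and the $X_0^2 = 0$ contribution vanishes. Thus
\[
\sum_{i=1}^{k} x_i^2 \;\leq\; 4 \sum_{i=1}^{k} X_i^2 - X_1^2 - 2 X_k^2 \;\leq\; 4 \sum_{i=1}^{k} X_i^2,
\]
and rearranging yields $\sum_{i=1}^k X_i^2 \geq \tfrac14 \sum_{i=1}^k x_i^2 \geq \tfrac18 \sum_{i=1}^k x_i^2$, which is the claim.

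There is essentially no obstacle: the only bookkeeping step is to check the multiplicity with which each $X_j^2$ appears after summation, and this is immediate. The stated constant $\tfrac18$ leaves a factor of two of slack, so even a slightly wasteful version of the pointwise bound would still suffice.
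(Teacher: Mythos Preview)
Your argument is correct and follows essentially the same route as the paper: bound each $x_i^2$ by a multiple of $X_{i-1}^2 + X_i^2$ and then sum. The paper obtains $X_{i-1}^2 + X_i^2 \geq \tfrac14 x_i^2$ via a two-case split (either $|X_{i-1}| \geq \tfrac12|x_i|$, or else $|X_i| \geq \tfrac12|x_i|$ by the triangle inequality), whereas your direct use of $(a-b)^2 \leq 2a^2 + 2b^2$ is cleaner and yields the sharper pointwise bound $X_{i-1}^2 + X_i^2 \geq \tfrac12 x_i^2$, hence the constant $\tfrac14$ instead of $\tfrac18$. One small slip in the bookkeeping: after summing, the right-hand side is $4\sum_i X_i^2 - 2X_k^2$, not $4\sum_i X_i^2 - X_1^2 - 2X_k^2$ (the term $X_1^2$ receives coefficient $4$ just like every other $X_j^2$ with $1\le j\le k-1$), but this does not affect the final inequality.
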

\begin{proof}
Define $X_0=0$. Note that for $i=1, \cdots, k$, $X_i = X_{i-1}+x_i$. Hence if $|X_{i-1}| \leq \frac12 |x_i|$ then $|X_{i}|\geq\frac12 |x_i|$ so that $X_{i}^2\geq\frac14 x_i^2$. And if $|X_{i-1}| \geq \frac12 |x_i|$ then $X_{i-1}^2\geq \frac14 x_i^2$. Therefore, $X_{i-1}^2+X_{i}^2 \geq \frac14 x_i^2$. Summing the last inequality over $i=1, \cdots, k$, and adding $X_k^2$ to the left hand side we get $2 \sum_{i=1}^{k} X_i^2 \geq  \frac14 \sum_{i=1}^{k} x_i^2$.
\end{proof}

\begin{lemma}\label{lem:l2lwrbnd}
Let $f(x)= \sum_{i=1}^{k} w_i [x-b_i]_+$, where $0 \leq b_1 \leq \cdots \leq b_k \leq 1 = b_{k+1}$. For $i=1, \cdots, k$, define the partial sums $W_i=\sum_{j=1}^{i} w_j$. Then 
\begin{align*}
\|f\|^2 \geq \frac{1}{12} \sum_{i=1}^{k} W_i^2 (b_{i+1} - b_i)^3.
\end{align*}
\end{lemma}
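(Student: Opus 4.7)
The plan is to exploit the fact that the function $f$ is piecewise affine with known slopes on each interval between consecutive breakpoints, and then use a clean variance-of-affine-function identity to extract the claimed lower bound interval by interval.

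First I would observe that for $x \in [b_i, b_{i+1}]$, the terms $[x-b_j]_+$ with $j>i$ vanish while those with $j \leq i$ equal $x - b_j$, so on this interval $f$ is affine with slope exactly $W_i = \sum_{j=1}^{i} w_j$. Writing $a_i := f(b_i)$ and $L_i := b_{i+1} - b_i$, we therefore have $f(x) = a_i + W_i(x-b_i)$ for $x\in [b_i,b_{i+1}]$.

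Next I would compute $\int_{b_i}^{b_{i+1}} f(x)^2\, dx$ by substituting $t = x - b_i$ and using the identity
\begin{equation*}
\int_0^L (a + bt)^2\, dt = \Bigl(a + \tfrac{bL}{2}\Bigr)^{\!2} L + \frac{b^2 L^3}{12},
\end{equation*}
which is just the decomposition of the second moment of the affine function on $[0,L]$ into its mean-squared plus variance (the variance of the uniform-on-$[0,L]$ distribution being $L^2/12$, scaled by $b^2$). Applying this with $a = a_i$, $b = W_i$, $L = L_i$ gives
\begin{equation*}
\int_{b_i}^{b_{i+1}} f(x)^2\, dx = \Bigl(a_i + \tfrac{W_i L_i}{2}\Bigr)^{\!2} L_i + \frac{W_i^2 L_i^3}{12} \geq \frac{W_i^2 L_i^3}{12}.
\end{equation*}

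Finally I would sum over $i=1,\dots,k$ and note that $f \equiv 0$ on $[0, b_1]$ (since all activations are off), so
\begin{equation*}
\|f\|^2 = \int_0^1 f(x)^2\, dx \geq \sum_{i=1}^{k}\int_{b_i}^{b_{i+1}} f(x)^2\, dx \geq \frac{1}{12}\sum_{i=1}^k W_i^2 (b_{i+1}-b_i)^3,
\end{equation*}
which is the claim. There is no real obstacle here; the only step that requires any thought is recognizing the ``mean-squared plus variance'' decomposition so that the cross term $a_i W_i L_i^2$ (which has no definite sign) is absorbed into a non-negative square and can be safely discarded. No assumption on the signs of the $w_i$ (or of $W_i$) is needed, which is crucial because Lemma~\ref{lem:prefsumstonumbers} that will likely be invoked afterward allows the partial sums to cancel; the bound must be valid term by term regardless of sign.
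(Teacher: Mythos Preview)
Your proof is correct and follows essentially the same approach as the paper: both observe that $f$ is affine with slope $W_i$ on each interval $[b_i,b_{i+1}]$ and then lower-bound $\int_{b_i}^{b_{i+1}} f^2$ by the minimum of $\int (a+W_i t)^2\,dt$ over the intercept $a$, which equals $\tfrac{1}{12}W_i^2(b_{i+1}-b_i)^3$. Your mean-plus-variance identity is just an explicit way of locating that minimizer (at $a=-W_iL_i/2$), where the paper simply asserts ``by minimizing the integral of the square of an affine function with slope $W_i$ over the choice of the parameters.''
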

\begin{proof}
Note that there are $k$ non-overlapping intervals, namely [$b_1, b_2], \cdots, [b_k, 1]$, where the function is potentially non-zero. On the $i$-th interval the function is linear (or more precisely, affine) with a slope of $W_i$ and, by assumption, the interval has length $b_{i+1}-b_i$. On this interval the integral of $f(x)^2$ must have a value of at least $\frac1{12} W_i^2 (b_{i+1}-b_i)^3$. The last statement follows by minimizing the integral of the square of an affine function with slope $W_i$ over the choice of the parameters.
\end{proof}

\begin{lemma}\label{lem:hardone}
Let $f_\theta(x)= \sum_{i=1}^{k} w_i [x-b_i]_+$, where $0 \leq b_1 \leq \cdots \leq b_k < +\infty$.
If $\|f_\theta\|^2 < \frac{1}{12(k+1)^5}$ then there exists $\theta^*$ such that $f_{\theta^*} \equiv_{[0,1]} 0$ and
\begin{align*}
\|\theta - \theta^*\|^2 \leq O \left( k^{13/5} \|f_\theta\|^{4/5} \right).
\end{align*}
\end{lemma}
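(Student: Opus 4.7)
My plan is to run a normalized negative gradient flow on $L(\theta) := \|f_\theta\|_{L^2([0,1])}^2$ starting from the given $\theta$, freezing every coordinate $(w_i,b_i)$ with $b_i \notin [0,1]$ (those do not affect $f_\theta|_{[0,1]}$). Under $\dot\theta = -\nabla L/\|\nabla L\|_2$ one has $\|\dot\theta\| \equiv 1$ and $\dot L = -\|\nabla L\|_2$, so the displacement equals the elapsed time. The plan reduces to establishing a gradient lower bound
\[
\|\nabla L(\theta)\|_2 \;\geq\; c\, k^{-13/10}\, L(\theta)^{4/5}
\]
whenever $L(\theta) < 1/(12(k+1)^5)$, with $c>0$ universal. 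Granted this, $\frac{d}{dt} L^{1/5} \leq -c\, k^{-13/10}/5$, so $L$ hits $0$ at some $t^{\ast} \leq (5/c)\, k^{13/10}\, L(\theta(0))^{1/5}$; taking $\theta^{\ast} := \theta(t^{\ast})$ yields $f_{\theta^{\ast}} \equiv_{[0,1]} 0$ and $\|\theta - \theta^{\ast}\|_2^2 \leq (t^{\ast})^2 = O(k^{13/5}\, \|f_\theta\|^{4/5})$, exactly the claim. The hypothesis $\|f_\theta\|^2 < 1/(12(k+1)^5)$ is what guarantees the gradient bound holds throughout the flow, since $L$ is non-increasing.

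\textbf{Proving the gradient bound.} Writing $F(b) := \int_b^1 f_\theta(x)\,dx$, the partials are
\[
\frac{\partial L}{\partial w_i} = 2\int_0^1 f_\theta(x)[x-b_i]_+\,dx, \qquad \frac{\partial L}{\partial b_i} = -2 w_i F(b_i).
\]
I would split on $M := \|f_\theta\|_{L^\infty([0,1])}$ at a threshold $\tau$ (a specific power of $L$ with a polynomial-in-$k$ correction, chosen to equalize the two regimes). In the \emph{peaky} regime $M \geq \tau$, I would adapt the integration-by-parts identity sketched in the commented-out attempt in Section~\ref{sec:complexityGaussianCase}: because $f_\theta''$ is a signed combination of point masses at the $b_i$'s with coefficients $w_i$, integrating $f_\theta''(y) F(y)$ from $0$ up to an extremizer $x^{\ast}$ of $f_\theta$ telescopes to $M^2/2$, so by pigeonhole some $i$ satisfies $|w_i F(b_i)| \geq M^2/(2k)$ and hence $|\partial L/\partial b_i| \geq M^2/k$. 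In the \emph{flat} regime $M < \tau$, H\"older gives $\|f_\theta\|_1 \geq L/M$, and the scaling direction $v = (-w/\|w\|,0)$ has unit norm with $\nabla L\cdot v = -2L/\|w\|$, so $\|\nabla L\|_2 \geq 2L/\|w\|$. To bound $\|w\|$ I would use Lemma~\ref{lem:l2lwrbnd} on the widest interval $[b_{i^{\ast}},b_{i^{\ast}+1}]$ (length $\geq 1/(k+1)$ by pigeonhole on $\sum_i d_i \leq 1$) to get $W_{i^{\ast}}^2 \leq 12(k+1)^3 L$, propagate the estimate across adjacent breakpoints using continuity of $f_\theta$ and $|f_\theta|\leq M$, and convert the resulting partial-sum bounds into a bound on $\|w\|$ via Lemma~\ref{lem:prefsumstonumbers}.

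\textbf{Main obstacle.} The hard case is the flat regime with many tightly clustered breakpoints, where the $d_i^3$ weights in Lemma~\ref{lem:l2lwrbnd} suppress almost every $W_i^2$ term and only the widest interval gives a direct bound on a partial sum. The assumption $\|f_\theta\|^2 < 1/(12(k+1)^5)$ is calibrated precisely so that Lemma~\ref{lem:l2lwrbnd} still forces $W_{i^{\ast}}^2 < 1/(k+1)^2$ on the widest interval, providing a single solid anchor; the remaining work is propagating this anchor across each cluster of close breakpoints. Balancing the threshold $\tau$ so that both regimes yield the same exponent $4/5$ in $L$ and the combined polynomial factor $k^{-13/10}$ is delicate, and I expect the polynomial-$k$ bookkeeping, rather than any single inequality, to be the main source of technical work.
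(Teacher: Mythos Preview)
Your gradient-flow strategy is genuinely different from the paper's proof, which is a direct two-phase construction rather than a continuous flow. The paper partitions the intervals $[b_i,b_{i+1}]$ according to whether $d_i := b_{i+1}-b_i < |W_i|$ (with $W_i = \sum_{j \leq i} w_j$). In the first phase it collapses every maximal run of ``short'' intervals by shifting bias terms to the right endpoint; in the second it zeros the slopes on the surviving ``long'' intervals by adjusting weights. The point is that Lemma~\ref{lem:l2lwrbnd} gives $\sum_i W_i^2 d_i^3 \leq 12\|f_\theta\|^2$, and the dichotomy converts each summand into either $d_i^5$ (short) or $|W_i|^5$ (long); the Power Mean Inequality then produces the exponent $2/5$. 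This never requires a bound on $\|w\|$ or a \L ojasiewicz-type inequality, and it handles clustered breakpoints automatically because the construction is interval-local.

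Your flat-regime argument has a concrete gap. The plan to bound $\|w\|$ by anchoring $W_{i^\ast}$ on the widest interval and then ``propagating via continuity and $|f_\theta|\leq M$'' cannot close: continuity only yields $|W_j|\,d_j \leq 2M$, which says nothing about $|W_j|$ on short intervals. Take $k=2$, $b_1=\tfrac12$, $b_2=\tfrac12+\delta$, $w_1=A$, $w_2=-A$ with $A$ fixed and $\delta\to 0$. Then $\|w\|^2=2A^2$ stays bounded away from zero, $M=A\delta$, $L\asymp A^2\delta^2$, and for small $\delta$ one has $M\ll L^{2/5}$, so you are in the flat regime; yet $2L/\|w\|\asymp A\delta^2\ll L^{4/5}\asymp \delta^{8/5}$, so the scaling direction is useless. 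What actually rescues this configuration is the bias derivative, $|\partial L/\partial b_1|=2A\,|F(b_1)|\asymp A^2\delta\gg L^{4/5}$, but that is precisely the mechanism you reserved for the peaky regime. So the dichotomy as stated does not close, and the real difficulty is not ``polynomial-$k$ bookkeeping'' but identifying the correct gradient component when breakpoints cluster with large cancelling weights. It is perhaps telling that the commented-out attempt you are building on was left incomplete at exactly this case and replaced by the constructive argument.
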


\begin{proof}
Starting with the parameter $\theta$ that defines the function $f_\theta(x)$, we define a process of changing it until the resulting function is equal to the zero function on $[0, 1]$. Most importantly, this process does not change $\theta$ too much compared to the norm of $f_\theta(x)$. 

Note that there are two ways of setting the function to $0$ on a particular interval. Either, we can make the length of the interval to be $0$. This requires to change one of the bias terms by the length of the interval. Or we set the slope of this interval to be $0$ (assuming that the function is already $0$ at the start of the interval. Our approach uses both of those mechanisms. Let $\theta^0 \leftarrow \theta$. The process has two phases. In the first phase we change the bias terms and in the second phase we change the weights. For $x \in [0,1]$, define the partial sums $W(x)=\sum_{j: b_j \leq x} w_j$. 

\paragraph{First phase.} Let 
$S := \{[b_1,b_2], \dots, [b_{k-1},b_k],[b_k,1]\}$ and  $S_b := \{[l,r] \in S : r - l < |W(l)| \}$. Let $\{[l_0,r_0],[l_1,r_1], \dots, [l_i,r_i]\} \subseteq S_b$ be a maximal continuous subset of intervals in $S_b$. That is, for all $j \in [i]$, $r_j = l_{j+1}$ and the intervals ending at $l_0$ and starting at $r_i$ are not in $S_b$. Perform the following: for all $b_j \in [l_0,r_i]$ set $b_j \leftarrow r_i$. We do this operation for all maximal, continuous subsets of $S_b$. This finishes the first phase. Call the resulting vector of parameters $\theta^1$. We bound
\begin{align}
\|\theta^0 - \theta^1\|^2 
&\leq k \left(\sum_{[l,r] \in S_b} (r-l) \right)^2 \nonumber \\
&\leq k^{13/5} \left(\sum_{[l,r] \in S_b} (r-l)^5 \right)^{2/5} && \text{By the Power Mean Inequality} \nonumber \\
&\leq k^{13/5} \left(\sum_{[l,r] \in S_b} (r-l)^3 W(l)^2 \right)^{2/5} && \text{By definition of } S_b \nonumber \\
&\leq k^{13/5} (12 \|f_\theta\|^2)^{2/5} && \text{By Lemma~\ref{lem:l2lwrbnd}} \label{eq:first stagebound}
\end{align}

\paragraph{Second phase.} Observe that $f_{\theta^1}$ has the following properties. For every $x \in [0,1] \setminus \bigcup_{[l,r] \in S_b} [l,r)$ we have $W^1(x) = W^0(x)$. It is enough to make $W(l) = 0$ for all $[l,r]$ such that $[l,r] \in S \setminus S_b$.  Let $i_1 < i_2 < \dots < i_p$ be all $i_j$'s such that $[b_{i_j}, b_{i_j+1}] \in S \setminus S_b$. Applying Lemma~\ref{lem:prefsumstonumbers} to $\{W_{i_1}, W_{i_2} - W_{i_1}, \dots, W_{i_p} - W_{i_{p-1}}\}$ we get that
\begin{equation}\label{eq:makeallslopeszero}
8\sum_{j=1}^p W_{i_j}^2 \geq W_{i_1}^2 + (W_{i_2} - W_{i_1})^2 + \dots (W_{i_p} - W_{i_{p-1}})^2     
\end{equation}
The RHS of \eqref{eq:makeallslopeszero} gives an upper-bound on the $\|\cdot\|^2$ norm distance needed to change $w_i$'s in $\theta^1$ so that all $W_{i_j} = 0$. It is because we can change $w_1, \dots, w_{i_1}$ by at most $W_{i_1}^2$ to make $W_{i_1} = 0$ and so on for $i_2, \dots, i_p$. Call the resulting vector of parameters $\theta^2$. We bound the change in the second phase
\begin{align}
\|\theta^1 - \theta^2\|^2
&\leq 8 \sum_{j=1}^p W_{i_j}^2 && \text{\eqref{eq:makeallslopeszero}} \nonumber \\
&\leq 8k \left(\frac{1}{k} \sum_{j=1}^p |W_{i_j}^5| \right)^{2/5} && \text{Power Mean Inequality} \nonumber \\
&= 8k^{3/5} \left( \sum_{i : [b_{i}, b_{i+1}] \in S \setminus S_b } |W_i^5| \right)^{2/5} && \text{By definition} \nonumber \\
&\leq 8k^{3/5} \left( \sum_{i : [b_{i}, b_{i+1}] \in S \setminus S_b } (b_{i+1} - b_i)^3 |W_i^2| \right)^{2/5} && \text{By definition of } S_b \nonumber \\
&\leq 8 k^{3/5} \left(12 \|f_\theta\|^2 \right)^{2/5} && \text{By Lemma~\ref{lem:l2lwrbnd}} \label{eq:secondstagebound}.
\end{align}
We conclude by
\begin{align*}
\|\theta^0 - \theta^2\|^2 
&\leq 4 \max\left(\|\theta^0 - \theta^1\|^2, \|\theta^1 - \theta^2\|^2 \right) && \text{Triangle inequality} \\
&\leq 96 k^{13/5} \left(\|f_\theta\|^2 \right)^{2/5} && \text{\eqref{eq:first stagebound} and \eqref{eq:secondstagebound}}
\end{align*}

\end{proof}

\begin{lemma}\label{lem:hardwithbias}[\textbf{With} $b^{(2)}$]
Let $R \in \R_+, \theta \in B_R \cap \supp(P)$ be such that $f_\theta(x)= b^{(2)} + \sum_{i=1}^{k} w_i [x-b_i]_+$, where $0 \leq b_1 \leq \cdots \leq b_k < +\infty$. If $\|f_\theta\|^2$ is small enough, where the bound depends only on $R$ and $k$, then there exists $\theta^*$ such that $f_{\theta^*} \equiv_{[0,1]} 0$ and
\begin{align*}
\|\theta - \theta^*\|^2 \leq O \left( k^{5} R^{4/5} \|f_\theta\|^{2/5} \right) .
\end{align*}
\end{lemma}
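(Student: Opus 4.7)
The plan is to reduce Lemma~\ref{lem:hardwithbias} to Lemma~\ref{lem:hardone} by first zeroing out the global bias $b^{(2)}$ and then invoking the earlier lemma on what remains. The extra cost of eliminating $b^{(2)}$ is what degrades the $\|f_\theta\|^{4/5}$ bound of Lemma~\ref{lem:hardone} into a $\|f_\theta\|^{2/5}$ bound and introduces the $R^{4/5}$ factor.

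The first step is to bound $|b^{(2)}|$ in terms of $\|f_\theta\|_2$. Let $g(x) := f_\theta(x) - b^{(2)} = \sum_{i=1}^{k} w_i [x-b_i]_+$. Since $b_i \geq 0$ we have $g(0)=0$, and $g$ is Lipschitz on $[0,1]$ with constant $\sum_i |w_i| \leq \sqrt{k}\,\|w\|_2 \leq \sqrt{k}\,R$. Hence for $x \in [0,\delta]$ with $\delta := |b^{(2)}|/(2\sqrt{k}\,R)$ one has $|f_\theta(x)| \geq |b^{(2)}|/2$, so
\[
\|f_\theta\|_2^2 \;\geq\; \delta \cdot \bigl(|b^{(2)}|/2\bigr)^2 \;=\; \frac{|b^{(2)}|^3}{8 \sqrt{k}\, R},
\]
which, provided $\|f_\theta\|$ is small enough to ensure $\delta \leq 1$, yields $|b^{(2)}| \leq 2 k^{1/6} R^{1/3} \|f_\theta\|^{2/3}$.

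Next I define $\theta'$ by replacing $b^{(2)}$ with $0$ and leaving all other coordinates of $\theta$ unchanged. Then $\|\theta - \theta'\|^2 = |b^{(2)}|^2 = O\bigl(k^{1/3} R^{2/3} \|f_\theta\|^{4/3}\bigr)$ and $\|f_{\theta'}\|_2 \leq \|f_\theta\|_2 + |b^{(2)}| = O\bigl(k^{1/6} R^{1/3} \|f_\theta\|^{2/3}\bigr)$ for $\|f_\theta\| \leq 1$. By choosing the smallness threshold on $\|f_\theta\|$ so that $\|f_{\theta'}\|^2 < \tfrac{1}{12(k+1)^5}$, I apply Lemma~\ref{lem:hardone} to $\theta'$ to obtain $\theta^*$ with $f_{\theta^*} \equiv_{[0,1]} 0$ and
\[
\|\theta' - \theta^*\|^2 \;\leq\; O\bigl(k^{13/5} \|f_{\theta'}\|^{4/5}\bigr) \;\leq\; O\bigl(k^{41/15} R^{4/15} \|f_\theta\|^{8/15}\bigr).
\]
The triangle inequality then gives $\|\theta - \theta^*\|^2 \leq 2\bigl(\|\theta - \theta'\|^2 + \|\theta' - \theta^*\|^2\bigr)$; for $\|f_\theta\| \leq 1$ and $k, R \geq 1$ both $\|f_\theta\|^{4/3}$ and $\|f_\theta\|^{8/15}$ are dominated by $\|f_\theta\|^{2/5}$, and both $k^{1/3} R^{2/3}$ and $k^{41/15} R^{4/15}$ are dominated by $k^5 R^{4/5}$, yielding the claimed bound.

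The main obstacle is the $|b^{(2)}|$ estimate: one has to carefully use that the ReLU contribution $g$ vanishes at $0$ and has Lipschitz constant controlled by $\theta \in B_R$, so that a small $L^2$ norm of $f_\theta$ on $[0,1]$ forces $|b^{(2)}|$ to be small at a cubic rate. After that the argument is a routine combination with Lemma~\ref{lem:hardone}.
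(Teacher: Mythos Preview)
Your proof is correct and in fact simpler than the paper's. Both arguments reduce to Lemma~\ref{lem:hardone} after handling the global bias $b^{(2)}$, but they do so differently.

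The paper splits into two cases according to whether $|b^{(2)}|\le \epsilon^{1/2}$ (where $\epsilon=\|f_\theta\|$). In the small-bias case it does exactly what you do: zero out $b^{(2)}$ and invoke Lemma~\ref{lem:hardone}. In the large-bias case it instead observes that the ReLU part must have a steep negative slope near $0$ to pull $f_\theta$ down from $b^{(2)}$ quickly; it then shifts the bias terms $b_i$ (rather than $b^{(2)}$) so that $f_{\theta'}(0)=0$, absorbs the negative-bias nodes into a single node at $0$, and applies Lemma~\ref{lem:hardone}. The final bound $O(k^5 R^{4/5}\|f_\theta\|^{2/5})$ comes from combining the two cases.

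You avoid the case split entirely by the Lipschitz observation: since the ReLU part vanishes at $0$ and has slope bounded by $\sqrt{k}\,R$, a small $L^2$ norm forces $|b^{(2)}|=O(k^{1/6}R^{1/3}\|f_\theta\|^{2/3})$ uniformly. This is cleaner and actually yields a sharper intermediate bound $O(k^{41/15}R^{4/15}\|f_\theta\|^{8/15})$, which dominates the stated one for $k,R\ge 1$. The paper's Case~2 machinery buys nothing here; your direct estimate on $|b^{(2)}|$ is the more economical route. One minor point: your final domination step assumes $R\ge 1$, which the lemma as stated does not; this is harmless for the downstream use (where $R\to\infty$), and in any case your raw bound is what matters.
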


\begin{proof}
Let $\e^2 = \|f_\theta\|^2$. For $x \in \R$, define the partial sums $W(x)=\sum_{j: b_j \leq x} w_j$. 
Consider the following cases:
\paragraph{Case $|b^{(2)}| \leq \e^{1/2}$. } We perform $\theta' \leftarrow \theta, b^{(2)'} \leftarrow 0$. By triangle inequality we can bound
$\|f_{\theta'}||^2 \leq \left( \e + |b^{(2)}| \right)^2 \leq 4 \e.$ We apply Lemma~\ref{lem:hardone} to $\theta'$ to obtain $\theta^*$ such that $f_{\theta^*} \equiv_{[0,1]} 0 $ and $\|\theta' - \theta^*\|^2 \leq O\left(k^{13/5} \|f_{\theta'}||^{4/5}\right) \leq O\left(k^{13/5} \e^{2/5}\right)$. We conclude by noticing
\begin{align*}
\|\theta - \theta^*\|^2 
&\leq \left(\|\theta - \theta'\| + \|\theta' - \theta^*\| \right)^2 && \text{Triangle inequality} \\
&\leq \left(\e^{1/2} + O\left(k^{13/10} \e^{1/5}\right) \right)^2 \\
&\leq O\left(k^{13/5} \|f_\theta\|^{2/5}\right) && \text{As } \e^2 \leq 1.
\end{align*}

\paragraph{Case $|b^{(2)}| > \e^{1/2}$. } Without loss of generality assume that $b^{(2)}>0$. There exists $x_0 \in (0,\e/4),$ such that $f_\theta(x_0) = \frac{b^{(2)}}{2}$, as otherwise 
$$\e^2 \geq \int_0^{\e/4} f_\theta(x)^2 \ dx \geq \int_0^{\e/4} (b^{(2)})^2 / 4 \ dx > \e^2.$$ By the mean value theorem there exists $x_1 \in (0,x_0) \setminus \{b_1, \dots, b_k\}$ such that 
\begin{equation}\label{eq:x1bounds}
f_\theta(x_1) \in [b^{(2)}/2, b^{(2)}] \text{ and } W(x_1) \leq \frac{f_\theta(x_0) - f_\theta(0)}{x_0 - 0} \leq -\frac{4b^{(2)}}{2\e} \leq -2\e^{-1/2}.
\end{equation}
We perform the following transformation
\begin{align*}
&\theta' \leftarrow \theta, \\
&\text{for every } i \text{ such that } b_i < x_1 \text{ do } b'_i \leftarrow b_i - x_1 + \frac{f_\theta(x_1)}{W(x_1)}, \\
&i_0 \leftarrow \argmin_i b_i > x_1, \\
&b'_{i_0} \leftarrow 0.
\end{align*}
Observe that we shifted all $b_i$'s exactly so that $f_{\theta'}(0) = 0$. Note also that $b_{i_0} \leq 4\e$ as otherwise by Lemma~\ref{lem:l2lwrbnd}
$$
\e^2 \geq \int_{x_1}^{b_{i_0}} f_\theta(x)^2 \ dx \geq \frac1{12} W(x_1)^2 (b_{i_0} - x_1)^3 > \frac1{12} 4\e^{-1} (3\e)^3 \geq \e^2.
$$
By \eqref{eq:x1bounds} we can bound 
\begin{equation}\label{eq:shiftbiasesthetabound}
\|\theta - \theta'\|^2 
\leq k \left(-x_1 + \frac{f_\theta(x_1)}{W(x_1)}\right)^2 + 16\e^2 \leq O(k\e^2). 
\end{equation}
$f_\theta$ is $R$-Lipshitz wrt to $b_i$'s in $B_R$ thus the triangle inequality and \eqref{eq:shiftbiasesthetabound} gives 
\begin{equation}\label{eq:aftershiftingnorm}
\|f_{\theta'}\|^2
\leq  (\|f_\theta\| + O(R k^{3/2}\e))^2 
\leq O(R^2 k^5 \e^2). 
\end{equation}
We apply Lemma~\ref{lem:hardone} to $f_{\theta'}$, after we removed all $b'_i < 0$ and set $w'_{i_0} \leftarrow \sum_{j \leq i_0} w_j$. Lemma~\ref{lem:hardone} might require to change $w'_{i_0}$, which we can realize with the same cost by changing $\{w_j : j \leq i_0\}$. Thus Lemma~\ref{lem:hardone} and \eqref{eq:aftershiftingnorm} gives us that there exists $\theta^*$ such that $f_{\theta^*} \equiv_{[0,1]} 0$ and $\|\theta' - \theta^*\|^2 \leq O(k^{13/5} k^2 R^{4/5} \e^{4/5})$. We conclude by using the triangle inequality and \eqref{eq:shiftbiasesthetabound} to get
$
\|\theta - \theta^*\|^2 \leq O\left(k^{23/5} R^{4/5} \|f_\theta\|^{4/5}\right).
$
\end{proof}

\begin{lemma}\label{lem:mostimportant}
Let $R \in \R_+, \theta \in B_R \cap \supp(P)$ be such that $f_\theta(x)= b^{(2)} + \sum_{i=1}^{k} w_i [x-b_i]_+$ and $g(x) = b + \sum_{i=1}^c v_i [x - t_i]_+$, where $c \leq k$, $0 \leq b_1 \leq \cdots \leq b_k < +\infty, 0 < t_1 < \dots < t_c < 1$ and $v_1,\dots,v_c \neq 0$. If $\|g - f_\theta\|^2$ is small enough,  where the bound depends only on $g,R$ and $k$, then there exists $\theta^*$ such that $f_{\theta^*} \equiv_{[0,1]} g$ and
\begin{align*}
\|\theta - \theta^*\|^2 \leq O \left( k^{7} R^{4/5} \|g - f_\theta\|^{2/5} \right) .
\end{align*}
\end{lemma}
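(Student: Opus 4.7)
The plan is to reduce Lemma~\ref{lem:mostimportant} to Lemma~\ref{lem:hardwithbias} by first extracting from $\theta$ a choice of $c$ indices that we force to realize the $c$ breakpoints of $g$, then zeroing out the residual via the previous lemma. Write $\e^2 := \|g - f_\theta\|^2$ and assume $\e$ is small enough relative to $g$, $R$ and $k$ so that the localization below is unambiguous.

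Step 1, localization. For each $j \in \{1,\dots,c\}$ I first argue that on a small window $I_j$ around $t_j$ the cumulative weight of the breakpoints of $f_\theta$ falling in $I_j$ is approximately $v_j$. Indeed, $g$ has a slope jump of $v_j$ at $t_j$; if the corresponding jump of $f_\theta$ over $I_j$ differs from $v_j$ by $\Delta$, then $f_\theta - g$ behaves like an affine function of slope $\Delta$ on an adjacent interval of length $\Omega(|I_j|)$, contributing $\Omega(\Delta^2 |I_j|^3)$ to $\|f_\theta - g\|^2$ via Lemma~\ref{lem:l2lwrbnd}. Taking $|I_j|$ as a small polynomial in $\e$ gives a nontrivial bound on $\Delta$ and lets me select a representative index $i_j \in \{1,\dots,k\}$ per breakpoint.

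Step 2, consolidation and residual. I modify $\theta$ into $\tilde\theta$ in three moves: (a) shift the selected breakpoints $b_{i_j} \to t_j$; (b) reassign the cumulative weight of each cluster of near-$t_j$ breakpoints onto index $i_j$ so that $w_{i_j} = v_j$; (c) set $b^{(2)} \to b$. The parameter-space cost of (a)--(c) is controlled by the Power Mean Inequality and the localization bounds, exactly in the pattern of the proof of Lemma~\ref{lem:hardone}. By construction $f_{\tilde\theta}(x) - g(x) = \sum_{i \notin \{i_1,\dots,i_c\}} w_i [x - b_i]_+$ on $[0,1]$, a function of exactly the form handled by Lemma~\ref{lem:hardone} with $k-c$ nodes and no constant, and its $L^2$-norm is bounded by $\e$ plus the cost incurred in (a)--(c). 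Applying Lemma~\ref{lem:hardone} (or Lemma~\ref{lem:hardwithbias} as a safe default) then yields $\theta^*$ with $f_{\theta^*} \equiv_{[0,1]} g$ at additional cost $O(k^{13/5} R^{4/5} \e^{2/5})$. Summing via the triangle inequality gives $\|\theta - \theta^*\|^2 \leq O(k^7 R^{4/5} \e^{2/5})$, as required; the extra $k^2$ factor above the base cost of Lemma~\ref{lem:hardwithbias} absorbs the consolidation overhead.

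The main obstacle is Step~1: the localization argument must be quantitative enough to both (i) uniquely associate each breakpoint of $g$ with a single cluster in $\theta$ and (ii) produce weight estimates sharp enough that the consolidation in Step~2 costs $O(\e^{2/5})$ and not worse. Two subtleties will need care. First, when two breakpoints of $g$ are close, the windows $I_j$ may overlap; the ``small enough $\|g-f_\theta\|^2$'' assumption, which depends implicitly on $\min_j (t_{j+1} - t_j)$ and on the $|v_j|$, is what makes the windows disjoint and the clusters well-defined. Second, boundary effects near $x = 0$ and $x = 1$ (and the adjustment of $b^{(2)}$ to $b$) need a mean-value step analogous to the second case in the proof of Lemma~\ref{lem:hardwithbias}, applied independently near any $t_j$ close to the endpoints.
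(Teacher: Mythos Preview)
Your plan and the paper's proof take genuinely different routes, and your Step~2(b) has a real gap.

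The paper avoids localization altogether. It writes $f_\theta-g$ as a single ReLU network $h_{\btheta}$ on $c+k\le 2k$ neurons (the $k$ neurons of $f_\theta$ together with $c$ neurons carrying weights $-v_i$ at biases $t_i$, and global bias $b^{(2)}-b$), so that $\|h_{\btheta}\|=\|f_\theta-g\|$. Lemma~\ref{lem:hardwithbias} applied to $h$ produces $\btheta^*$ with $h_{\btheta^*}\equiv_{[0,1]}0$ and $\|\btheta-\btheta^*\|^2\le O(k^5R^{4/5}\e^{2/5})$. For $\e$ small the $v_i$-coordinates of $\btheta^*$ are still nonzero and the $t_i$-coordinates still distinct, so $h_{\btheta^*}\equiv 0$ forces for each $i$ a set $\pi(i)\subseteq[k]$ of $b_j$'s sitting exactly at the (slightly moved) $t_i$. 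One then builds $\theta^*$ in two moves: first apply to $\theta$ the changes $\btheta\to\btheta^*$ made on the $(w,b,b^{(2)})$ block; second, undo the $(v,t)$-changes by pushing them onto the $b_j$'s in $\pi(i)$. The second move costs at most a factor $k^2$ over $\|\btheta-\btheta^*\|^2$ (each $t_i$-shift is replicated on up to $k$ biases), which yields the $k^7$. In short, the breakpoint matching you build by hand in Step~1 is produced for free as a by-product of the zero-function lemma acting on the combined network.

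Your Step~2(b) does not work as written. Take a cluster near $t_j$ consisting of $m$ neurons each with weight $v_j/m$. Setting $w_{i_j}=v_j$ already costs $((m-1)v_j/m)^2=\Omega(v_j^2)$ in parameter space, independent of $\e$. And even granting the residual identity $f_{\tilde\theta}-g=\sum_{i\notin\{i_j\}}w_i[x-b_i]_+$, the remaining $m-1$ cluster members contribute total slope $(m-1)v_j/m$ to the right of $t_j$, so $\|f_{\tilde\theta}-g\|$ is $\Omega(1)$ and Lemma~\ref{lem:hardone} gives nothing. The repair is to move \emph{all} cluster biases to $t_j$ (cost $\le k|I_j|^2$) and adjust a single weight by the localization defect; that variant can be made to work, but it is exactly what the paper's combined-network trick achieves without ever defining clusters.
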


\begin{proof}
Consider a model on $c+k \leq 2k$ neurons represented as
\begin{equation}
h_{\boldsymbol{\theta}} := \left(b^{(2)} - b\right) + \sum_{i=1}^{k} w_i [x-b_i]_+ - \sum_{i=1}^c v_i [x - t_i]_+, 
\end{equation}
where, to distinguish it from $\theta$, we denoted by $\btheta$ the set of parameters of $h$. Observe that $\|h\|^2 = \|g - f_\theta\|^2$. By Lemma~\ref{lem:hardwithbias} there exists $\btheta^*$ such that $h_{\btheta^*} \equiv_{[0,1]} 0$ and $\|\btheta - \btheta^*\|^2 \leq O \left( k^{5} R^{4/5} \|g - f_\theta\|^{2/5} \right)$. If $\e$ is small enough then the parameters in $\btheta^*$ corresponding to $v_i$'s are all still all non-zero and the bias terms corresponding to $t_i$'s are still all different. As $h_{\btheta^*} \equiv_{[0,1]} 0$ it implies that for every $i \in [c]$ there is a set of bias terms corresponding to $b_j$'s that are exactly at where $t_i$ was moved. Let $\pi : [c] \rightarrow 2^{[k]}$ be the mapping from $t_i$'s to subsets of $b_i$'s certifying that. 

We define $\theta^*$ such that $f_{\theta^*} \equiv_{[0,1]} g$ as the result of two steps. First, changing $\theta$ as its corresponding parameters were changed in the transition $\btheta \rightarrow \btheta^*$. Second, changing the parameters as $v_i$'s and $t_i$'s are changed in $\btheta^* \rightarrow \btheta$ under the map $\pi$. Observe that $\|\theta - \theta^*\|^2 \leq k^2 \|\btheta - \btheta^*\|^2$. It is because in the second step we move at most $k$ bias terms for every parameter corresponding to $t_i$.     
\end{proof}

Proof of Lemma~\ref{lem:complexitytodimension}

\begin{proof}
Let $R \in \R_+$. Notice that $f_\theta$ is $R^2$-Lipschitz with respect to each of its parameters, when restricted to a ball $B_{R}$. This implies that for all $\e > 0$ 
\begin{equation}\label{eq:inclusion1}
(A_g + B_\e) \cap B_R \subseteq \left\{\theta : \|g - f_\theta\|^2 \leq R^4 \e^2 \right\}.
\end{equation}
On the other hand by Lemma~\ref{lem:mostimportant} we have that for small enough $\e$
\begin{equation}\label{eq:inclusion2}
\left\{\theta : \|g - f_\theta\|^2 \leq \e^2 \right\} \cap B_R \cap \supp(P) \subseteq A_g + B_{O \left( k^{7/2} R^{2/5} \e^{1/5} \right) } \subseteq A_g + B_{C(k,R)\e^{1/5} },    
\end{equation}
for some function $C$.
To finish the proof we bound
\begin{align}
&\chi^\#(g, U([0,1])) \nonumber \\
&= \lim_{\epsilon \rightarrow 0}  \frac{\log[\prob_\theta\{\theta: \|g - f_\theta\|^2 \leq \e^2 \}]}{\log(\e)} \nonumber \\
&\stackrel{(1)}{=} \lim_{R \rightarrow \infty} \lim_{\epsilon \rightarrow 0}  \frac{\log[\prob_\theta\{\theta: \|g - f_\theta\|^2 \leq \e^2, \|\theta\|_2 \leq R \}]}{\log(\e)} \nonumber \\
&\stackrel{(2)}{\geq} \lim_{R \rightarrow 
\infty} \lim_{\e \rightarrow 0} \frac{\log \left(\vol( (A + B_{C(k,R) \e^{1/5}}) \cap B_R \cap \supp(P) ) \max_{\theta \in B_R} P(\theta) \right)}{\log(\e)}   \nonumber \\
&\stackrel{(3)}{=} \lim_{R \rightarrow 
\infty} \lim_{\e \rightarrow 0} \frac{\log \left(\vol( (A + B_{C(k,R) \e^{1/5}}) \cap B_R \cap \supp(P)) \right)}{\log(C(k,R) \e^{1/5})} \cdot \frac{\log(C(k,R) \e^{1/5})}{\log(\e)} \nonumber \\
&\stackrel{(4)}{=} \frac1{5}  \lim_{R \rightarrow 
\infty} \lim_{\e \rightarrow 0} \frac{\log \left(\vol( (A + B_{C(k,R) \e^{1/5}}) \cap B_R \cap \supp(P)) \right)}{\log(C(k,R) \e^{1/5})} \nonumber \\
&= \frac1{5} \codim_{P}(A_g), \nonumber
\end{align}
where in (1) we assumed that the two quantities are equal, in (2) we used \eqref{eq:inclusion2}, in (3) we used  $\lim_{\e \rightarrow 0} \frac{\max_{\theta \in B_R} P(\theta)}{\log(\e)} = 0$ and in (4) we used $\lim_{\e \rightarrow 0} \frac{\log(C(k,R)\e^{1/5})}{\log(\e)} = \frac15$. The second bound reads
\begin{align}
&\chi^\#(g, U([0,1])) \nonumber \\
&= \lim_{R \rightarrow \infty} \lim_{\epsilon \rightarrow 0}  \frac{\log[\prob_\theta\{\theta: \|g - f_\theta\|^2 \leq \e^2, \|\theta\|_2 \leq R \}]}{\log(\e)} \nonumber \\
&\stackrel{(1)}{\leq} \lim_{R \rightarrow 
\infty} \lim_{\e \rightarrow 0} \frac{\log \Big(\vol( (A + B_{R^2 \e}) \cap B_R \cap \supp(P)) \cdot \min_{\theta \in B_R \cap \supp(P)} P(\theta) \Big)}{\log(\e)}   \nonumber \\
&\stackrel{(2)}{=} \lim_{R \rightarrow 
\infty} \lim_{\e \rightarrow 0} \frac{\log(\vol( (A + B_{R^2 \e}) \cap B_R \cap \supp(P)))}{\log( R^2 \e)} \cdot \frac{\log(R^2 \e)}{\log(\e)} \nonumber \\
&= \codim_{P}(A_g), \nonumber
\end{align}
where in (1) we used \eqref{eq:inclusion1} and in (2) we used $\min_{\theta \in B_R \cap \supp(P)} P(\theta) > 0$, which is true because $B_R$ is compact.



\end{proof}

Proof of Lemma~\ref{lem:cchangesco-dimension}

\begin{proof}
Let $\ths$ and $\slopes$ denote the vectors of $t_i$'s, and $v_i$'s respectively. Note that if for $i \in [1,c]$ we define  $b_i^{(1)} := t_i$, $w_i^{(2)} := \frac{v_i}{w_i^{(1)}}$ and $b^{(2)} := b$ then for every $x \in [0,1]$
$$
g(x) = \sum_{i=1}^c w_i^{(2)} \cdot w_i^{(1)} \left[x - b_i^{(1)} \right]_+ + b^{(2)}.
$$
Moreover if the neurons $i \in [c+1,k]$ are inactive on $[0,1]$, that is if $b_i^{(1)} > 1$ for all $i > c$, then $g \equiv_{[0,1]} f_\theta$, i.e. functions $g$ an $f_\theta$ agree on $[0,1]$. If we denote by $\bias_{[p,q]}$ the restrictions of $\bias$ to coordinates $p,\dots,q$, then for $\e < \max(t_1, t_2 -t_1, \dots, t_c, 1 - t_c)$ we can write
\begin{align}
&(A_g + B_\e) \cap B_R \cap \supp(P) \nonumber \\
&\supseteq \left\{ \theta : \|\bias_{[1,c]} - \ths\|^2 \leq \frac{\e^2}{3}, \bias_{[c+1,k]} \in [1,M]^{k-c}, \|\w^{(2)} \w^{(1)} - \slopes\|^2 \leq \frac{\e^2}{3}, (b^{(2)} - b)^2 \leq \frac{\e^2}{3} \right\} \cap B_R. \label{eq:inclusionlwrbnd}
\end{align}
Now we will estimate $\vol(\{\w : \|\w^{(2)} \w^{(1)} - \slopes\|^2 \leq \e^2 \} \cap B_R)$.
If $k=1$ and $R^2 > 5|v_1|$:
\begin{align}
&\vol\left(\left\{w^{(1)},w^{(2)} \in \R : (w^{(2)}w^{(2)} - v)^2 \leq \e^2 \right\} \cap B_R \right) \nonumber \\ 
&\geq 2\int_{|v|^{1/2}}^{2|v|^{1/2}} \frac{2\e}{w^{(1)}} \ dw^{(1)} \nonumber \\
&= 4\e(\log(2|v|^{1/2}) - \log(|v|^{1/2})) = 4\log(2) \e. \label{eq:onedimensioncase}
\end{align}
Bound from \eqref{eq:onedimensioncase} generalizes to higher dimensions. If $R^2 > 5\|\slopes\|^2$ then
\begin{equation}\label{eq:boundonslopes}
\vol(\{\w : \|\w^{(2)} \w^{(1)} - \slopes\|^2 \leq \e^2 \} \cap B_R) \geq \kappa \e^c,
\end{equation}
where $\kappa$ is independent of $\e$, $\kappa$ depends only on the volume of balls in $\R^c$ and the constants $4\log(2)$ from \eqref{eq:onedimensioncase}. Now we can lower-bound the co-dimension
\begin{align*}
\codim_P(A_g) &=
\lim_{R \rightarrow 
\infty} \lim_{\e \rightarrow 0} \frac{\log(\vol( (A_g + B_\e) \cap B_R \cap \supp(P)))}{\log(\e)} \\
&\leq \lim_{\e \rightarrow 0} \frac{\log \left(\kappa' (\frac{\e}{\sqrt{3}})^c \cdot (M-1)^{k-c} \cdot \kappa (\frac{\e}{\sqrt{3}})^c \cdot \frac{2\e}{\sqrt{3}} \right)}{\log(\e)} && \text{By \eqref{eq:boundonslopes} and \eqref{eq:inclusionlwrbnd}} \\
&= 2c+1,
\end{align*}
where similarly as before $\kappa'$ is a constant independent of $\e$.

Now we will show an inequality in the other direction. Assume towards contradiction that $\codim(A_g) < 2c+1$. This means that there exists $\theta \in \text{int}(\supp(P)), f_\theta = g$ and $u_1, \dots, u_{3k+1-2c} \in \R^{3k+1}$ linearly independent such that $\theta + \text{ConvHull}(u_1, \dots, u_{3k+1-2c}) \subseteq A_g$. Fix one such $\theta$.

Next observe that 
\begin{equation}\label{eq:globalbiasagrees}
b^{(2)} = b.
\end{equation}
Moreover
\begin{equation}\label{eq:allthresholds}
\{t_1, \dots, t_c\} \subseteq \{\bias_1^{(1)},\dots, \bias_k^{(1)} \},
\end{equation}
because if there was $t_i \not\in \{\bias_1^{(1)},\dots, \bias_k^{(1)} \}$ then $f''_\theta(t_i) = 0$ but $g''(t_i) = v_i \neq 0$. For every $i \in [1,k]$ define $S_i := \{j \in [1,k] : \bias_j^{(1)} = \bias_i^{(1)} \}$. Note that for every $i \in [1,k]$ such that $\bias_i^{(1)} = t_j$ for some $j \in [1,c]$ we have:
\begin{equation}\label{eq:slopesthesame}
\sum_{p \in S_i} \w_p^{(2)} \cdot \w_p^{(1)} = \begin{cases} 
            v_j \mbox{,} & \mbox{if } \bias_i^{(1)} = t_j \\ 
            0 \mbox{,} & \mbox{if } \bias_i^{(1)} \in [0,1] \setminus \{t_1, \dots, t_c\}
        \end{cases} 
\end{equation}
If not then let $i_0$ be such that $\bias_{i_0}^{(1)}$ is the minimal one such that \eqref{eq:slopesthesame} doesn't hold. Note that then $g \not\equiv_{\left[\bias_{i_0}^{(1)}, \bias_{i_0}^{(1)} + \delta \right]} f_\theta$, where $\delta > 0$ is small enough so that $\left\{\bias_1^{(1)},\dots, \bias_k^{(1)} \right\} \cap (\bias_{i_0}^{(1)} , \bias_{i_0}^{(1)} + \delta) = \emptyset$. 
Now observe that \eqref{eq:globalbiasagrees}, \eqref{eq:allthresholds} and \eqref{eq:slopesthesame} give us locally at least $2c+1$ linearly independent equations around $\theta$ which contradicts with $\theta + \text{ConvHull}(u_1, \dots, u_{3k+1-2c}) \subseteq A_g$. Thus $\codim(A_g) \geq 2c+1$. 
\end{proof}

Next we give a helpful fact.

\begin{fact}\label{fact:productdensity}
Let $X,Y$ be two independent random variables distributed according to $\mathcal{N}(0,\sigma_w^2)$. Then for every $a_0 \in \R$ we have that the density of $XY$ at $a_0$ is equal to
\begin{equation}\label{eq:smallfact}
f_{XY}(a_0) = \frac{1}{2\pi \sigma_w^2} \int_{-\infty}^{+\infty} e^{-\frac{1}{2\sigma_w^2}\left(w^2 + \frac{a_0^2}{w^2}\right)} dw = \frac{1}{\sqrt{2\pi \sigma_w^2}}e^{-\frac{|a_0|}{\sigma_w^2}}.
\end{equation}
\end{fact}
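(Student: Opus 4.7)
The plan is to establish both equalities in turn. For the integral representation on the right-hand side, I would apply the standard density transformation formula for products of independent random variables. Consider the map $(X,Y)\mapsto (X, XY)$, which is a diffeomorphism on $\{X\neq 0\}$ with Jacobian $|X|$; integrating out $X$ then yields $f_{XY}(a_0)=\int f_X(w)f_Y(a_0/w)/|w|\,dw$. Substituting the two Gaussian densities and collecting the exponents into $-\frac{1}{2\sigma_w^2}(w^2+a_0^2/w^2)$ produces the stated integrand, while the two prefactors of $\frac{1}{\sqrt{2\pi\sigma_w^2}}$ combine into $\frac{1}{2\pi\sigma_w^2}$.

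For the second equality, denote $I:=\int_{-\infty}^\infty e^{-\frac{1}{2\sigma_w^2}(w^2+a_0^2/w^2)}\,dw$, which by evenness of the integrand equals $2\int_0^\infty e^{-aw^2-b/w^2}\,dw$ with $a=1/(2\sigma_w^2)$ and $b=a_0^2/(2\sigma_w^2)$. The algebraic identity
$$aw^2+\frac{b}{w^2}=\bigl(w\sqrt{a}-\sqrt{b}/w\bigr)^2+2\sqrt{ab}$$
suggests the substitution $u=w\sqrt{a}-\sqrt{b}/w$, which is monotone on $(0,\infty)$ and has $u$ range over all of $\R$, with $du=(\sqrt{a}+\sqrt{b}/w^2)\,dw$. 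The differential does not equal $\sqrt{a}\,dw$ outright, so the standard workaround is to exploit the involution $w\mapsto \sqrt{b/a}/w$ on $(0,\infty)$, which leaves $e^{-aw^2-b/w^2}$ invariant but converts $dw$ into $(\sqrt{b/a}/w^2)\,dw$. Averaging the two representations of the same integral produces a clean $(\sqrt{a}+\sqrt{b}/w^2)/(2\sqrt{a})\,dw=du/(2\sqrt{a})$, giving $\int_0^\infty e^{-aw^2-b/w^2}\,dw=\frac{\sqrt{\pi}}{2\sqrt{a}}e^{-2\sqrt{ab}}$. Plugging back $\sqrt{a}=1/(\sqrt{2}\sigma_w)$ and $2\sqrt{ab}=|a_0|/\sigma_w^2$ yields $I=\sqrt{2\pi}\sigma_w\,e^{-|a_0|/\sigma_w^2}$, and multiplying by the $\frac{1}{2\pi\sigma_w^2}$ prefactor gives $\frac{1}{\sqrt{2\pi\sigma_w^2}}e^{-|a_0|/\sigma_w^2}$ as claimed.

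The main obstacle, although routine, is the bookkeeping in the involution-plus-substitution argument (sometimes called Glasser's trick), which requires carefully combining two expressions for $I$ to make the relevant Jacobian factor appear. Everything else reduces to the familiar $\int_{-\infty}^\infty e^{-u^2}\,du=\sqrt{\pi}$.
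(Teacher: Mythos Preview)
The paper does not supply a proof of this fact; it is simply asserted. So there is nothing to compare your approach against. However, your argument contains a genuine error that you should not overlook.

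In the first paragraph you correctly write down the product-density formula
\[
f_{XY}(a_0)=\int_{-\infty}^{\infty} f_X(w)\,f_Y(a_0/w)\,\frac{1}{|w|}\,dw,
\]
with the Jacobian factor $1/|w|$. But in the very next sentence you claim that substituting the two Gaussian densities ``produces the stated integrand,'' namely $\frac{1}{2\pi\sigma_w^2}\,e^{-\frac{1}{2\sigma_w^2}(w^2+a_0^2/w^2)}$. That is not what the substitution gives: it gives this expression \emph{times} $1/|w|$. The factor $1/|w|$ has silently disappeared between your first and second sentences. In other words, the first equality in \eqref{eq:smallfact} is not in fact the density of $XY$; the correct expression is
\[
f_{XY}(a_0)=\frac{1}{2\pi\sigma_w^2}\int_{-\infty}^{\infty}\frac{1}{|w|}\,e^{-\frac{1}{2\sigma_w^2}\bigl(w^2+a_0^2/w^2\bigr)}\,dw
=\frac{1}{\pi\sigma_w^2}\,K_0\!\Bigl(\frac{|a_0|}{\sigma_w^2}\Bigr),
\]
where $K_0$ is the modified Bessel function of the second kind. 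This is a well-known distribution (the ``normal product'' distribution) and it is \emph{not} a Laplace density.

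Your second paragraph is fine as a computation of the integral $\int_{-\infty}^{\infty} e^{-aw^2-b/w^2}\,dw$ via Glasser's master theorem; the identity $\int_0^\infty e^{-aw^2-b/w^2}\,dw=\tfrac{\sqrt{\pi}}{2\sqrt{a}}e^{-2\sqrt{ab}}$ is correct and your derivation of it is standard. So the second equality in \eqref{eq:smallfact}, read purely as an integral identity, is valid. The problem is only that the middle expression is not the density of $XY$, so the fact as stated (and your attempted proof of the first equality) does not go through.
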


Proof of Lemma~\ref{lem:complexityforonechange}

\begin{proof}
To prove the lemma we estimate the probability of $f_\theta$'s close to $g_1$. Without loss of generality assume that $a > 0$.

\paragraph{Upper bound.} We can represent $g_1$ with a single node $i$ by assigning $\sqrt{a}$ to the outgoing weight ($\w^{(2)}_i$), $\sqrt{a}$ to the incoming weight ($\w^{(1)}_i$) of this node, the bias term ($\bias^{(1)}_i$) to $t$ and $b^{(2)}$ to $b$. The bias terms of all other nodes lie in $(1,M]$, i.e. they are inactive in the interval $[0,1]$.

These are exact representations of the function but to compute a lower bound on the probability we should also consider functions that are close to $g_1$. We can change $\w_i^{(1)}, \w_i^{(2)}, \bias_i^{(1)}$ by a little bit and still have a function that satisfies $\|g_1 - f_{\theta}\|^2 \leq \e^2$. We claim that the target probability is lower bounded by
\begin{equation}\label{eq:probabilitylowerbound}
 \left( \frac{\e}{2} \frac{1}{\sqrt{2\pi\sigma_w^2}} e^{-\frac{10 a}{9 \sigma_w^2}} \right) \cdot \left(\frac{9\e} {20 M a} \right) \cdot \left( \frac{\e}{40} \frac{1}{\sqrt{2\pi\sigma_b^2}} e^{-\frac{(|b| + \frac{\e}{40})^2}{2\sigma_b^2}} \right) \cdot \left( \frac{M-1}{M} \right)^{k-1}.
\end{equation}
We arrive at this expression by noting the following facts. By \eqref{eq:smallfact} and the assumption that $a \geq 20\e$ the probability that $\w_i^{(2)} \w_i^{(1)} = a \pm \frac{\e}{2}$ is lower bounded by $\frac{\e}{2} \frac{1}{\sqrt{2\pi\sigma_w^2}} e^{-\frac{10 a}{9 \sigma_w^2}}$. The probability that $\bias_i^{(1)} = t \pm \frac{9\e}{20a}$ is equal $\frac{9\e} {20M a}$. The probability that $b^{(2)} = b \pm \frac{\e}{40}$ is lower bounded by $\frac{\e}{40} \frac{1}{\sqrt{2\pi\sigma_b^2}} e^{-\frac{(|b| + \frac{\e}{40})^2}{2\sigma_b^2}}$. The last term is the probability that all other nodes have bias terms in $[1,M]$. Their weights can realm over the whole space and these nodes don't affect the function on $[0,1]$. We claim that all functions of this form satisfy $\|g_1 - f_{\theta}\|^2 \leq \e^2$. We bound the pointwise difference of $g_1$ and $f_\theta$ in $[0,1]$, i.e. for every $x \in [0,1]$
\begin{align*}
&f_\theta(x) = b + \left(\w_i^{(2)}\w_i^{(1)} \pm \frac{\e}{2}\right)\left[x - \left(\bias_i^{(1)} \pm \frac{9\e}{20a}\right)\right]_+ \pm \frac{\e}{40} \\
&= b + \w_i^{(2)}\w_i^{(1)}\left[x - \left(\bias_i^{(1)} \pm \frac{9\e}{20a}\right)\right]_+ \pm \frac{\e}{2}\left[x - \left(\bias_i^{(1)} \pm \frac{9\e}{20a}\right)\right]_+ \pm \frac{\e}{40} \\
&= b + \w_i^{(2)}\w_i^{(1)}\left[x - \bias_i^{(1)}\right]_+ \pm \w_i^{(2)}\w_i^{(1)} \frac{9\e}{20a} \pm \frac{\e}{2}\left(1 + \frac{9\e}{20a}\right) \pm \frac{\e}{40} \\
&= b + \w_i^{(2)}\w_i^{(1)}\left[x - \bias_i^{(1)}\right]_+ \pm \frac{9\e}{20} \pm \frac{\e}{2}\left(\frac{21}{20} + \frac{9\e}{20a}\right) && \text{As } \w_i^{(2)}\w_i^{(1)} = a \\
&= b + \w_i^{(2)}\w_i^{(1)}\left[x - \bias_i^{(1)}\right]_+ \pm \e && \text{As } a \geq 20\e, \\
\end{align*}
which implies that for such representations $\|g_1 - f_{\theta}\|^2 \leq \e^2$. From \eqref{eq:probabilitylowerbound} we get an upper bound on the sharp complexity 
\begin{align}
&\chi^\#( g_1, \e^2) \nonumber \\
&\leq -\log\left[ \left( \frac{\e}{2} \frac{1}{\sqrt{2\pi\sigma_w^2}} e^{-\frac{10 a}{9 \sigma_w^2}} \right) \cdot \left(\frac{9\e} {20 M a} \right) \cdot \left( \frac{\e}{40} \frac{1}{\sqrt{2\pi\sigma_b^2}} e^{-\frac{(|b| + \frac{\e}{40})^2}{2\sigma_b^2}} \right) \cdot \left( \frac{M-1}{M} \right)^{k-1}\right] \nonumber \\
&\leq \frac{10}{9} \left(\frac{a}{\sigma_w^2} + \frac{|b|}{\sigma_b^2} \right) + \log\left(M a \right) - (k-1) \log(1 - 1/M) + \log(2\pi \sigma_w \sigma_b) + 7 - 3 \log \left(\e \right)\nonumber  \\
&\leq \frac{10}{9} \left(\frac{a}{\sigma_w^2} + \frac{|b|}{\sigma_b^2} \right) + \log\left(M a \right) - (k-1) \log(1 - 1/M) + 10 - 3 \log \left(\e \right). &&\text{As } \sigma_b^2 \leq \frac{1}{\sigma_w^2} \nonumber \\
&\leq 2\left(\frac{a}{\sigma_w^2} + \frac{|b|}{\sigma_b^2} \right) + 11 - 3\log(\e),\label{eq:uprbndonechange} 
\end{align}
where in the last inequality we used that $\log(x) < x/2, \ \log(1+x) < x$ for $ x> 0$ and the assumption $k \leq M \leq \frac{1}{\sigma_w^2} $.

\begin{note}
Observe that according to Corollary~\ref{lem:cchangeslimit} we have that $\chi^\#(g_1, \dist_x) \leq 3$. Recall that $\chi^\#(g_1, \dist_x) = \lim_{\e \rightarrow 0} -\chi^\#(g_1, \dist_x, \e^2)/\log(\e)$. This means that, at least approximately, if we took the bound from \eqref{eq:uprbndonechange}, divided it by $-\log(\e)$ we would get an upper bound on  $\chi^\#(g_1, \dist_x)$. This would yield for us $\chi^\#(g_1, \dist_x) \leq 3$, as all other terms go to $0$ when $\e \rightarrow 0$. 
\end{note}

\paragraph{Lower bound.} There are other $\theta$'s that represent the function approximately. For example we could represent $g_1$ with more than $1$ node, by \say{spreading} the change of slope $a$ over many nodes. Another possibility is that a number of nodes with the same bias terms $t \neq b \in [0,1]$ effectively cancel out. These $\theta$'s contribute to the probability and decrease the complexity. 



Let $\theta$ be such that $\|g_1 - f_\theta\|^2 \leq \e^2$ and let $S := \{i \in \{1, \dots, k\} : \bias_i^{(1)} \in [t - 9\e^{1/2}, t + 9\e^{1/2}] \}$. 
Assume towards contradiction that $\sum_{i \in S} |\w_i^{(1)} \w_i^{(2)}| < a -\e^{1/4}$. This implies that either
\begin{equation}\label{eq:case1}
\sum_{i : \bias_i^{(1)} \in [t - 9\e^{1/2},t]} |\w_i^{(1)}\w_i^{(2)}| < f'_\theta(t) - \e^{1/4}/2
\end{equation}
or
\begin{equation}\label{eq:case2}
\sum_{i : \bias_i^{(1)} \in [t, t + 9\e^{1/2}]} |\w_i^{(1)}\w_i^{(2)}| < a - f'_\theta(t) -  \e^{1/4}/2.
\end{equation}
Assume that \eqref{eq:case2} holds. A similar argument covers \eqref{eq:case1}. Now consider two cases.

\paragraph{Case 1.} For all $x \in [t, t+ 3\e^{1/2}]$ we have $f_\theta(x) > a(x-t) + \e^{3/4}$. Then $\|g_1 - f_{\theta}\|^2 \geq 3\e^{1/2} \cdot \e^{3/2} > \e^2$ $\lightning$. 
\paragraph{Case 2.} There exists $x_0 \in [t,t+3\e^{1/2}]$ such that \begin{equation}\label{eq:x0property}
f_\theta(x_0) < a(x_0-t) + \e^{3/4}.
\end{equation}
By \eqref{eq:case2} we know that for all $x \in [t,t+9\e^{1/2}]$ we have $f'_\theta(x) < a - \e^{1/4}/2$. This means that $f_\theta(x)$ is below a linear function of slope $a-\e^{1/4}/2$ passing through $(x_0,f_\theta(x_0))$. Now we lower bound the error using the fact that $f_\theta$ is below this line. 
\begin{align}
&\|g_1 - f_{\theta}\|^2 \nonumber \\
&\geq \int_{x_0}^{t+9\e^{1/2}} \left[a(x-t) - \left(f(x_0) + \left(a - \e^{1/4}/2\right)(x-x_0)\right) \right]^2 \mathbbm{1}_{\{a(x-t) > f(x_0) + \left(a - \e^{1/4}/2\right)(x-x_0)\}} dx \label{eq:errolwrbnd}
\end{align}
Note that the function $\delta(x) := a(x-t) - \left(f(x_0) + \left(a - \e^{1/4}/2\right)(x-x_0)\right)$ is increasing in $x$ and moreover 
\begin{align}
\delta(7\e^{1/2} + t) 
&=  a(x_0-t) - f(x_0) + \frac{\e^{1/4}}{2}(7\e^{1/2} + t - x_0) \nonumber \\
&\geq -\e^{3/4} + 2\e^{3/4} && \text{By \eqref{eq:x0property} and } x_0 < t + 3\e^{1/2} \nonumber \\
&\geq \e^{3/4}.\label{eq:fnctvalue}
\end{align}
Combining \eqref{eq:errolwrbnd} and \eqref{eq:fnctvalue} we get that
$$
\|g_1 - f_{\theta}\|^2 \geq 2\e^{1/2} \cdot \e^{6/4} > \e^2,
$$
which is a contradiction $\lightning$.

We arrived at a contradiction in both cases thus $\sum_{i \in S} |\w_i^{(1)} \w_i^{(2)}| \geq a -\e^{1/4}$. We claim that for every such $S$ the probability of $\sum_{i \in S} |\w_i^{(1)} \w_i^{(2)}| \geq a -\e^{1/4}$ is at most
\begin{equation}\label{eq:forafixedset}
\left( \frac{18\e^{1/2}}{M} \right)^{|S|} \int_{a - \e^{1/4}}^\infty x^{(|S| - 1)} \frac{2^{|S|}}{|S|!} \cdot \frac{2}{\sqrt{2\pi \sigma_w^2}}e^{-\frac{x}{\sigma_w^2}} \ dx.
\end{equation}

We arrive at this expression by noting that $x^{(|S| - 1)} \frac{2^{|S|}}{|S|!}$ is the area of an $\ell_1$ sphere of radius $x$ in $|S|$ dimensions; the density for $\w_i$'s satisfying $\sum_{i \in S} |\w_i^{(1)} \w_i^{(2)}| = x$ is, by Fact~\ref{fact:productdensity}, $\frac{2}{\sqrt{2\pi \sigma_w^2}}e^{-\frac{x}{\sigma_w^2}}$; the probability that a single bias term is equal to $t \pm 9\e^{1/2}$ is $ \frac{18\e^{1/2}}{M}$.

Now we upper bound the probability of all these functions by taking a union bound over sets $S$. We get 
\begin{align}
&\prob_\theta \left[\|g_1 - f_{\theta}\|^2 \leq \e^2 \right] \nonumber \\
&\leq \sum_{S \subseteq \{1,\dots,n\}} \int_{a - \e^{1/4}}^\infty x^{(|S| - 1)} \frac{2^{|S|}}{|S|!} \cdot \left( \frac{18\e^{1/2}}{k} \right)^{|S|} \cdot \sqrt{\frac{2}{\pi \sigma_w^2}}e^{-\frac{x}{\sigma_w^2}} \ dx && \text{By \eqref{eq:forafixedset} and $k \leq M$} \nonumber \\
&\leq \sqrt{\frac{2}{\pi \sigma_w^2}} \sum_{i=1}^k \int_{a - \e^{1/4}}^\infty \binom{k}{i} \frac{2^{i}}{i!} \left( \frac{1}{2k} \right)^{i} \cdot x^{i - 1} e^{-\frac{x}{\sigma_w^2}} \ dx  && \text{As } 18\e^{1/2} \leq \frac12 \nonumber \\
&\leq \sqrt{\frac{2}{\pi \sigma_w^2}} \sum_{i=1}^k \int_{a/2}^\infty \frac{x^{i-1}}{2^{i-1}} e^{- \frac{x}{\sigma_w^2}} \ dx && \text{As } \binom{k}{i} \leq k^i, i! \geq 2^{i-1}, a \geq 2\e^{1/4} \label{eq:probuprbnd}
\end{align}
For every $i \in [1,k]$ we can upper bound 
\begin{align}
&\int_{a/2}^\infty \frac{x^{i-1}}{2^{i-1}} e^{-\frac{x}{\sigma_w^2}} \ dx \nonumber \\
&\leq \int_{a/2}^2 e^{-\frac{x}{\sigma_w^2}} \ dx + \left[-\frac{x^i}{2^{i-1}} e^{-\frac{x}{\sigma_w^2}} \right]_2^\infty &&\text{As } \left(-x^i e^{-\frac{x}{\sigma_w^2}}\right)' \geq x^{i-1} e^{-\frac{x}{\sigma_w^2}} \text{ for } x \geq 2 \nonumber \\
&\leq \left[-\sigma_w^2 e^{-\frac{x}{\sigma_w^2}} \right]^2_{a/2}  + 2e^{-\frac{2}{\sigma_w^2}} \nonumber \\
&\leq \sigma_w^2 e^{-\frac{a}{2\sigma_w^2}} + 2e^{-\frac{2}{\sigma_w^2}} \nonumber \\
&\leq 3e^{-\frac{a}{2\sigma_w^2}} && \text{As } \sigma_w^2 \leq 1, a \leq 2 \label{eq:integralbnd}
\end{align}
Plugging \eqref{eq:integralbnd} back to \eqref{eq:probuprbnd} we get 
\begin{equation}\label{eq:probupr}
\prob_\theta \left[\|g_1 - f_{\theta}\|^2 \leq \e^2 \right] \leq \sqrt{\frac{18}{\pi}}\frac{k}{\sigma_w} e^{-\frac{a}{2\sigma_w^2}}.  
\end{equation}
With \eqref{eq:probupr} we can bound the sharp complexity
\begin{align*}
\chi^\#(\dist_x, g_1, \e^2) 
&\geq \frac{a}{2\sigma_w^2} - \log(k/\sigma_w) + \log(\sqrt{2\pi}) \\
&\geq \frac{a}{3\sigma_w^2} && \text{As } \Omega(\sigma_w^2\log(k/\sigma_w)) \leq |a|.
\end{align*}

\end{proof}

\end{document}


%

%

\onecolumn
\aistatstitle{Supplementary Material}

\section{$\e$-Complexity}

\paragraph{Variational Complexity}
Consider the case $d=1$, and let $f : [0, 1] \xrightarrow{} \R$ be continuous and piece-wise linear. I.e., there is a sequence of points $0=x_1 < x_2 < \cdots < x_{k+1}=1$ so that for $x \in [x_i, x_{i+1}]$, $1 \leq i < k+1$,
\begin{align} \label{equ:polygone}
f(x) = c_i + \alpha_i (x-x_i),
\end{align}
for some constants $c_i$ and $\alpha_i$, where $c_1=c$ and $c_{i+1} = c_i + \alpha_i (x_{i+1}-x_i)$. Then $f$ can be written as a sum of ReLU functions, 
\begin{align} \label{equ:firstrepresentation}
f(x) = c + \sum_{i=1}^{k} a_i [x-x_i]_+,
\end{align}
where $a_1=\alpha_1$ and $a_{i}=\alpha_{i}-\alpha_{i-1}$, $i=2, \cdots, k$. 
Let us now introduce a
\say{complexity} measure for a function $f_{\theta}$ implemented by the NN with $k$ hidden nodes via the parameters $\theta$. We start by introducing a complexity measure for a particular choice of the network parameters. The complexity of the function will then be the minimum complexity of the network that represents this function. We choose
\begin{align} \label{equ:complexitymeasure}
C_k(\theta) = \frac12 \| \theta_w\|^2 = \frac12\left( \|W^{(1)}\|_F^2 + \|\w^{(2)}\|_2^2 \right),
\end{align}
i.e., it is half the squared Euclidean norm of the {\em weight parameters}. 

If we use the representation (\ref{equ:firstrepresentation}) in its natural form, i.e.,  $w^{(2)}_i =a_i$ and $W^{(1)}_i = 1$, then we have $C_k(\theta) = \frac12 \sum_{i=1}^{k} (a_i^2+1)$. But we can do better. Write
\begin{align} \label{equ:secondrepresentation}
f(x) = c + \sum_{i=1}^{k} w^{(2)}_i [W^{(1)}_i(x-x_i)]_+,
\end{align}
where $w^{(2)}_i =a_i/\sqrt{|a_i|}$ and $W^{(1)}_i = |w^{(2)}_i |$. This gives us a complexity measure $C_k(\theta) = \sum_{i=1}^{k} |a_i| = \sum_{i=1}^{k} |\alpha_i-\alpha_{i-1}|$, where $\alpha_0=0$. Indeed, it is not very hard to see, and it is proved in \citet{srebronormbound}, that this is the best one can do even if we keep $f(x)$ fixed and are allowed to let the number $k$ of hidden nodes tend to infinity. In other words, for the function $f$ described in (\ref{equ:polygone}) we have
\begin{align*}
C(f) = \inf_{k \in \N, \theta: f_\theta = f} C_k(\theta) = \variation(f'),
\end{align*}
where $\variation(f')$ denotes the total variation of $f'$, the derivative of $f$. Why total variation?
Note that $\alpha_i$ denotes the derivative of the function so that $|\alpha_i-\alpha_{i-1}|$ is the change in the derivative at the point $x_i$. Therefore, $\sum_{i=1}^{k} |\alpha_i-\alpha_{i-1}|$ is the total variation associated to this derivative. 


If we consider a general function $f: [0, 1] \xrightarrow{} \R$ then for every $\epsilon>0$, $f$ can be uniformly approximated by a piecewise linear function, see \citet{shekhtman82}. As $\epsilon$ tends to $0$ for the \say{best} approximation the variation of the piece-wise linear function converges to the total variation of $f'$. This can equivalently be written as the integral of 
$|f''|$.
It is therefore not surprising that if we look at general functions $f: \R \xrightarrow{} \R$ and let the network width tend to infinity then the lowest cost representation has a complexity of
\begin{align} \label{equ:complexity}
C(f) = \max \left(\int |f''(x)| dx, |f'(-\infty) + f'(+\infty)| \right).
\end{align}
As we previously mentioned, this concept of the complexity of a function was introduced in \citet{srebronormbound} and this paper also contains a rigorous proof of (\ref{equ:complexity}). (Note: The second term in \eqref{equ:complexity} is needed
when we go from a function that is supported on a finite domain to $\R$. To see this consider the complexity of $f(x) = \alpha x$. It is equal to $2\alpha$ ($f(x) = \sqrt{\alpha} [\sqrt{\alpha} x]_+ - \sqrt{\alpha} [-\sqrt{\alpha} x]_+$) but $\int |f''(x)| dx = 0$.)





\paragraph{Sharp versus Variational Complexity.} Now we explain how the notion of sharp complexity is, in some regimes, equivalent to the variational complexity. This gives a concrete example of our promise that sharp complexity aligns well with natural complexity measures.

Recall that we consider a NN with $k$ nodes in the intermediate layer, a scalar input and a scalar output. We therefore have $ 2k$ weights, namely $k$ weights from the scalar {\em input} to the intermediate nodes (which we henceforth call {\em input} weights) and $k$ weights from the intermediate nodes to the {\em output} (which in the sequel we will call {\em output} weights). Let us start by looking at a simplified model where we only consider the $k$ output weights. We still assume that the incoming weights behave as they should (as we will see shortly, they will take on values equal to the outgoing weights) but we ignore them in the probabilistic expression.
We discuss the real model at the end. Not much will change.

Assume at first that the function we want to represent is of the form (\ref{equ:secondrepresentation}) and requires only a single change of the derivative. I.e., the piece-wise linear function consists of two pieces and we require only one term in the sum. Call this function $g_1$,\comm{GG}{Change $f_1$ to $g_1$} where the $1$ indicates that there is only a single change of the derivative. Let us assume that this change is of magnitude $a$. Further, assume that the prior $\alpha(\theta)$ is Gaussian  where the components are independent. The vector of means is $\mu_{\theta} = (\mu_{\theta_w}=(0,\cdots, 0), \mu_{\theta_b})$ and the vector of variances is $\sigma^2_{\theta} = (\sigma^2_{\theta_w}=(\sigma^2_w,\cdots, \sigma^2_w), \sigma^2_{\theta_b}=(0,\cdots, 0))$. Then $\alpha(\theta) \propto e^{-\frac{1}{2 \sigma_w^2} \| \theta_w\|^2}$.

We now ask what is the value of $\chi^\#(\dist_\xv, g_1, \e)$- as this is what appears in our bound from Section~\ref{sec:intro}. We claim that for small $\e$
\[
-\log \left( \prob[\expectation_{x \sim \dist_x} [(g_1(x) - f_{\theta}(x))^2] \leq \e] \right) \approx \frac{a}{2\sigma_w^2}.
\]
To show that we estimate the probability of $f_\theta$'s close to $g_1$.

If we represent this change of derivative by a single node then we know that we can assign $\sqrt{a}$ to the outgoing weight of this node and $\sqrt{a}$ to the incoming weight of this node. This corresponds to a network where all outgoing weights are zero except one that is $\sqrt{a}$ (and the same for the incoming weights). But we can also take the weight $\sqrt{a}$ and \say{spread it out} over the $k$ nodes in any fashion we want, assuming only that the sum of squares of the weights equals $a$. In fact, take a vector $\vv$ of length $k$ of Euclidean norm squared equal to $1$. Let the vector have components $\vv_i$ and assign the value $\sqrt{a} \vv_i$ to the outgoing weight of node  $i$ (and of course pick the bias terms accordingly and also the incoming weights). Then together all these $k$ nodes will represent the same change in the derivative and the complexity will also be the same. In principle all the components $\vv_i$ should be non-negative so that we are working in the positive orthant. In order to avoid this complication let us assume that we have the following further over-parametrization and that along each connection we not only have the weight but we have in addition the possibility to multiply by an element of $\{\pm1\}$ \comm{GG}{Does this plus minus change anything}. In this way we have no restriction on the sign of the weight. Of course, the two models have equal expressive power. For the rest of this section we will assume this model.

In words, if we limit ourselves to the outgoing weights then we have a $k$-dimensional sphere of radius $\sqrt{a}$, where every single point on this sphere represents $f_1$ exactly. Further, if we do not insist that $f_1$ is represented exactly but allow a small deviation then we can extend the sphere to a spherical shell. Every point in this shell gives an approximate representation of $f_1$, each having roughly the same $\|\theta_w\|^2$. Figure~\ref{fig:sphericalshell} depicts this situation for the case $k=3$. In order to show the spherical shell the sphere is cut open.
\begin{figure}[tb]
\begin{center} \includegraphics[width=8cm]{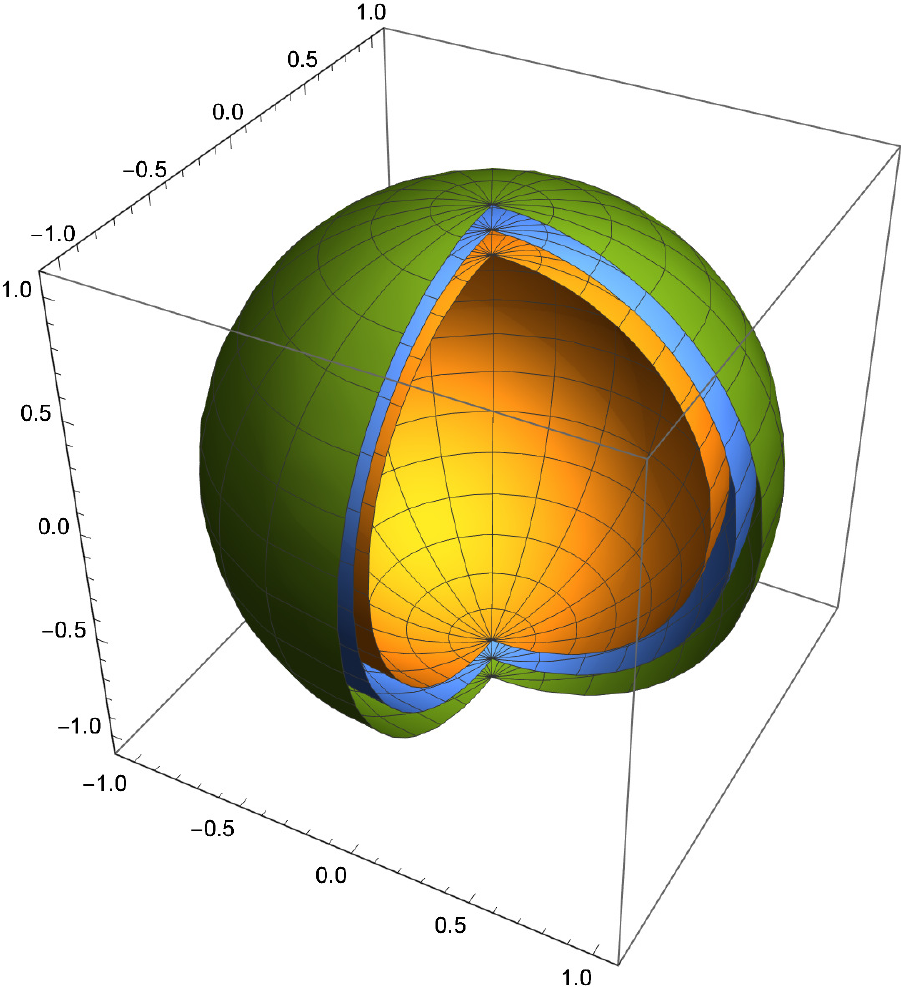}
\end{center}
\caption{Spherical shell of parameters that approximately represent $f_1$. The blue surface shows the sphere of parameters that give exact representations. The green and the orange surface indicate the outer and the inner boundary of the spherical shell giving approximate representations.}
\label{fig:sphericalshell} 
\end{figure}

What is the volume of all these vectors $\theta$? (Recall, to simplify our current presentation we limit $\theta$ to the outgoing weights.) As discussed, the exact representations lie on the surface of a $k$-dimensional sphere of radius $\sqrt{a}$. The area of this sphere is equal to $
    a^{\frac{k-1}{2}} \frac{2 \pi^{k/2}}{\Gamma(k/2)}$.
Thus the probability is \comm{GG}{Proportional versus equal} proportional to
\begin{align}
    a^{\frac{k-1}{2}} \frac{2 \pi^{k/2}}{\Gamma(k/2)} \cdot \epsilon'  \cdot e^{ - \frac{1}{ 2\sigma_w^2} a}. \label{eq:probabilityofg1}
\end{align}
\comm{GG}{Change all $A(f,S)$ to $N L_\samp(f)$}
We arrive at this expression by starting with the surface area, multiplying this by the \say{thickness} of the spherical shell, which we call $\epsilon'$ and finally multiplying with the \say{height} of the density inside the shell, which is proportional to $e^{-\frac{1}{2\sigma_w^2}a}$ as discussed. 

Let us write this in the form 

\begin{align*}
    2 \sqrt{2} (\sigma_w^2\pi)^{k/2} \underbrace{\left(\frac{\sqrt{2a}^{k-1} e^{-\frac{\sqrt{2 a}^2}{ 2\sigma_w^2}}}{\sigma_w^2
    (2 \sigma_w^2)^{k/2-1} \Gamma(k/2)} \right)}_{(*)} \cdot \epsilon.
\end{align*}
\comm{GG}{Theres a weird sigmaw inside the denominator}
Recall that the density of a chi-distribution with $k$ degrees of freedom, i.e.,
the distribution of the square root of the sum of the squares of $k$ independent Gaussians, has the form
\begin{align*}
\frac{x^{k-1} e^{-x^2/2}}{2^{k/2-1} \Gamma(k/2)}.
\end{align*}
Further recall that this density is unimodal for $k>1$, i.e., it has a unique maximum and it first increases up to this maximum and then decreases back to zero thereafter. The maximum is at $x=\sqrt{k-1}$.


We conclude that $(*)$ corresponds to a scaled chi-distribution with $k$ degrees of freedom where instead of zero-mean unit-variance Gaussians we pick zero-mean Gaussians with variance $\sigma_w^2$. Note that such a scaled chi-distribution takes on its maximum value at $\sqrt{k-1} \sigma_w$. The standard scaling of the variance of the weights is $\sigma_w^2 \propto 1/k$. For simplicity let's assume that $\sigma_w^2 = 1/(k-1)$ \comm{GG}{is it ok?}. Then the maximum value is attained at $1$.


Taking $-\log$ of \eqref{eq:probabilityofg1} we get that
\begin{align*}
\chi^\#(\dist_\xv, g_1, \e) 
&\leq \frac{a}{2\sigma_w^2} -\frac{k-1}{2}\log(a) - \log(\frac{2\pi^{k/2}}{\Gamma(k/2)}) - \log(\e') \\
&= \frac{C(g_1)}{2\sigma_w^2} - \poly(k,\log(a),\log(\e')) \\
&= \frac{(k-1)C(g_1)}{4} - \poly(k,\log(a),\log(\e'))
\end{align*}
Why do we have an inequality instead of equality in the first line? The reason is that
there can be other functions that still count towards the probability but are of different form. We will show however that these other functions contribute negligibly to $\chi^\#$.

So consider a second function, call it $f_d$ which in addition to the one change of its derivative also has some other \say{wiggles}, so that the total number of slope changes is $d$. Necessarily its complexity is higher, call it $a'$. Consider the probability of functions close to $f_d$. Recall that $a'>a$. Hence the squared radius of the sum of the squares of the outgoing weights \comm{GG}{Is radius $a$ or $2a$} representing $f_d$ is $a'>a$. Thus the density of $\alpha(\theta)$ is smaller for these functions than for $g_1$. But there is a second, even more important, factor that reduce the probability of $f_d$ compared to $g_1$. Not all the points on the sphere of square radius $a'$ correspond to exact representations of $f_d$. In fact, the points that correspond to exact representations of $f_d$ lie on a lower dimensional sub-manifold, reducing the probability further significantly. We claim that this probability has the form
\begin{align*}
   S(f, k-d) \cdot e^{- \frac{2 a}{ 2 \sigma_w^2}} \cdot \epsilon^{d} \cdot e^{-\frac{1}{ \sigma_y^2} A(f, \samp)},
\end{align*}
where $S(f, k-d)$ denotes the area of the subset of the sphere in $k$ dimensions that represents this function exactly, seen as a $k-d$ dimensional manifold. 

All this is easiest seen by looking at the case $k=3$. As we discussed, for $f_1$ the set of exact representations forms a sphere in $\R^3$ since all three nodes can be used to implement this change but there is one global constraint. This leads to the spherical shell of approximate representations as seen in Figure~\ref{fig:sphericalshell}. Next consider $f_2$, which contains one more change of the derivative. At least one of the three nodes will be needed to implement this second change of the slope, leaving only two nodes to implement the main change of the derivative.
So the exact representations will now lie on a slightly bigger sphere of squared radius $2 a'$ in $\R^3$ and they will correspond to circles of squared radius $2 a$ (each such of the six circles corresponds to picking two of the three nodes and implementing the change of the slope already present in $f_1$ via these two nodes, hence a circle since we can freely distribute the weight between these two nodes). If we \say{expand} one such circle by allowing representations \say{close by} we get a torus.
Hence the approximate implementations will now correspond to tori that are embedded on the surface of a sphere. This is shown in Figure~\ref{fig:tori}. As mentioned, the sphere itself is slightly larger, due to the larger complexity of the function. \comm{GG}{Here we kind of need that chi distribution business} But as we discussed, even if we took the whole spherical shell that corresponds to this slightly larger sphere it would contain less probability. In addition, we only take a small subset of the probability, due to the extra combinatorial constraint. In summary, if we assume that we sample from the posterior distribution, there is a strong regularization term that gives preference to \say{simple} functions, hence avoiding over-fitting.

\begin{figure}[tb]
\begin{center} \includegraphics[width=8cm]{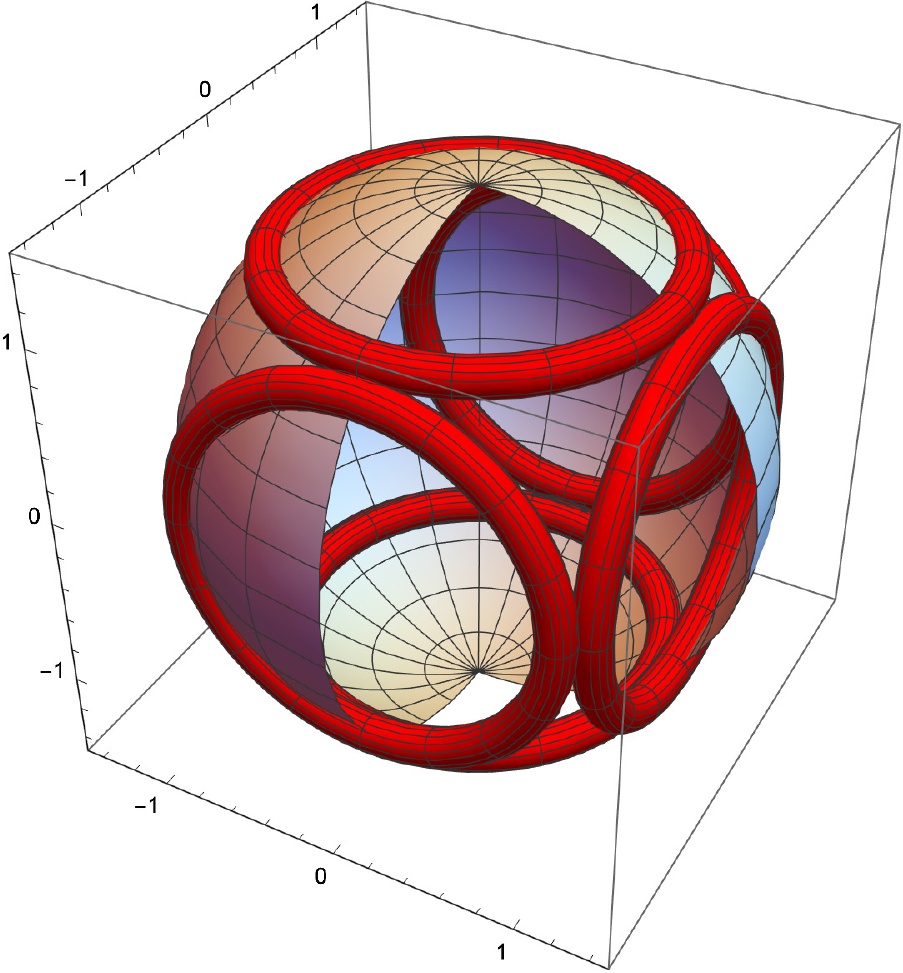}
\end{center}
\caption{The six tori indicate the region of the parameter space (outgoing weights) that approximately represent the function $f_2$, a function with two changes of the derivative. Note that the exact representations correspond to six circles, i.e., it has dimension $1$, whereas for a single change of the derivative it has dimension $2$. If we want to implement a function with three changes of the derivative, then the exact representations are points (dimension $0$).}
\label{fig:tori} 
\end{figure}

So far we considered a simplified model where we only looked at the outgoing weights. But not much changes if we look at the outgoing {\em and} incoming weights.  The manifold that contains the exact representations is now embedded in $2k$-dimensional instead of $k$-dimensional space since
for each node the incoming weight is identical to the outgoing weight. If we apply a suitable unitary transform we see that we can get to essentially the original case but where now all the coordinates that are involved in the manifold are stretched by a factor $\sqrt{2}$, and of course we are in an ambient space of dimension $2k$ instead of $k$. E.g., if we consider the function $f_1$ then 
the associated posterior probability of all it's approximate realization is proportional to  
\begin{align*}
   (2a)^{\frac{k-1}{2}} \frac{2 \pi^{k/2}}{\Gamma(k/2)} \cdot \epsilon^{k+1} \cdot e^{-\frac{1}{ \sigma_y^2} A(f, \samp) - \frac{1}{ \sigma_w^2} a}.
\end{align*}

Let us write this as
\begin{align*}
    2(2\sigma_w^2)^{k/2-1} \pi^{k/2}   \underbrace{\left( \frac{(\sqrt{2 a})^{k-1} e^{-\frac{(2a)}{ 2\sigma_w^2}}}{{
    (2 \sigma_w^2)^{k/2-1}} \Gamma(k/2)} \right)}_{(**)} \cdot 
    \epsilon^{k+1} 
    e^{-\frac{1}{ \sigma_y^2} A(f, \samp)}.
\end{align*}